\numberwithin{equation}{section}
\theoremstyle{plain}
\newtheorem{theorem}{Theorem}[section]
\newtheorem{corollary}[theorem]{Corollary}
\newtheorem{definition}[theorem]{Definition}
\newtheorem{lemma}[theorem]{Lemma}
\newtheorem{proposition}[theorem]{Proposition}
\newtheorem{remark}[theorem]{Remark}
\newtheorem{conjecture}[theorem]{Conjecture}
\newcommand{\Pind}{P^\perp_{\Sigma,\eta}}
\newcommand{\cA}{\mathcal{A}}
\newcommand{\cH}{\mathcal{H}}
\newcommand{\cN}{\mathcal{N}}
\DeclareMathOperator{\unif}{Unif}
\DeclareMathOperator{\fwer}{FWER}
\renewcommand{\dagger}{+}
\newcommand{\R}{\mathbb{R}}
\newcommand{\V}{{\cal V}}
\newcommand{\Ee}{\mathbb{E}}
\newcommand{\Pp}{\mathbb{P}}
\newcommand{\E}{M}
\newcommand{\hbeta}{\hat{\beta}}
\def\bar{\overline}
\newcommand{\BALD}{\begin{aligned}}
\newcommand{\EALD}{\end{aligned}}
\newcommand{\BALDS}{\begin{aligned*}}
\newcommand{\EALDS}{\end{aligned*}}
\newcommand{\BCAS}{\begin{cases}}
\newcommand{\ECAS}{\end{cases}}
\newcommand{\BEAS}{\begin{eqnarray*}}
\newcommand{\EEAS}{\end{eqnarray*}}
\newcommand{\BEQ}{\begin{equation}}
\newcommand{\EEQ}{\end{equation}}
\newcommand{\BIT}{\begin{itemize}}
\newcommand{\EIT}{\end{itemize}}
\newcommand{\BMAT}{\begin{bmatrix}}
\newcommand{\EMAT}{\end{bmatrix}}
\newcommand{\BNUM}{\begin{enumerate}}
\newcommand{\ENUM}{\end{enumerate}}
\newcommand{\eg}{{\it e.g.}}
\newcommand{\ie}{{\it i.e.}}
\newcommand{\BA}{\begin{array}}
\newcommand{\EA}{\end{array}}
\newcommand{\ones}{\mathbf 1}
\newcommand{\zeros}{\mathbf 0}
\newcommand{\reals}{\mathbf{R}}
\DeclareMathOperator*{\argmax}{\arg\max}
\DeclareMathOperator*{\argmin}{\arg\min}
\DeclareMathOperator{\Expect}{\mathbf{E}}
\DeclareMathOperator*{\minimize}{minimize}
\DeclareMathOperator{\sign}{sign}
\newcommand{\pc}{\hspace{1pc}}
\newcommand{\norm}[1]{\left\| #1 \right\|}
\DeclareMathOperator{\diag}{diag}
\newcommand{\indicator}[1]{\mathbb{1}{\left[ {#1} \right] }}
\newcommand{\No}{\mathcal{N}}
\newcommand{\bs}{\backslash}
    \title{Selective Inference and Learning Mixed Graphical Models}
    \author{Jason Dean Lee}
\begin{document}


    \beforepreface

    \prefacesection{Abstract}
This thesis studies two problems in modern statistics. First, we study selective inference, or inference for hypothesis that are chosen after looking at the data. The motiving application is inference for regression coefficients selected by the lasso. We present the Condition-on-Selection method that allows for valid selective inference, and study its application to the lasso, and several other selection algorithms. 

In the second part, we consider the problem of learning the structure of a pairwise graphical model over continuous and discrete variables. We present a new pairwise model for graphical models with both continuous and discrete variables that is amenable to structure learning. In previous work, authors have considered structure learning of Gaussian graphical models and structure learning of discrete models. Our approach is a natural generalization of these two lines of work to the mixed case. The penalization scheme involves a novel symmetric use of the group-lasso norm and follows naturally from a particular parametrization of the model. We provide conditions under which our estimator is model selection consistent in the high-dimensional regime.

    \prefacesection{Acknowledgements}
       \begin{itemize}
       	\item I would like to thank my advisors Trevor Hastie and Jonathan Taylor. Trevor has given me the perfect amount of guidance and encouragement through my PhD. I greatly benefited from his numerous statistical and algorithmic insights and intuitions and input on all my projects. Jonathan has been a great mentor throughout my PhD. He has been generous with sharing his ideas and time. With Yuekai Sun, we have spent countless afternoons trying to understand Jonathan's newest ideas. 
       	\item I would like to thank Lester Mackey for serving on my oral exam committee and reading committee. The Stats-ML reading group discussions also introduced me to several new areas of research. I would also like to thank Andrea Montanari and Percy Liang for their spectacular courses and serving on my oral exam committee.
       	\item Emmanuel Candes, John Duchi, and Rob Tibshirani have always been available to provide advice, guiadance, and fantastic courses.
       	\item Yuekai Sun and I have collaborated on many projects. I've benefited from his numerous insights, and countless discussions. I am lucky to have a collaborator like Yuekai. I would also like to thank my classmates in ICME for their friendship.
       	\item ICME has been a great place to spend the past 5 years. Margot, Indira, Emily, and Antoinette have kept ICME running smoothly, for which I am very grateful.
       	\item Microsoft Research and Technicolor for hosting me in the summers. In particular, I want to thank my mentors Ran Gilad-Bachrach, Emre Kiciman, Nadia Fawaz, and Brano Kveton for making my summers enjoyable. I would also like to thank the numerous friends I met at Microsoft Research and Technicolor.
       	\item I would like to thank my parents for their unconditional love and support. They are the best parents one could hope for. 
       \end{itemize}

\newpage
~\\~\\~\\~\\~\\~\\~\\~\\~\\~\\~\\~\\
\begin{center}	
	\it Dedicated to my parents Jerry Lee and Tien-Tien Chou.
\end{center}
  \afterpreface

\chapter{Introduction}
This thesis is split into two parts: selective inference and learning mixed graphical models. The contributions are summarized below:
\begin{itemize}
	\item Selective Inference:
	\begin{itemize}
		\item Chapter \ref{chap:sel-lasso}: This chapter studies selective inference for the lasso-selected model. We show how to construct confidence intervals for regression coefficients corresponding to variables selected by the lasso, and how to test the significance of a lasso-selected model by conditioning on the selection event of the lasso. The results of this chapter appear in \cite{lee2013exact} and is joint work with Dennis Sun, Yuekai Sun, and Jonathan Taylor.
		\item Chapter \ref{chap:selection-additional}: This chapter shows how the Condition-on-Selection method developed in Chapter \ref{chap:sel-lasso} is not specific to the lasso. In Chapter \ref{sec:formalism}, we show that controlling the conditional type 1 error implies control of the selective type 1 error, which motivates the use of the Condition-on-Selection method to control conditional type 1 error. Chapter \ref{sec:other-affine-selection} studies several other variable selection methods including marginal screening, orthogonal matching pursuit, and non-negative least squares with affine selection events, so we can apply the results of Chapter \ref{chap:sel-lasso}. Motivated by more complicated selection algorithms that do not simple selection events,such as the knockoff filter, SCAD/MCP regularizers, and $\ell_1$-logistic regression, we develop a general algorithm that only requires a blackbox evaluation of the selection algorithm in Chapter \ref{sec:general-method}. Finally in Chapter \ref{sec:full-model} we study inference for the full model regression coefficients. We show a method for FDR control, and the asymptotic coverage of selective confidence intervals in the high-dimensional regime. This chapter is joint work with Jonathan Taylor and will appear in a future publication.
	\end{itemize}
	
	\item Learning Mixed Graphical Models:
	\begin{itemize}
		\item We propose a new pairwise Markov random field that generalizes the Gaussian graphical model to include categorical variables.
		\item We design a new regularizer that promotes edge sparsity in the mixed graphical model.
		\item Three methods for parameter estimation are proposed: pseudoliklihood, node-wise regression, and maximum likelihood.
		\item The resulting optimization problem is solved using the proximal Newton method \cite{lee2012proximal}.
		\item We use the framework of \cite{lee2013model} to establish edge selection consistency results for the MLE and pseudolikelihood estimation methods.
		\item The results of this chapter originally appeared in \cite{Lee2014learning} and is joint work with Trevor Hastie.
	\end{itemize}
\end{itemize}
\part{Selective Inference}
   \chapter{Selective Inference for the Lasso}
\label{chap:sel-lasso}
\section{Introduction}
\label{sec:intro}

As a statistical technique, linear regression is both simple and powerful. Not only does it provide estimates of the ``effect'' of each variable, but it also quantifies the uncertainty in those estimates, paving the way for intervals and tests of the effect size. However, in many applications, a practitioner starts with a large pool of candidate variables, such as genes or demographic features, and does not know \emph{a priori} which are relevant. The problem is especially acute if there are more variables than observations, when it is impossible to even fit linear regression.

A practitioner might wish to use the data to select the relevant variables and then make inference on the selected variables. As an example, one might fit a linear model, observe which coefficients are significant at level $\alpha$, and report $(1-\alpha)$-confidence intervals for only the significant coefficients. However, these intervals fail to take into account the randomness in the selection procedure. In particular, the intervals do not have the stated coverage once one marginalizes over the selected model.

To see this formally, assume the usual linear model 
\begin{equation} 
y = \mu + \epsilon,\ \mu = X\beta^0,\ \epsilon \sim N(0, \sigma^2I),
\label{eq:model}
\end{equation}
where $X \in \R^{n \times p}$ is the design matrix and $\beta^0 \in \R^p$. Let $\hat\E \subset \{ 1, ..., p \}$ denote a (random) set of selected variables. Suppose the goal is inference about $\beta^0_j$. Then, we do not even form intervals for $\beta^0_j$ when $j \notin \hat\E$, so the first issue is to define an interval when $j\notin \hat\E$ in order to evaluate the coverage of this procedure. There is no obvious way to do this so that the marginal coverage is $1-\alpha$. Furthermore, as $\hat\E$ varies, the target of the ordinary least-squares (OLS) estimator $\hat\beta_{\hat\E}^{OLS}$ is not $\beta^0$, but rather 
\[ \beta^\star_{\hat\E} := X_{\hat\E}^\dagger\mu , \] 
where $X_{\hat\E}^\dagger$ denotes the Moore-Penrose pseudoinverse of $X_{\hat\E}$. We see that $X_{\hat\E}\beta^\star_{\hat\E} = P_{\hat\E}\mu$, the projection of $\mu$ onto the columns of $X_{\hat\E}$, so $\beta^\star_{\hat\E}$ represents the coefficients in the best linear model using only the variables in $\hat\E$. In general, $\beta^\star_{\hat\E, j} \neq \beta^0_j$ unless $\hat\E$ contains the support set of $\beta^0$, i.e., $\hat\E \supset S := \{ j: \beta_j^0 \neq 0 \}$. Since $\hat\beta_{\hat\E, j}^{OLS}$ may not be estimating $\beta^0_j$ at all, there is no reason to expect a confidence interval based on it to cover $\beta^0_j$. \citet{berk2013posi} provide an explicit example of the non-normality of $\hat\beta_{\hat\E, j}^{OLS}$ in the post-selection context. In short, inference in the linear model has traditionally been incompatible with model selection.

\subsection{The Lasso}

In this paper, we focus on a particular model selection procedure, the lasso \citep{tibshirani:lasso}, which achieves model selection by setting coefficients to zero exactly. This is accomplished by adding an $\ell_1$ penalty term to the usual least-squares objective:
\begin{equation}
\label{eq:lasso}
\hbeta \in \argmin_{\beta \in \R^p} \,
\frac{1}{2} \|y-X\beta\|^2_2+ \lambda \|\beta\|_1, 
\end{equation}
where $\lambda \geq 0$ is a penalty parameter that controls the tradeoff between fit to the data and sparsity of the coefficients. However, the distribution of the lasso estimator $\hbeta$ is known only in the less interesting $n \gg p$ case \citep{knight2000lasso}, and even then, only asymptotically. Inference based on the lasso estimator is still an open question.

We apply our framework for post-selection inference about $\eta_{\hat\E}^T \mu$ to form confidence intervals for $\beta^\star_{\hat\E, j}$ and to test whether the the fitted model captures all relevant signal variables.

\subsection{Related Work}

Most of the theoretical work on fitting high-dimen\-sional linear models focuses on \emph{consistency}. The flavor of these results is that under certain assumptions on $X$, the lasso fit $\hat{\beta}$ is close to the unknown $\beta^0$  \citep{negahban2012unified} and selects the correct model \citep{zhao2006model,wainwright2009sharp}. A comprehensive survey of the literature can be found in \citet{buhlmann2011statistics}.

There is also some recent work on obtaining confidence intervals and significance testing for penalized M-estimators such as the lasso. One class of methods uses sample splitting or subsampling to obtain confidence intervals and p-values. Recently, \cite{meinshausen2010stability} proposed \emph{stability selection} as a general technique designed to improve the performance of a variable selection algorithm. The basic idea is, instead of performing variable selection on the whole data set, to perform variable selection on random subsamples of the data of size $\frac n2$ and include the variables that are selected most often on the subsamples. 

A separate line of work establishes the asymptotic normality of a corrected estimator obtained by ``inverting'' the KKT conditions \citep{van2013asymptotically,zhang2011confidence,javanmard2013confidence}. The corrected estimator $\hat{b}$ usually has the form
$$
\hat{b} = \hat{\beta} + \lambda\Theta\hat{z},
$$
where $\hat{z}$ is a subgradient of the penalty at $\hat{\beta}$ and $\Theta$ is an approximate inverse to the Gram matrix $X^TX$. This approach is very general and easily handles M-estimators that minimize the sum of a smooth convex loss and a convex penalty. The two main drawbacks to this approach are:
\BNUM
\item the confidence intervals are valid only when the M-estimator is consistent
\item obtaining $\Theta$ is usually much more expensive than obtaining $\hat{\beta}$.
\ENUM

Most closely related to our work is the pathwise signficance testing framework laid out in \cite{lockhart2012significance}. They establish a test for whether a newly added coefficient is a relevant variable. This method only allows for testing at $\lambda$ that are LARS knot values. This is a considerable restriction, since the lasso is often not solved with the LARS algorithm. Furthermore, the test is asymptotic, makes strong assumptions on $X$, and the weak convergence assumes that all relevant variables are already included in the model. They do not discuss forming confidence intervals for the selected variables. Section \ref{sec:goodness} establishes a nonasymptotic test for the same null hypothesis, while only assuming $X$ is in general position. 

In contrast, we provide a test that is exact, allows for arbitrary $\lambda$, and arbitrary design matrix $X$. 
By extension, we do not make any assumptions on $n$ and $p$, and do not require the lasso to be a consistent estimator of $\beta^0$. Furthermore, the computational expense to conduct our test is negligible compared to the cost of obtaining the lasso solution.


Like all of the preceding works, our test assumes that the noise variance $\sigma^2$ is known or can be estimated. In the low-dimensional setting $p \ll n$, $\sigma^2$ can be estimated from the residual sum-of-squares of the saturated model. Strategies in high dimensions are discussed in \citet{fan2012variance} and \citet{reid2013variance}. In Section \ref{sec:extensions}, we also provide a strategy for estimating $\sigma^2$ based on the framework we develop.

\subsection{Outline of Chapter}

We begin by defining several important quantities related to the lasso in Section \ref{sec:prelim}; most notably, we define the selected model $\hat\E$ in terms of the active set of the lasso solution. Section \ref{sec:lasso-selection} provides an alternative characterization of the selection procedure for the lasso in terms of affine constraints on $y$, i.e., $Ay \leq b$. Therefore, the distribution of $y$ conditional on the selected model is the distribution of a Gaussian vector conditional on its being in a polytope. In Section \ref{sec:truncated-gaussian-test}, we generalize and show that for $y \sim N(\mu, \Sigma)$, the distribution of $\eta^T y\ |\ Ay \leq b$ is roughly a truncated Gaussian random variable, and derive a pivot for $\eta^T \mu$. In Section \ref{sec:lasso}, we specialize again to the lasso, deriving confidence intervals for $\beta^\star_{\hat\E, j}$ and hypothesis tests of the selected model as special cases of $\eta^T \mu$. Section \ref{sec:examples} presents an example of these methods applied to a dataset. 

In Section \ref{sec:minimal}, we consider a refinement that produces narrower confidence intervals. Finally, Section \ref{sec:extensions} collects a number extensions of the framework. In particular, we demonstrate:
\begin{itemize}
\item modifications needed for the elastic net \citep{zou2005regularization}.
\item different norms as test statistics for the ``goodness of fit'' test discussed in Section \ref{sec:lasso}.
\item estimation of $\sigma^2$ based on fitting the lasso with a sufficiently small $\lambda$.
\item composite null hypotheses.
\item fitting the lasso for a sequence of $\lambda$ values and its effect on our basic tests and intervals.
\end{itemize}

\section{Preliminaries}
\label{sec:prelim}

Necessary and sufficient conditions for $(\hat\beta, \hat z)$ to be solutions to the lasso problem \eqref{eq:lasso} are the Karush-Kuhn-Tucker (KKT) conditions:
\begin{gather}
X^T(X\hbeta - y) + \lambda\hat{z} = 0, 
\label{eq:lasso-kkt-1} \\
\hat{z}_i \in \BCAS \sign(\hat{\beta}_i) & \text{if }\hat{\beta}_i \ne 0 \\ [-1, 1] & \text{if }\hat{\beta}_i = 0 \ECAS.
\label{eq:lasso-kkt-2}
\end{gather}
where $\hat z := \partial ||\cdot||_1 (\hat\beta)$ denotes the subgradient of the $\ell_1$ norm at $\hat\beta$. We consider the \emph{active set} \citep{tibshirani2013lasso} 
\begin{equation}
\label{eq:equicor}
\hat\E = \left\{i\in\{1,\dots,p\} : |\hat{z}_i| = 1 \right\},
\end{equation}
so-named because by examining only the rows corresponding to $\hat\E$ in \eqref{eq:lasso-kkt-1}, we obtain the relation 
$$
X_{\hat\E}^T(y - X\hbeta) = -\lambda\hat{z}_{\hat\E},
$$
where $X_{\hat\E}$ is the submatrix of $X$ consisting of the columns in $\hat\E$. Hence 
$$
|X_{\hat\E}^T(y - X\hbeta)| = \lambda,
$$
i.e.\ the variables in this set have equal (absolute) correlation with the residual $y - X\hbeta$. Since $\hat z_i \in \{-1, 1\}$ for any $\hbeta_i \neq 0$, all variables with non-zero coefficients are contained in the active set. 


Recall that we are interested in inference for $\eta^T \mu$ in the model \eqref{eq:model} for some direction $\eta = \eta_{\hat\E} \in \R^n$, which is allowed to depend on the selected variables $\hat\E$. In most applications, we will assume $\mu = X\beta^0$, although our results hold even if the linear model is not correctly specified. 

A natural estimate for $\eta^T \mu$ is $\eta^T y$. As mentioned previously, we allow $\eta = \eta_{\hat\E}$ to depend on the random selection procedure, so our goal is post-selection inference based on 
$$\eta^T y\ |\ \{ \hat\E = \E \}.$$
For reasons that will become clear, a more tractable quantity is the distribution conditional on both the selected variables and their signs
$$ \eta^T y\ |\ \{ (\hat\E, \hat z_{\hat\E}) = (\E, z_\E) \}.$$
Note that confidence intervals and hypothesis tests that are valid conditional on the finer partition $\{ (\hat\E, \hat z_{\hat\E}) = (\E, z_\E) \}$ will also be valid for $\{ \hat\E = \E \}$, by summing over the possible signs $z_\E$:
$$ \Pp(\ \cdot\ \big|\ \hat\E = \E) = \sum_{z_\E} \Pp(\ \cdot\ \big|\ (\hat\E, \hat z_{\hat\E}) = (\E, z_\E))\ \Pp(\hat z_{\hat\E} = z_\E\ \big|\ \hat\E = \E).$$
From this, it is clear that controlling $\Pp(\ \cdot\ \big|\ (\hat\E, \hat z_{\hat\E}) = (\E, z_\E))$ to be, say, less than $\alpha$ (as in the case of hypothesis testing) will ensure $\Pp(\ \cdot\ \big|\ \hat\E = \E) \leq \alpha$. 

It may not be obvious yet why we condition on $\{ (\hat\E, \hat z_{\hat\E}) = (\E, z_\E)\}$ instead of $\{ \hat\E = \E \}$. In the next section, we show that the former can be restated in terms of affine constraints on $y$, i.e., $\{Ay \leq b\}$. We revisit the problem of conditioning only on $\{ \hat\E = \E \}$ in Section \ref{sec:minimal}.

\section{Characterizing Selection for the Lasso}
\label{sec:lasso-selection}

Recall from the previous section that our goal is inference conditional on $\{ (\hat\E, \hat z_{\hat\E}) = (\E, z_\E) \}$. In this section, we show that this \textit{selection event} can be rewritten in terms of affine constraints on $y$, i.e.,
$$ \{ (\hat\E, \hat z_{\hat\E}) = (\E, z_\E) \} = \{ A(\E, z_\E)y \leq b(\E, z_\E) \}$$
for a suitable matrix $A(\E, z_\E)$ and vector $b(\E, z_\E)$. Therefore, the conditional distribution $y\ |\ \{ (\hat\E, \hat z_{\hat\E}) = (\E, z_\E) \}$ is simply $y\ \big|\ \{A(\E, z_\E)y \leq b(\E, z_\E)\}$. This key theorem follows from two intermediate results.

\begin{lemma}
\label{lem:equiv_sets}
Without loss of generality, assume the columns of $X$ are in general position. Let $\E \subset \{1, \dots, p\}$ and $z_\E \in \{-1,1\}^{|\E|}$ be a candidate set of variables and signs, respectively. Define 
\begin{align}
U = U(\E, z_\E) &:= (X_\E^T X_\E)^{-1} (X_\E^T y - \lambda z_\E) \label{eq:beta-active} \\
W = W(\E, z_\E) &:=  X_{-\E}^T(X_\E^T)^\dagger z_\E + \frac{1}{\lambda} X_{-\E}^T (I - P_{\E}) y \label{eq:inactive_subgradient}.
\end{align}
Then the selection procedure can be rewritten in terms of $U$ and $W$ as:
\begin{align}
\{ (\hat\E, \hat z_{\hat\E}) = (\E, z_\E)\} = \{ \sign(U(\E, z_\E)) = z_\E, \norm{W(\E, z_\E)}_\infty < 1 \} 
\label{eq:equiv_sets}
\end{align}

\begin{proof}
First, we rewrite the KKT conditions \eqref{eq:lasso-kkt-1} and \eqref{eq:lasso-kkt-2} by partitioning them according to the active set $\hat\E$:
\begin{align*}
X_{\hat\E}^T (X_{\hat\E}\hat{\beta}_{\hat\E} - y) + \lambda\hat{z}_{\hat\E} = 0 \\
X_{-\hat\E}^T (X_{\hat\E}\hat{\beta}_{\hat\E} - y) + \lambda\hat{z}_{-\hat\E} = 0 \\
\sign(\hat\beta_{\hat\E}) = \hat{z}_{\hat\E},\, \hat{z}_{-\hat\E} \in (-1, 1) .
\end{align*}
Since the KKT conditions are necessary and sufficient for a solution, we obtain that $\{ (\hat\E, \hat z_{\hat\E}) = (\E, z_\E) \}$ if and only if there exist $U$ and $W$ satisfying:
\begin{gather}
X_\E^T (X_\E U - y) + \lambda z_\E = 0 \label{eq:lasso-E-fixed}\\
X_{-\E}^T (X_\E U - y) + \lambda W = 0 \label{eq:lasso-nE-fixed}\\
\sign(U) = z_\E,\, W \in (-1, 1) \label{eq:signs}.
\end{gather}
Solving \eqref{eq:lasso-E-fixed} and \eqref{eq:lasso-nE-fixed} for $U$ and $W$ yields the formulas \eqref{eq:beta-active} and \eqref{eq:inactive_subgradient}. Finally, the requirement that $U$ and $W$ satisfy \eqref{eq:signs} yields \eqref{eq:equiv_sets}.
\end{proof}

\end{lemma}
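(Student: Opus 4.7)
The plan is to use the KKT conditions \eqref{eq:lasso-kkt-1}--\eqref{eq:lasso-kkt-2} as the key tool: since the lasso objective is convex, these conditions are both necessary and sufficient for optimality, so $\{(\hat{E}, \hat{z}_{\hat{E}}) = (E, z_E)\}$ holds precisely when there exists a primal-dual pair with the prescribed active set and signs. Concretely, I would first partition \eqref{eq:lasso-kkt-1} into an $E$-block and a $-E$-block, and use the subgradient condition \eqref{eq:lasso-kkt-2} to justify $\hbeta_{-\hat{E}} = 0$: if $i \notin \hat{E}$ then $|\hat{z}_i| < 1$, which via \eqref{eq:lasso-kkt-2} is incompatible with $\hbeta_i \neq 0$.

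Once $\hbeta_{-E} = 0$ is established, the $E$-block reduces to a square linear system in $\hbeta_E$ whose unique solution---with $X_E^T X_E$ invertible by general position---is precisely $U$ as defined in \eqref{eq:beta-active}. Plugging this back into the $-E$-block and simplifying via $X_E (X_E^T X_E)^{-1} X_E^T = P_E$ produces $\hat{z}_{-E} = W$ from \eqref{eq:inactive_subgradient}. The remaining KKT requirement $\hat{z}_E = \sign(\hbeta_E) = z_E$ then translates directly to $\sign(U) = z_E$, and $\hat{z}_{-E} \in (-1, 1)^{p - |E|}$ translates to $\|W\|_\infty < 1$; this gives the forward inclusion.

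For the converse, given $(U, W)$ with $\sign(U) = z_E$ and $\|W\|_\infty < 1$, I would build a candidate pair by setting $\hbeta$ equal to $U$ on $E$ and zero on $-E$, and $\hat{z}$ equal to $z_E$ on $E$ and $W$ on $-E$. By construction this pair solves both partitioned stationarity equations \eqref{eq:lasso-E-fixed} and \eqref{eq:lasso-nE-fixed}, the sign condition $z_E = \sign(U) = \sign(\hbeta_E)$ holds, and $|\hat{z}_i| = |W_i| < 1$ strictly on $-E$; thus all KKT conditions are met and sufficiency certifies this as the lasso solution with $(\hat{E}, \hat{z}_{\hat{E}}) = (E, z_E)$.

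The main obstacle is the potentially degenerate case where $|\hat{z}_i| = 1$ but $\hbeta_i = 0$ for some $i \in \hat{E}$: this would break the identification $\sign(U) = z_E$ since then $U_i = \hbeta_i = 0$ has no sign. I would resolve this by leaning on the general position assumption on $X$, which ensures the lasso solution is unique and rules out such boundary configurations, so the active set coincides with the support and the strict inequality $\|W\|_\infty < 1$ is the exact translation of the inactive subgradient constraint.
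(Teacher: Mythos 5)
Your proposal is correct and follows essentially the same route as the paper: partition the KKT conditions by the active set, use their necessity and sufficiency to characterize the selection event, solve the two stationarity blocks for $U$ and $W$, and translate the sign and subgradient constraints into $\sign(U)=z_\E$ and $\norm{W}_\infty<1$. The only difference is that you explicitly handle the converse construction and the degenerate case $|\hat z_i|=1$ with $\hbeta_i=0$ (which the paper's terse proof glosses over by writing $\sign(\hbeta_{\hat\E})=\hat z_{\hat\E}$ directly); your appeal to general position is the right resolution, with the minor caveat that it excludes such configurations only for almost every $y$ rather than for all $y$.
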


Lemma \ref{lem:equiv_sets} is remarkable because it says that the selection event $\{ (\hat\E, \hat z_{\hat\E}) = (\E, z_\E) \}$ is equivalent to affine constraints on $y$. To see this, note that both $U$ and $W$ are affine functions of $y$, so $\{  \sign(U) = z_\E, \norm{W}_\infty < 1 \}$ can be written as affine constraints $\{ A(\E, z_\E)y \leq b(\E, z_\E) \}$. The following proposition provides explicit formulas for $A$ and $b$.

\begin{proposition}
\label{prop:A_b}
Let $U$ and $W$ be defined as in \eqref{eq:beta-active} and \eqref{eq:inactive_subgradient}. Then:
\begin{align}
\{ \sign(U) = z_\E , \norm{W}_\infty < 1 \}  = \left\{ \begin{pmatrix} A_0(\E, z_\E) \\ A_1(\E, z_\E) \end{pmatrix} y < \begin{pmatrix} b_0(\E, z_\E) \\ b_1(\E, z_\E) \end{pmatrix}\right\}
\label{eq:polytope}
\end{align}
where $A_0, b_0$ encode the ``inactive'' constraints $\{\norm{W}_\infty < 1\}$, and $A_1, b_1$ encode the ``active'' constraints $\{\sign(U) = z_\E\}$. These matrices have the explicit forms:
\begin{align*}
A_0(\E, z_\E) &= \frac{1}{\lambda}\begin{pmatrix}X_{-\E}^T (I - P_\E) \\- X_{-\E}^T (I - P_\E) \end{pmatrix} & b_0(\E, z_\E) &= \begin{pmatrix}\ones - X_{-\E}^T (X_\E^T)^\dagger z_\E \\ \ones + X_{-\E}^T (X_\E^T)^\dagger z_\E  \end{pmatrix} \\
A_1(\E, z_\E) &= -\diag(z_\E) (X_\E^T X_\E)^{-1} X_\E^T  & b_1(\E, z_\E) &= -\lambda \diag(z_\E)(X_\E^T X_\E)^{-1} z_\E
\end{align*}
\end{proposition}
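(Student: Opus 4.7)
The plan is to treat this as a direct algebraic unpacking of the two conditions $\{\sign(U) = z_\E\}$ and $\{\|W\|_\infty < 1\}$ given by Lemma \ref{lem:equiv_sets}, using the explicit formulas for $U$ and $W$ in \eqref{eq:beta-active} and \eqref{eq:inactive_subgradient}. Both $U$ and $W$ are affine functions of $y$, so each condition is already equivalent to a collection of linear inequalities on $y$; the only work is to separate the $y$-dependent piece from the constants and read off the matrices.

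For the inactive constraints, I would start by writing $\|W\|_\infty < 1$ componentwise as the pair of inequalities $W < \mathbf{1}$ and $-W < \mathbf{1}$. Substituting $W = X_{-\E}^T (X_\E^T)^\dagger z_\E + \tfrac{1}{\lambda} X_{-\E}^T (I - P_\E) y$ and moving the constant term $X_{-\E}^T(X_\E^T)^\dagger z_\E$ to the right-hand side yields
\[
\tfrac{1}{\lambda} X_{-\E}^T(I-P_\E) y < \mathbf{1} - X_{-\E}^T(X_\E^T)^\dagger z_\E,\qquad -\tfrac{1}{\lambda} X_{-\E}^T(I-P_\E) y < \mathbf{1} + X_{-\E}^T(X_\E^T)^\dagger z_\E,
\]
which stacks into exactly the $(A_0, b_0)$ claimed in the statement.

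For the active constraints, I would rewrite $\{\sign(U) = z_\E\}$ as the componentwise condition $z_{\E,i} U_i > 0$, or equivalently $-\diag(z_\E) U < 0$. Substituting $U = (X_\E^T X_\E)^{-1}(X_\E^T y - \lambda z_\E)$ and moving the $y$-free term to the right-hand side gives
\[
-\diag(z_\E)(X_\E^T X_\E)^{-1} X_\E^T y \;<\; -\lambda\, \diag(z_\E)(X_\E^T X_\E)^{-1} z_\E,
\]
which is exactly $A_1 y < b_1$ with $A_1, b_1$ as stated. Stacking the inactive and active blocks then produces the polytope \eqref{eq:polytope}.

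There is no serious obstacle; the proof is essentially a substitution-and-rearrangement exercise, and the only thing that needs care is keeping track of the sign convention (writing the inequalities as strict $<$ to match the open condition $\|W\|_\infty < 1$ and $\sign(U) = z_\E$, the latter being an open condition because it excludes zero coordinates). The general-position assumption inherited from Lemma \ref{lem:equiv_sets} ensures $X_\E^T X_\E$ is invertible so that $U$ and $W$ are well-defined, and no other ingredient is needed.
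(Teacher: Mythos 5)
Your proof is correct and follows exactly the route the paper takes: rewrite $\{\sign(U)=z_\E\}$ as $\{\diag(z_\E)U>0\}$, expand $\|W\|_\infty<1$ into the two one-sided inequalities, substitute the affine formulas for $U$ and $W$, and separate the $y$-dependent terms from the constants. The paper's own proof states only the first step and leaves the rest as "straightforward," so your write-up simply supplies the algebra the paper omits.
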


\begin{proof}
First, we write 
$$ \{ \sign(U) = z_\E \} = \{ \diag(z_\E) U > 0 \}. $$
From here, it is straightforward to derive the above expressions from the definitions of $U$ and $W$ given in \eqref{eq:beta-active} and \eqref{eq:inactive_subgradient}. 
\end{proof}

Combining Lemma \ref{lem:equiv_sets} with Proposition \ref{prop:A_b}, we obtain the following.

\begin{theorem}
The selection procedure can be rewritten in terms of affine constraints on $y$:
\[
\{ (\hat\E, \hat z_{\hat\E}) = (\E, z_\E)\} = \{ A(\E, z_\E)y \leq b(\E, z_\E) \}.
\]
\end{theorem}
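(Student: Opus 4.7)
The plan is essentially bookkeeping: the theorem is the direct composition of Lemma \ref{lem:equiv_sets} and Proposition \ref{prop:A_b}, both of which have already been established. So my proof will just chain them together and assemble the block matrices.

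First I would invoke Lemma \ref{lem:equiv_sets} to replace the selection event $\{ (\hat\E, \hat z_{\hat\E}) = (\E, z_\E)\}$ by the equivalent event $\{\sign(U(\E, z_\E)) = z_\E,\ \|W(\E, z_\E)\|_\infty < 1\}$. Then I would apply Proposition \ref{prop:A_b}, which gives the explicit affine reformulation of those two conditions as
\[
\begin{pmatrix} A_0(\E, z_\E) \\ A_1(\E, z_\E) \end{pmatrix} y \;<\; \begin{pmatrix} b_0(\E, z_\E) \\ b_1(\E, z_\E) \end{pmatrix}.
\]
Finally I would define the stacked matrix $A(\E, z_\E) := \bigl(\begin{smallmatrix} A_0 \\ A_1\end{smallmatrix}\bigr)$ and vector $b(\E, z_\E) := \bigl(\begin{smallmatrix} b_0 \\ b_1\end{smallmatrix}\bigr)$, which yields the claimed affine representation.

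The only genuine subtlety is the inequality direction. The lemma and proposition both produce strict inequalities ($\|W\|_\infty < 1$ and $\diag(z_\E) U > 0$), whereas the theorem is written with the non-strict $\leq$. Under the linear Gaussian model \eqref{eq:model} with $\Sigma = \sigma^2 I$, the boundary set $\{A y = b\}$ is a finite union of lower-dimensional affine subspaces of $\R^n$ and hence has Lebesgue (and Gaussian) measure zero, so the two events differ by a null set and coincide almost surely. I would state this as a one-line remark so that later conditional probability computations are unambiguous. Given that the previous lemma and proposition carry all the real content, there is no substantive obstacle here; the entire proof is two lines of set algebra plus this null-set remark.
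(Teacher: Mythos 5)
Your proof is correct and is exactly the paper's argument: the theorem is stated there as an immediate corollary of combining Lemma \ref{lem:equiv_sets} with Proposition \ref{prop:A_b}, with the stacked $A$ and $b$ defined just as you do. Your extra remark on the strict-versus-nonstrict inequality being a Gaussian-null-set discrepancy is a point the paper silently elides, and it is a reasonable (and harmless) addition.
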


To summarize, we have shown that in order to understand the distribution of $y \sim N(\mu, \Sigma)$ conditional on the selection procedure $\{ (\hat\E, \hat z_{\hat\E}) = (\E, z_\E) \}$, it suffices to study the distribution of $y$ conditional on being in the polytope $\{ Ay \leq b\}$. The next section derives a pivot for $\eta^T \mu$ for such distributions, which will be useful for constructing confidence intervals and hypothesis tests in Section \ref{sec:lasso}.

\section{A Pivot for Gaussian Vectors Subject to Affine Constraints}
\label{sec:truncated-gaussian-test}

The distribution of a Gaussian vector $y \sim N(\mu, \Sigma)$ conditional on affine constraints $\{Ay \leq b\}$, while explicit, still involves the intractable normalizing constant $\Pp(Ay \leq b)$. In this section, we show that one dimensional projections of $\mu$ (i.e., $\eta^T \mu$) are univariate truncated normal, which will allow us to form tests and intervals for $\eta^T \mu$. 

The key to deriving this pivot is the following lemma:
\begin{lemma}
\label{lem:conditional}
The conditioning set can be rewritten in terms of $\eta^T y$ as follows:
\[  \{Ay \leq b\} = \{\V^-(y) \leq \eta^T y \leq \V^+(y), \V^0(y) \geq 0 \} \]
where 
\begin{align}
a &= \frac{A\Sigma\eta}{\eta^T\Sigma\eta} \label{eq:alpha} \\
\V^- = \V^-(y) &= \max_{j:\ a_j < 0} \frac{b_j - (Ay)_j +a_j\eta^T y} {a_j} \label{eq:v_minus} \\
\V^+ = \V^+(y) &= \min_{j:\ a_j > 0} \frac{b_j - (Ay)_j + a_j\eta^T y}{a_j}. \label{eq:v_plus} \\
\V^0 = \V^0(y) &= \min_{j:\ a_j = 0} b_j - (Ay)_j \label{eq:v_zero}
\end{align}
Furthermore, $(\V^+,\V^-,\V^0)$ is independent of $\eta^T y$.
Then, $\eta^Ty$ conditioned on $Ay \le b$ and $(\V^+(y), \V^-(y)) = (v^+,v^-)$, has a truncated normal distribution, \ie{}
\[\textstyle
\eta^Ty\,\big| \left\{\,Ay \le b,\V^+(y)=v^+, \V^-(y)=v^-\right\} \sim TN(\eta^T\mu,\eta^T\Sigma\eta,v^-,v^+).
\]

\end{lemma}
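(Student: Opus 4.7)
The plan is to prove the lemma by decomposing $y$ into a component parallel to $\eta$ (with respect to the covariance $\Sigma$) and a component independent of $\eta^T y$, then rewriting the affine constraints in terms of these pieces.

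First I would define $c := \Sigma\eta / (\eta^T\Sigma\eta)$ and $r := y - c\,\eta^T y$, and verify that $r$ is jointly Gaussian with $\eta^T y$ and has zero covariance with $\eta^T y$:
\[
\cov(r,\eta^T y) = \Sigma\eta - c\,(\eta^T \Sigma \eta) = \Sigma\eta - \Sigma\eta = 0,
\]
so $r$ is independent of $\eta^T y$. Next I would substitute $y = c\,\eta^T y + r$ into $Ay$ to obtain $Ay = a\,\eta^T y + Ar$, where $a = Ac = A\Sigma\eta/(\eta^T\Sigma\eta)$ as defined in \eqref{eq:alpha}. The $j$-th constraint $(Ay)_j \le b_j$ is therefore $a_j\,\eta^T y \le b_j - (Ar)_j$, and using $(Ar)_j = (Ay)_j - a_j\,\eta^T y$ we get $b_j - (Ar)_j = b_j - (Ay)_j + a_j\,\eta^T y$, matching the summands in the definitions of $\V^\pm, \V^0$.

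Then I would split the rows of $A$ by the sign of $a_j$. Rows with $a_j > 0$ give upper bounds on $\eta^T y$; taking the minimum yields $\eta^T y \le \V^+(y)$. Rows with $a_j < 0$ give lower bounds; taking the maximum yields $\eta^T y \ge \V^-(y)$. Rows with $a_j = 0$ give constraints that do not involve $\eta^T y$, and the conjunction of these is $\V^0(y) \ge 0$. This establishes the equality of sets in the lemma. A useful observation, and the key to the independence claim, is that $\V^+(y)$, $\V^-(y)$, and $\V^0(y)$ depend on $y$ only through $Ar$, since every $(Ay)_j - a_j\,\eta^T y = (Ar)_j$ is a function of $r$. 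Since $r$ is independent of $\eta^T y$, so is the triple $(\V^+,\V^-,\V^0)$.

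The main obstacle, though mostly notational, is verifying the independence cleanly: one has to be sure that the dependence of $\V^\pm, \V^0$ on $y$ really factors through $r$ alone, not through $\eta^T y$. With that in hand, the final distributional statement is immediate: conditional on $\{Ay \le b, \V^+=v^+, \V^-=v^-\}$ (which also determines $\V^0 \ge 0$ since it is a function of $r$), the only remaining randomness is in $\eta^T y$, which by independence retains its marginal $N(\eta^T\mu, \eta^T\Sigma\eta)$ law, restricted now to the event $v^- \le \eta^T y \le v^+$. This yields the truncated normal $TN(\eta^T\mu, \eta^T\Sigma\eta, v^-, v^+)$ and completes the proof.
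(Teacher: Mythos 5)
Your proof is correct and follows essentially the same route as the paper: the paper's argument centers $Ay$ at $\Expect[Ay\mid\eta^Ty]=A\mu+a(\eta^Ty-\eta^T\mu)$ and notes that $b_j-(Ay)_j+a_j\eta^Ty$ is independent of $\eta^Ty$ by joint normality, which is exactly your observation that these quantities are functions of the residual $r=y-\tfrac{\Sigma\eta}{\eta^T\Sigma\eta}\eta^Ty$ alone (the paper's own geometric proof in the appendix uses your decomposition explicitly). The sign-splitting of the rows and the final conditional-CDF step likewise match the paper's treatment.
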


However, before stating the proof of this lemma, we show how it is used to obtain our main result. 

\begin{theorem}
\label{thm:truncated-gaussian-pivot}
Let $F_{\mu, \sigma^2}^{[a, b]}$ denote the CDF of a $N(\mu, \sigma^2)$ random variable truncated to the interval $[a, b]$, i.e.:
\begin{equation}
F_{\mu, \sigma^2}^{[a, b]}(x) = \frac{\Phi((x-\mu)/\sigma) - \Phi((a-\mu)/\sigma)}{\Phi((b-\mu)/\sigma) - \Phi((a-\mu)/\sigma)}
\label{eq:U}
\end{equation}
where $\Phi$ is the CDF of a $N(0, 1)$ random variable. Then $F_{\eta^T\mu,\ \eta^T \Sigma \eta}^{[\V^-, \V^+]}(\eta^T y)$ is a pivotal quantity, conditional on $\{Ay \leq b\}$:
\begin{equation}
F_{\eta^T\mu,\ \eta^T \Sigma \eta}^{[\V^-, \V^+]}(\eta^T y)\ \big|\ \{Ay \leq b\} \sim \unif(0,1) 
\label{eq:pivot}
\end{equation}
where $\V^-$ and $\V^+$ are defined in \eqref{eq:v_minus} and \eqref{eq:v_plus}.
\end{theorem}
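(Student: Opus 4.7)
The plan is to derive the result as an immediate application of the probability integral transform to the conditional distribution identified in Lemma \ref{lem:conditional}, followed by averaging out the auxiliary conditioning variables. Since Lemma \ref{lem:conditional} is stated just above and can be used as given, the main content of the theorem is really a packaging of that lemma together with a standard PIT argument.

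First I would observe that by Lemma \ref{lem:conditional}, the event $\{Ay \le b\}$ can be rewritten as $\{\V^-(y) \le \eta^T y \le \V^+(y),\ \V^0(y) \ge 0\}$, and conditional on the finer event $\{Ay \le b, \V^+(y)=v^+, \V^-(y)=v^-\}$, the scalar $\eta^T y$ is distributed as $TN(\eta^T\mu,\eta^T\Sigma\eta,v^-,v^+)$, whose CDF is precisely $F_{\eta^T\mu,\eta^T\Sigma\eta}^{[v^-,v^+]}$. Applying the probability integral transform to this univariate conditional distribution then gives
\[
F_{\eta^T\mu,\eta^T\Sigma\eta}^{[v^-,v^+]}(\eta^T y)\ \Big|\ \{Ay\le b,\V^+(y)=v^+,\V^-(y)=v^-\} \sim \unif(0,1).
\]

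Next I would remove the conditioning on $(\V^+,\V^-)$ by the law of total probability. For any $u\in[0,1]$,
\[
\Pp\!\left(F_{\eta^T\mu,\eta^T\Sigma\eta}^{[\V^-,\V^+]}(\eta^T y) \le u \,\Big|\, Ay\le b\right)
= \Ee\!\left[\Pp\!\left(F_{\eta^T\mu,\eta^T\Sigma\eta}^{[v^-,v^+]}(\eta^T y)\le u \,\Big|\, Ay\le b,\V^\pm\right)\right] = u,
\]
since the inner probability equals $u$ by the previous display. This yields \eqref{eq:pivot}. Note that the event $\{\V^0(y)\ge 0\}$ plays no role beyond already being implied by $\{Ay\le b\}$, so one need not condition on it separately; and the independence assertion of Lemma \ref{lem:conditional} is not strictly needed for the pivotal claim, though it clarifies why conditioning on $(\V^+,\V^-,\V^0)$ does not distort the marginal distribution of $\eta^T y$ within the truncation interval.

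I do not anticipate any real obstacle: all the technical work — establishing the equivalent polyhedral description in the coordinates $(\eta^T y, \V^\pm, \V^0)$ and identifying the conditional law as truncated Gaussian — has been done in Lemma \ref{lem:conditional}. The only minor care needed is to verify that $\V^-(y)<\V^+(y)$ almost surely so that the truncation interval is nondegenerate and $F_{\eta^T\mu,\eta^T\Sigma\eta}^{[\V^-,\V^+]}$ is a bona fide CDF; but this follows from the fact that $\{Ay\le b\}$ has positive Lebesgue measure (the polytope has nonempty interior in the relevant directions), so $\eta^T y$ lies strictly between $\V^-$ and $\V^+$ with probability one on the conditioning event.
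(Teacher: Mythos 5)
Your proposal is correct and follows essentially the same route as the paper's proof: invoke Lemma \ref{lem:conditional} to identify the conditional law of $\eta^T y$ given $\{Ay\le b,\V^\pm\}$ as truncated normal, apply the probability integral transform, and then integrate out $(\V^+,\V^-)$ via the law of total probability. The additional remarks on the nondegeneracy of $[\V^-,\V^+]$ and the dispensability of the independence statement at this stage are sound but not needed beyond what the paper already does.
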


\begin{proof}
By Lemma \ref{lem:conditional}, $\eta^Ty\,\big| \left\{\,Ay \le b,\V^+(y)=v^+, \V^-(y)=v^-\right\} \sim TN(\eta^T\mu,\eta^T\Sigma\eta,v^-,v^+)$. We apply the CDF transform to deduce
\[
F_{\eta^T\mu,\eta^T\Sigma\eta}^{[v^-,v^+]}(\eta^Ty)\,\big| \left\{\,Ay \le b,\V^+(y)=v^+, \V^-(y)=v^-\right\}
\]
is uniformly distributed. 
By integrating over $(\V^+(y)=v^+, \V^-(y)=v^-)$, we conclude  $F_{\eta^T\mu,\ \eta^T \Sigma \eta}^{[\V^-, \V^+]}(\eta^T y)\ \big|\ \{Ay \leq b\} \sim \unif(0,1) $. Let $G(v^+,v^-)= \Pp(\V^+ \le v^+,\V^- \le v^- \mid Ay\le b)$.
\begin{align*}
&\Pp\left( F_{\eta^T\mu,\ \eta^T \Sigma \eta}^{[\V^-, \V^+] }(\eta^T y)\le s\ \big|\ Ay \leq b\right)\\
\pc&=\int  \Pp \left( F_{\eta^T\mu,\ \eta^T \Sigma \eta}^{[v^-, v^+] }(\eta^T y )\le s\ \big|\ Ay \leq b,\V^+(y)=v^+, \V^-(y)=v^-\right)\\
 &\qquad \ dG(v^+,v^- ) \\
 \pc&= \int s\ dG(v^+,v^- )
\\
\pc&=s.
\end{align*}
\end{proof}

We now prove Lemma \ref{lem:conditional}.
\begin{proof}
The linear constraints $Ay \le b$ are equivalent to
\BEQ
Ay - \Expect[Ay\mid\eta^Ty] \le b - \Expect[Ay\mid\eta^Ty].
\label{eq:truncated-normal-1}
\EEQ
Since conditional expectation has the form
\[
\Expect[Ay\mid\eta^Ty] = A\mu + a(\eta^Ty - \eta^T\mu),\,a = \frac{A\Sigma\eta}{\eta^T\Sigma\eta},
\]
\eqref{eq:truncated-normal-1} simplifies to $Ay - b - a\eta^Ty \le -a\eta^Ty$. Rearranging, we obtain
\begin{align*}
\eta^Ty &\ge \frac{1}{a_j}(b_j - (Ay)_j + a_j\eta^Ty) & a_j < 0 \\
\eta^Ty &\le \frac{1}{a_j}(b_j - (Ay)_j + a_j\eta^Ty) & a_j > 0 \\
0 &\le b_j - (Ay)_j & a_j = 0.
\end{align*}
We take the max of the lower bounds and min of the upper bounds to deduce
\[
\underbrace{\max_{j:a_j < 0}\frac{1}{a_j}(b_j - (Ay)_j + a_j\eta^Ty)}_{\V^-(y)} \le \eta^Ty \le \underbrace{\min_{j:a_j > 0}\frac{1}{a_j}(b_j - (Ay)_j + a_j\eta^Ty)}_{\V^+(y)}.
\]
Since $y$ is normal, $b_j - (Ay)_j + a_j\eta^Ty,\,j=1,\dots,m$ are independent of $\eta^Ty$. Hence $(\V^+(y),\,\V^-(y),\,\V^0(y))$ are also independent of $\eta^Ty$.

To complete the proof, we must show $\eta^Ty$ given $Ay \le b$, $(\V^+(y), \V^-(y)) = (v^+,v^-)$ is truncated normal. 
\begin{align*}
&\Pp\left(\eta^Ty\le s \ \big| Ay \le b,\V^+(y)=v^+, \V^-(y)=v^-,  \right) \\
&\pc= \Pp\left( \eta^Ty\le s \ \big| v^-\le \eta^Ty\le v^+,\V^+(y)=v^+, \V^-(y)=v^-, \V^0(y) \ge 0\right)\\
&\pc= \frac{\Pp\left( \eta^Ty \le s , v^- \le \eta^Ty \le v^+ \big| \V^+(y)=v^+, \V^-(y)=v^-, \V^0(y) \ge 0 \right)}{\Pp\left(v^- \le \eta^Ty \le v^+ \big| \V^+(y)=v^+, \V^-(y)=v^-, \V^0(y) \ge 0 \right)}\\
&\pc= \frac{\Pp\left( \eta^Ty \le s, v^- \le \eta^Ty \le v^+\right)}{\Pp\left(v^- \le \eta^Ty \le v^+\right)} = \Pp\left( \eta^Ty \le s\mid v^- \le \eta^Ty \le v^+\right)
\end{align*}
where the second to last equality follows from the independence of $(\V^+,\V^-,\V^0)$ and $\eta^Ty$. This is the CDF of a truncated normal.
\end{proof}

Although the proof of Lemma \ref{lem:conditional} is elementary, the geometric picture gives more intuition as to why $\V^+$ and $\V^-$ are independent of $\eta^T y$. Without loss of generality, we assume $||\eta||_2 = 1$ and $y \sim N(\mu, I)$ (since otherwise we could replace $y$ by $\Sigma^{-\frac{1}{2}}y$). Now we can decompose $y$ into two independent components, a 1-dimensional component $\eta^T y$ and an $(n-1)$-dimensional component orthogonal to $\eta$:
\[ y = \eta^T y + P_{\eta^\perp} y. \]
The case of $n=2$ is illustrated in Figure \ref{fig:polytope}. $\V^-$ and $\V^+$ are independent of $\eta^T y$, since they are functions of $P_{\eta^\perp}$ only, which is independent of $\eta^T y$.



\begin{figure}[!h]
	\centering
\includegraphics[width = .5\textwidth]{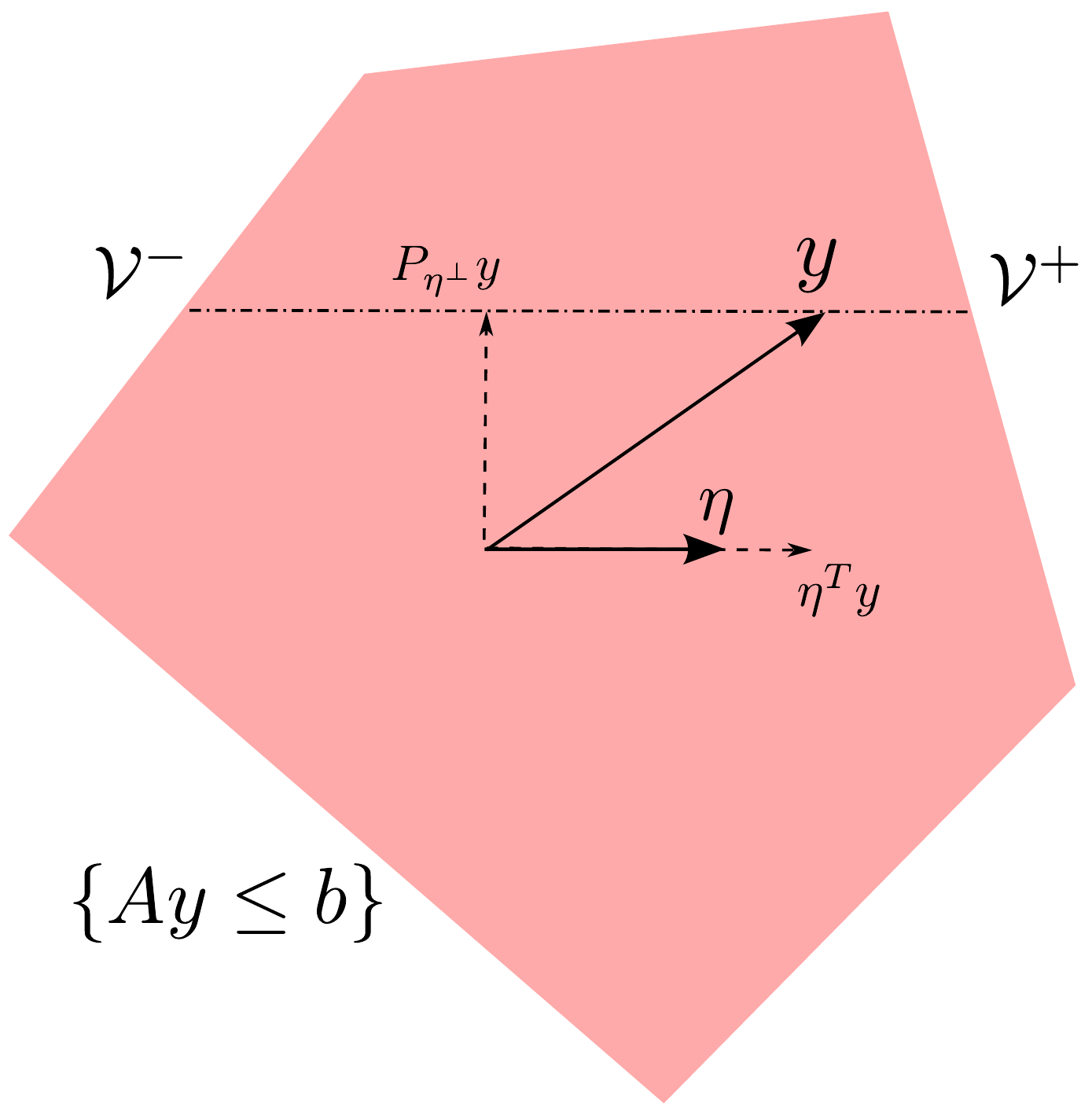}
\caption{A picture demonstrating that the set $\left\{Ay \leq b \right\}$ can be characterized by $\{ \V^- \leq \eta^T y \leq \V^+\}$. Assuming $\Sigma = I$ and $||\eta||_2 = 1$, $\V^-$ and $\V^+$ are functions of $P_{\eta^\perp}y$ only, which is independent of $\eta^T y$.}
\label{fig:polytope}
\end{figure}

In Figure \ref{fig:truncated}, we plot the density of the truncated Gaussian, noting that its shape depends on the
 location of $\mu$ relative to $[a,b]$ as well as the width relative to $\sigma$.

\begin{figure}[!h]
	\centering
\includegraphics[width = .5\textwidth]{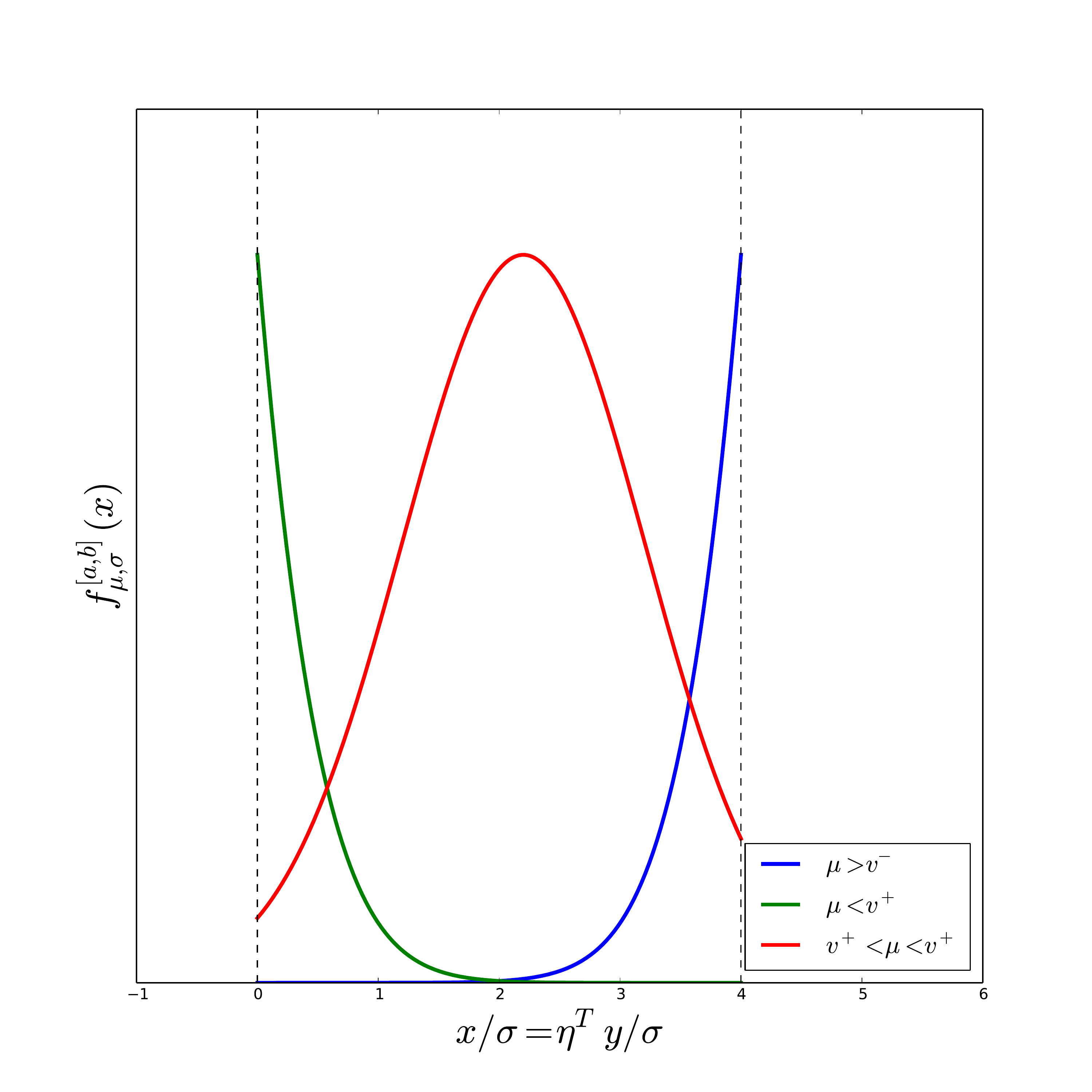}
\caption{The density of the truncated Gaussian with distribution $F^{[v^-,v^+]}_{\mu,\sigma^2}$ depends on the width of $[v^-,v^+]$ relative to $\sigma$ as well as the location of $\mu$ relative to $[v^-,v^+]$. When
$\mu$ is firmly inside the interval, the distribution resembles a Gaussian. As $\mu$ varies outside $[v^-,v^+]$, the density begins to converge to an exponential distribution with mean inversely proportional to the distance between $\mu$ and its projection onto $[v^-,v^+]$.}
\label{fig:truncated}
\end{figure}

\begin{figure}[h]
\includegraphics[width = .48\textwidth]{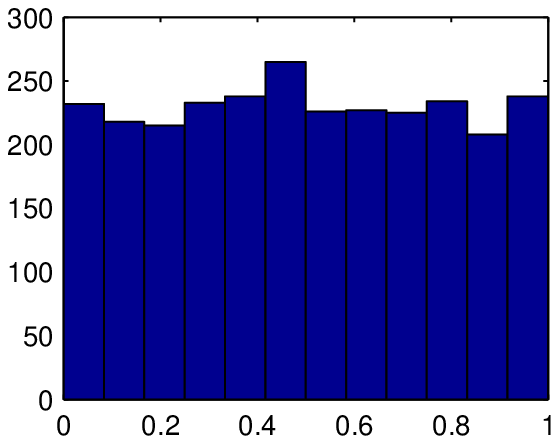}
\includegraphics[width = .48\textwidth]{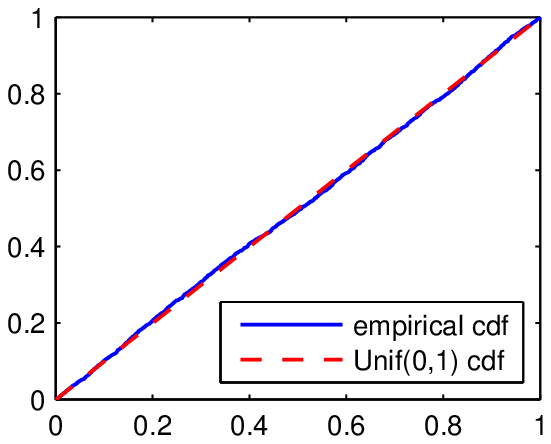}
\caption{Histogram and empirical distribution of $F_{\eta^T\mu,\ \eta^T \Sigma \eta}^{[\V^-, \V^+]}(\eta^T y)$ obtained by sampling $y \sim N(\mu,\Sigma)$ constrained to $\{ Ay \le b\}$. The distribution is very close to $\unif(0,1)$ as shown in Theorem \ref{thm:truncated-gaussian-pivot}.}
\label{fig:null-qq-plot}
\end{figure}
\subsection{Adaptive choice of $\eta$}
For the applications to forming confidence intervals and significance testing, we will need choices of $\eta$ that are adaptive, or dependent on $y$. We will restrict ourselves to functions $\eta$ that are functions of the partition, $\eta(y) = f(\hat \E(y))$. This choice of functions includes $\eta(y) = X_{\hat \E(y)} ^{T\dagger} e_j$ which is used for forming confidence intervals in Section \ref{sec:lasso}.

\begin{theorem}
Let $\eta: \reals^n \to \reals^n$ be a function of the form $\eta(y) =f(\hat \E(y))$, then 
$$
F_{\eta(y)^T\mu,\ \eta(y)^T \Sigma \eta(y)}^{[\V^-(y), \V^+(y)]}\left(\eta(y)^T y\right) \sim \unif(0,1).
$$
\label{thm:unconditional-pivot}
\end{theorem}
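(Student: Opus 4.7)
The plan is to decompose the event $\{y \in \reals^n\}$ into the finite collection of selection events indexed by possible partitions $(\E, z_\E)$, apply Theorem \ref{thm:truncated-gaussian-pivot} on each piece, and then marginalize. The crucial observation is that although $\eta(y)$ is random through its dependence on $\hat\E(y)$, once we condition on $\{(\hat\E,\hat z_{\hat\E}) = (\E, z_\E)\}$, the vector $\eta(y) = f(\hat\E(y)) = f(\E)$ is a deterministic vector $\eta_\E$. Therefore the quantities $\V^-(y)$, $\V^+(y)$ defined in \eqref{eq:v_minus}--\eqref{eq:v_plus} are computed with this fixed $\eta_\E$ together with the fixed matrix $A(\E, z_\E)$ and vector $b(\E, z_\E)$ from Proposition \ref{prop:A_b}, so the pivot itself reduces to the one studied in Section \ref{sec:truncated-gaussian-test}.

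First I would invoke Lemma \ref{lem:equiv_sets} and Proposition \ref{prop:A_b} to write
\[
\{(\hat\E,\hat z_{\hat\E}) = (\E, z_\E)\} = \{A(\E,z_\E)\,y \le b(\E,z_\E)\},
\]
so that conditioning on this selection event is exactly the affine-constraint conditioning of Theorem \ref{thm:truncated-gaussian-pivot}. Applying that theorem with the deterministic vector $\eta_\E := f(\E)$ gives
\[
F_{\eta_\E^T\mu,\ \eta_\E^T\Sigma\eta_\E}^{[\V^-(y),\V^+(y)]}(\eta_\E^T y)\ \big|\ \{(\hat\E,\hat z_{\hat\E}) = (\E, z_\E)\} \sim \unif(0,1).
\]
On this event, $\eta(y) = \eta_\E$, so the left-hand side agrees with the pivot in the theorem statement.

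Next I would marginalize over the (countable, disjoint) collection of selection events. Writing $P$ for the pivot $F_{\eta(y)^T\mu,\ \eta(y)^T\Sigma\eta(y)}^{[\V^-(y),\V^+(y)]}(\eta(y)^T y)$, the law of total probability yields
\[
\Pp(P \le s) = \sum_{(\E, z_\E)} \Pp\bigl(P \le s\ \big|\ (\hat\E, \hat z_{\hat\E}) = (\E, z_\E)\bigr)\,\Pp\bigl((\hat\E, \hat z_{\hat\E}) = (\E, z_\E)\bigr) = \sum_{(\E, z_\E)} s\cdot \Pp\bigl((\hat\E, \hat z_{\hat\E}) = (\E, z_\E)\bigr) = s,
\]
for every $s \in [0,1]$, which is the desired conclusion. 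One should restrict the sum to those partitions that occur with positive probability (and verify the collection of partitions is at most countable, which is clear since $\E \subset \{1,\dots,p\}$ and $z_\E \in \{\pm 1\}^{|\E|}$).

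The main subtlety, which is more conceptual than technical, is ensuring that the pivot is well-defined: on each selection event the triple $(\eta_\E, A(\E,z_\E), b(\E,z_\E))$ is fixed, so $\V^\pm$ are legitimately the truncation limits of a univariate truncated Gaussian. No further measurability or integrability issues arise because the partition is discrete. I do not expect any serious obstacle; the theorem is essentially a bookkeeping consequence of Theorem \ref{thm:truncated-gaussian-pivot} together with the disjoint decomposition of $\reals^n$ induced by the lasso's selection events.
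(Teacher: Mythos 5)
Your proposal is correct and follows essentially the same route as the paper's proof: partition $\reals^n$ over the finitely many selection events, note that $\eta(y)=f(\E)$ is deterministic on each event so Theorem \ref{thm:truncated-gaussian-pivot} applies with the affine constraints from Proposition \ref{prop:A_b}, and then marginalize by the law of total probability. If anything, your version is slightly more careful than the paper's, since you condition explicitly on the pair $(\E, z_\E)$ (which is what the affine characterization actually requires) rather than on $\hat\E$ alone.
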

\begin{proof}
We can expand $F$ with respect to the partition,
\begin{align*}
\Pp& \left(F_{\eta(y)^T\mu,\ \eta(y)^T \Sigma \eta(y)}^{[\V^-(y), \V^+(y)]}\left(\eta(y)^T y\right)\le t\right)=\sum_{(\E,s)} \Pp \left(F_{\eta(y)^T\mu,\ \eta(y)^T \Sigma \eta(y)}^{[\V^-(y), \V^+(y)]}\left(\eta(y)^T y\right)\le t, \hat \E(y)=\E \right)\\
&=\sum_{(\E,s)}\Pp \left(F_{\eta(y)^T\mu,\ \eta(y)^T \Sigma \eta(y)}^{[\V^-(y), \V^+(y)]}\left(\eta(y)^T y\right)\le t \big| \hat \E(y)=\E\right) \Pp \left( \hat \E(y)=\E\right)\\
&=\sum_{(\E,s)}\Pp \left(F_{f(\E)^T\mu,\ f(\E)^T \Sigma f(\E)}^{[\V^-(y), \V^+(y)]}\left(f(\E)^T y\right)\le t \big| \hat \E(y)=\E\right) \Pp \left( \hat \E(y)=\E\right)
\end{align*}
Using Theorem \ref{thm:truncated-gaussian-pivot}, $\Pp \left(F_{f(\E)^T\mu,\ f(\E)^T \Sigma f(\E)}^{[\V^-(y), \V^+(y)]}\left(f(\E)^T y\right)\le t \big| \hat \E(y)=\E\right) =t$.
Thus 
\begin{align*}
\Pp \left(F_{\eta(y)^T\mu,\ \eta(y)^T \Sigma \eta(y)}^{[\V^-(y), \V^+(y)]}\left(\eta(y)^T y\right))\le t\right)&= \sum_{(\E,s)} t\Pp \left( \hat \E(y)=\E\right)\\
&=t \sum_{(\E,s)} \Pp \left( \hat \E(y)=\E\right)\\
&=t.
\end{align*}
This shows that $F_{\eta(y)^T\mu,\ \eta(y)^T \Sigma \eta(y)}^{[\V^-(y), \V^+(y)]}\left(\eta(y)^T y\right)\sim \unif(0,1)$.
\end{proof}

\section{Application to Inference for the Lasso}
\label{sec:lasso}

In this section, we apply the theory developed in in Sections \ref{sec:lasso-selection} and \ref{sec:truncated-gaussian-test} to the lasso. In particular, we will construct confidence intervals for the active variables and test the chosen model based on the pivot developed in Section \ref{sec:truncated-gaussian-test}.

To summarize the developments so far, recall that our model says that $y \sim N(\mu, \sigma^2 I)$. The distribution of interest is $y\ |\ \{(\hat\E, \hat z_{\hat\E}) = (\E, z_\E)\}$. By Theorem \ref{lem:equiv_sets}, this is equivalent to $y\ |\ {\{A(\E, z_\E)y \leq b(\E, z_\E) \}}$ defined in Proposition \ref{prop:A_b}. Now we can apply Theorem \ref{thm:truncated-gaussian-pivot} to obtain the (conditional) pivot  
\begin{align}
F_{\eta^T\mu,\ \sigma^2 ||\eta||_2^2}^{[\V^-, \V^+]}(\eta^Ty)\ \big|\ \{(\hat\E, \hat z_{\hat\E}) = (\E, z_\E)\} \sim \unif(0,1)
\label{eq:pivot-lasso}
\end{align}
for any $\eta$, where $\V^-$ and $\V^+$ are defined in \eqref{eq:v_minus} and \eqref{eq:v_plus}. Note that $A(\E, z_\E)$ and $b(\E, z_\E)$ appear in this pivot through $\V^-$ and $\V^+$. This pivot will play a central role in all of the applications that follow.

\subsection{Confidence Intervals for the Active Variables}
In this section, we describe how to form confidence intervals for the components of $\beta^\star_ {\hat\E}= X_{\hat\E}^\dagger \mu$. 
 If we choose 
\begin{equation}
\label{eq:eta_confint}
\eta_j = (X_{\hat\E}^T)^\dagger e_j,
\end{equation} 
then $\eta_j^T \mu = \beta_{\hat\E, j}^\star$, so the above framework provides a method for inference about the $j^\text{th}$ variable in the model $\hat\E$. Note that this reduces to inference about the true $\beta^0_j$ if $\hat\E \supset S := \{ j: \beta^0_j \neq 0 \}$, as discussed in Section \ref{sec:intro}. Conditions under which this holds are well known in the literature, cf. \cite{buhlmann2011statistics}, and provided in Section \ref{appendix:screening}.


By applying Theorem \ref{thm:truncated-gaussian-pivot}, we obtain the following (conditional) pivot for $\beta^\star_{\hat\E,j}$:
\[
F_{\beta^\star_{\hat\E,j},\ \sigma^2 ||\eta_j||^2}^{[\V^-, \V^+]}(\eta_j^Ty)\ \big|\ \{(\hat\E, \hat z_{\hat\E}) = (\E, z_\E)\} \sim \unif(0,1).
\]
Note that $j$ and $\eta_j$ are both random---but only through $\hat\E$, a quantity which is fixed after conditioning---so Theorem \ref{thm:truncated-gaussian-pivot} holds even for this ``random'' choice of $\eta$. The obvious way to obtain an interval is to ``invert'' the pivot. In other words, since
\[ \Pp\left(\frac{\alpha}{2} \leq F_{\beta^\star_{\hat\E,j},\ \sigma^2 ||\eta_j||^2}^{[\V^-, \V^+]}(\eta_j^Ty) \leq 1-\frac{\alpha}{2}\ \big|\ \{(\hat\E, \hat z_{\hat\E}) = (\E, z_\E)\} \right) = \alpha, \]
one can define a $(1-\alpha)$ (conditional) confidence interval for $\beta_{\hat\E,j}^\star$ as 
\[ \left\{\beta_{\hat\E,j}^\star: \frac{\alpha}{2} \leq F_{\beta^\star_{\hat\E,j},\ \sigma^2 ||\eta_j||^2}^{[\V^-, \V^+]}(\eta_j^Ty) \leq 1-\frac{\alpha}{2} \right\}. \]
In fact, $F$ is monotone decreasing in $\beta^\star_{\hat\E,j}$, so to find its endpoints, one need only solve for the root of a smooth one-dimensional function. The monotonicity is a consequence of the fact that the truncated Gaussian distribution is a natural exponential family and hence has monotone likelihood ratio in $\mu$. The details can be found in Appendix \ref{appendix:monotone}.

We now formalize the above observations in the following result, an immediate consequence of Theorem \ref{thm:truncated-gaussian-pivot}.

\begin{corollary}
Let $\eta_j$ be defined as in \eqref{eq:eta_confint}, and let $L_\alpha^j = L_\alpha^j(\eta_j,\hat\E, \hat z_{\hat\E})$ and $U_\alpha^j = U_\alpha^j(\eta_j,\hat\E, \hat z_{\hat\E})$ be the (unique) values satisfying 
\begin{align*}
F_{L_\alpha^j,\ \sigma^2 ||\eta_j||^2}^{[\V^-, \V^+]}(\eta_j^Ty) &= 1-\alpha & F_{U_\alpha^j,\ \sigma^2 ||\eta_j||^2}^{[\V^-, \V^+]}(\eta_j^Ty) &= \alpha 
\end{align*}
Then $[L_\alpha^j, U_\alpha^j]$ is a $(1-\alpha)$ confidence interval for $\eta_j^T \mu$, conditional on $(\hat\E, \hat z_{\hat\E})$:
\begin{equation}
\label{eq:coverage}
\Pp \left( \beta^\star _{\hat M ,j} \in [L_{\alpha}^j, U_{\alpha}^j]\ \big|\ \{ (\hat\E, \hat z_{\hat\E}) = (\E, z_\E) \}  \right) = 1-\alpha.
\end{equation}
\end{corollary}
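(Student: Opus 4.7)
The plan is to deduce the corollary by inverting the pivot of Theorem~\ref{thm:truncated-gaussian-pivot}, combined with the monotonicity of the truncated Gaussian CDF in its mean parameter.

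First, I would specialize Theorem~\ref{thm:truncated-gaussian-pivot} to the current setting. On the selection event $\{(\hat\E,\hat z_{\hat\E})=(\E,z_\E)\}$ the active set is deterministic, so $\eta_j=(X_{\E}^T)^{\dagger}e_j$ is a fixed vector, and $\eta_j^T\mu=e_j^TX_\E^{\dagger}\mu=\beta^\star_{\E,j}$ is precisely the parameter we want to cover. By Lemma~\ref{lem:equiv_sets} and Proposition~\ref{prop:A_b} this selection event equals the polytope $\{A(\E,z_\E)y\le b(\E,z_\E)\}$, so Theorem~\ref{thm:truncated-gaussian-pivot} applied with $\Sigma=\sigma^2 I$ immediately yields
\begin{equation*}
F^{[\V^-,\V^+]}_{\beta^\star_{\E,j},\,\sigma^2\|\eta_j\|^2}(\eta_j^T y)\;\big|\;\{(\hat\E,\hat z_{\hat\E})=(\E,z_\E)\}\;\sim\;\unif(0,1).
\end{equation*}
Note that although $\eta_j$ is a ``data-dependent'' direction in general, after conditioning on $\hat\E=\E$ it is non-random, so Theorem~\ref{thm:truncated-gaussian-pivot} applies verbatim and we need not invoke the adaptive Theorem~\ref{thm:unconditional-pivot}.

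Next, I would invoke the monotonicity of $\mu\mapsto F^{[v^-,v^+]}_{\mu,\sigma^2}(x)$ for each fixed $v^\pm,\sigma^2,x$, which the paper flags as a consequence of the monotone likelihood ratio property of the truncated normal natural exponential family (deferred to Appendix~\ref{appendix:monotone}). This monotonicity serves two purposes at once: it guarantees that each of the equations defining $L_\alpha^j$ and $U_\alpha^j$ admits a unique root, so the endpoints are well-defined measurable functions of $y$; and it lets me translate the event $\{L_\alpha^j\le\beta^\star_{\E,j}\le U_\alpha^j\}$ into an equivalent event about the value of the pivot.

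Finally, I would combine the two ingredients. Strict monotone decrease in $\mu$ gives
\begin{equation*}
\beta^\star_{\E,j}\ge L_\alpha^j \iff F^{[\V^-,\V^+]}_{\beta^\star_{\E,j},\,\sigma^2\|\eta_j\|^2}(\eta_j^T y)\le 1-\alpha,
\end{equation*}
and analogously $\beta^\star_{\E,j}\le U_\alpha^j$ is equivalent to the pivot being at least $\alpha$; hence membership in the interval is an event about a $\unif(0,1)$ random variable lying in a prescribed range, which delivers \eqref{eq:coverage} at the stated conditional level. No serious obstacle arises beyond the monotonicity/uniqueness step from Appendix~\ref{appendix:monotone}; everything else is a direct inversion of the pivot of Theorem~\ref{thm:truncated-gaussian-pivot}.
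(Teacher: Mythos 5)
Your proposal matches the paper's own argument: the paper likewise presents this corollary as an immediate consequence of Theorem~\ref{thm:truncated-gaussian-pivot}, obtained by inverting the conditional pivot (with $\eta_j$ fixed once one conditions on $\hat\E$) and appealing to the monotonicity of $F_{\mu}$ in $\mu$ from Appendix~\ref{appendix:monotone} for the uniqueness of the endpoints. (One small point that both you and the paper gloss over: with the endpoints defined via the $\alpha$ and $1-\alpha$ quantiles as in the displayed equations, the inversion argument actually yields coverage $1-2\alpha$; to get exactly $1-\alpha$ the defining equations should use $\alpha/2$ and $1-\alpha/2$, as in the paper's preceding discussion.)
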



The above discussion has focused on constructing intervals for a single $j$. If we repeat the procedure for each $j \in \hat\E$, our intervals in fact control the false coverage rate (FCR) of \cite{benjamini2005false}.
\begin{corollary}
For each $j\in \hat \E$,
\begin{equation}
\label{eq:fcr}
\Pp \left( \beta^\star _{\hat M ,j}  \in [L_{\alpha}^j,U_{\alpha}^j]  \right) = 1-\alpha.
\end{equation}
Furthermore, the FCR of the intervals $\left\{[L_{\alpha}^j, U_{\alpha}^j]\right\}_{j\in \hat \E}$  is $\alpha$.
\end{corollary}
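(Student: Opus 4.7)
The plan is to derive both claims by integrating the preceding corollary's conditional coverage guarantee \eqref{eq:coverage} against the law of the selection event $(\hat\E, \hat z_{\hat\E})$. Because the conditional coverage equals $1-\alpha$ regardless of which $(E, z_E)$ occurs, marginalization is immediate.

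For the individual coverage claim \eqref{eq:fcr}, I would interpret the probability on the event $\{j \in \hat\E\}$, since the target $\beta^\star_{\hat\E, j}$ and the interval $[L_\alpha^j, U_\alpha^j]$ are only defined there. First I would decompose by the selection event:
\[
\Pp\bigl(\beta^\star_{\hat\E, j} \in [L_\alpha^j, U_\alpha^j],\, j \in \hat\E\bigr) = \sum_{(E, z_E):\, j \in E} \Pp\bigl(\beta^\star_{\hat\E, j} \in [L_\alpha^j, U_\alpha^j] \,\big|\, (\hat\E, \hat z_{\hat\E}) = (E, z_E)\bigr)\, \Pp\bigl((\hat\E, \hat z_{\hat\E}) = (E, z_E)\bigr).
\]
Then I would apply \eqref{eq:coverage} to replace each conditional probability by $1-\alpha$, pull the constant out of the sum, and recognize the remaining sum as $\Pp(j \in \hat\E)$. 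Dividing through yields the conditional coverage $1-\alpha$ as required.

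For the FCR claim I would start from the Benjamini--Yekutieli definition
\[
\mathrm{FCR} = \Expect\left[\frac{\sum_{j \in \hat\E} \mathbb{1}\{\beta^\star_{\hat\E, j} \notin [L_\alpha^j, U_\alpha^j]\}}{\max(|\hat\E|, 1)}\right],
\]
condition on $(\hat\E, \hat z_{\hat\E})$ via the tower property, and apply \eqref{eq:coverage} termwise so that each indicator's conditional expectation becomes $\alpha$. The ratio inside then collapses to $\alpha\, |\hat\E| / \max(|\hat\E|, 1) = \alpha\, \mathbb{1}\{|\hat\E| \ge 1\}$, and taking expectation gives $\alpha\, \Pp(|\hat\E| \ge 1) \le \alpha$, with equality whenever $\hat\E$ is almost surely nonempty (the generic case for the lasso at nontrivial $\lambda$).

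There is no substantive obstacle; the argument is entirely bookkeeping built on the conditional coverage established in the previous corollary. The only care needed is in interpreting \eqref{eq:fcr} correctly when $j$ might not be selected and in handling the vacuous case $\hat\E = \emptyset$ in the FCR denominator, both of which are standard.
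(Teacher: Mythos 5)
Your proposal is correct and is exactly the argument the paper intends: the corollary is stated without proof as an immediate consequence of the conditional coverage \eqref{eq:coverage}, obtained by summing over the selection events $(\E, z_\E)$ (with $j \in \E$ for the first claim) and applying the tower property termwise for the FCR claim. Your care about the vacuous case $\hat\E = \emptyset$, which gives $\fwer$-style equality only when $\hat\E$ is almost surely nonempty (so that in general one gets FCR $\le \alpha$), is a minor refinement the paper glosses over.
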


If $\eta^T y$ are not near the boundaries $[\V^-,\V^+]$, then the intervals will be relatively short. This is shown in Figure \ref{fig:intervals}. Figure \ref{fig:uncorrelated-intervals} shows two simulations that demonstrate our intervals cover at the nominal rate. We leave an exhaustive study of such intervals for the lasso to future work, noting that the truncation framework described can be used to form intervals with exact coverage properties.

\begin{figure}[!h]
\includegraphics[width = .48\textwidth]{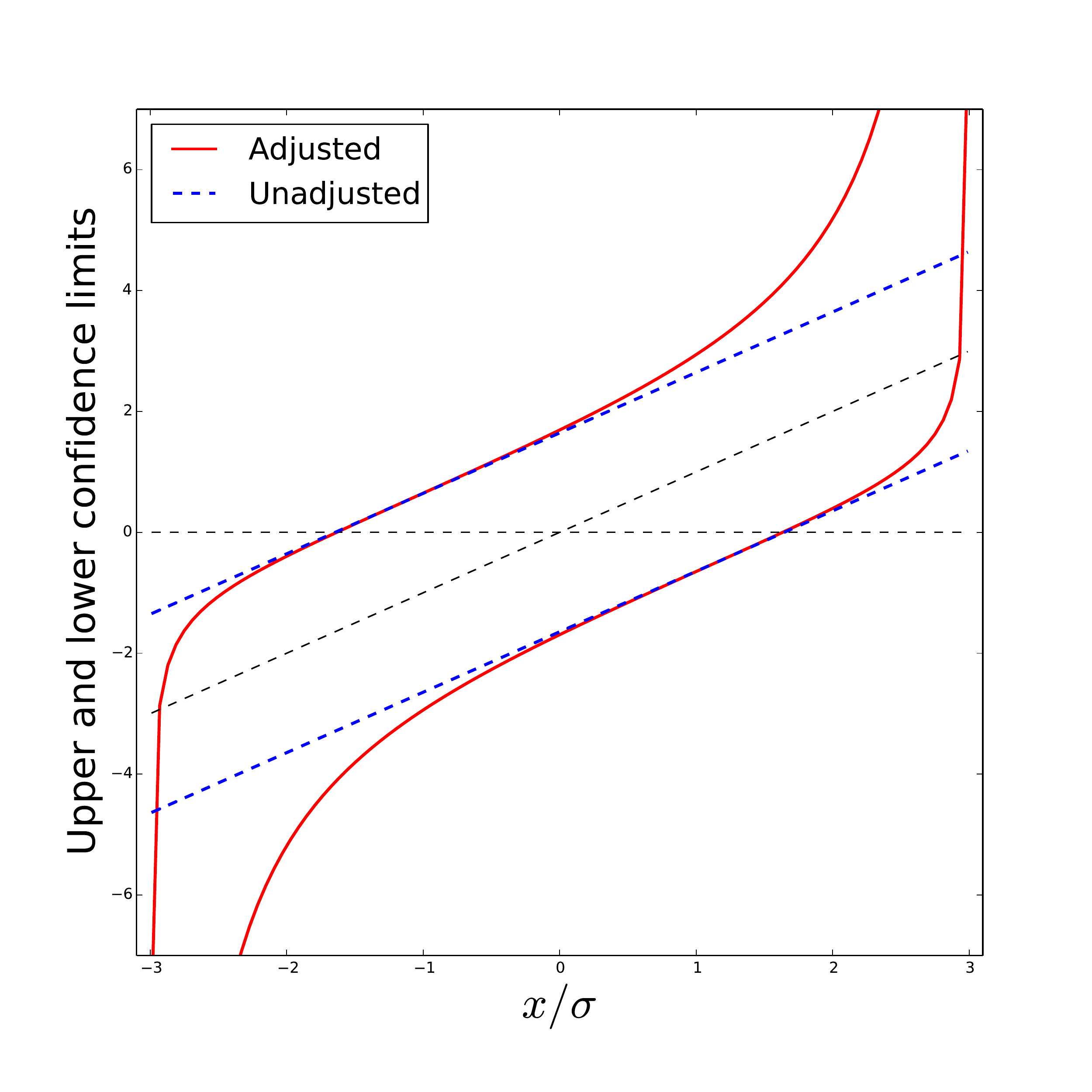}
\includegraphics[width = .48\textwidth]{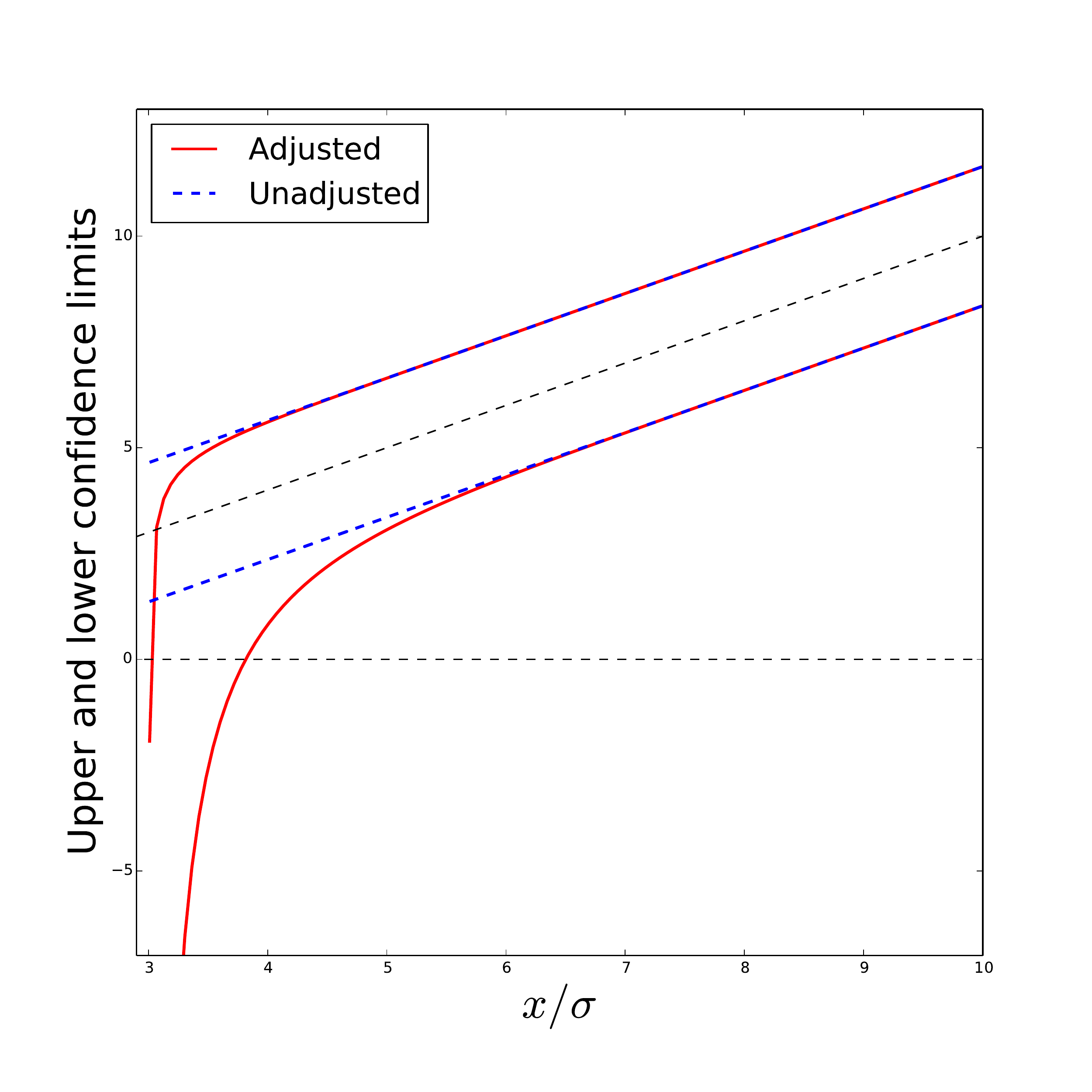}
\caption{Upper and lower bounds of 90\% confidence intervals based on $[a,b]=[-3\sigma,3\sigma]$ as a function of the observation $x/\sigma$. We see that as long as the observation $x/\sigma$ is roughly $0.5\sigma$ away from either boundary, the size of the intervals is comparable to an unadjusted confidence interval.}
\label{fig:intervals}
\end{figure}

\begin{figure}[!h]
\includegraphics[width = .48\textwidth]{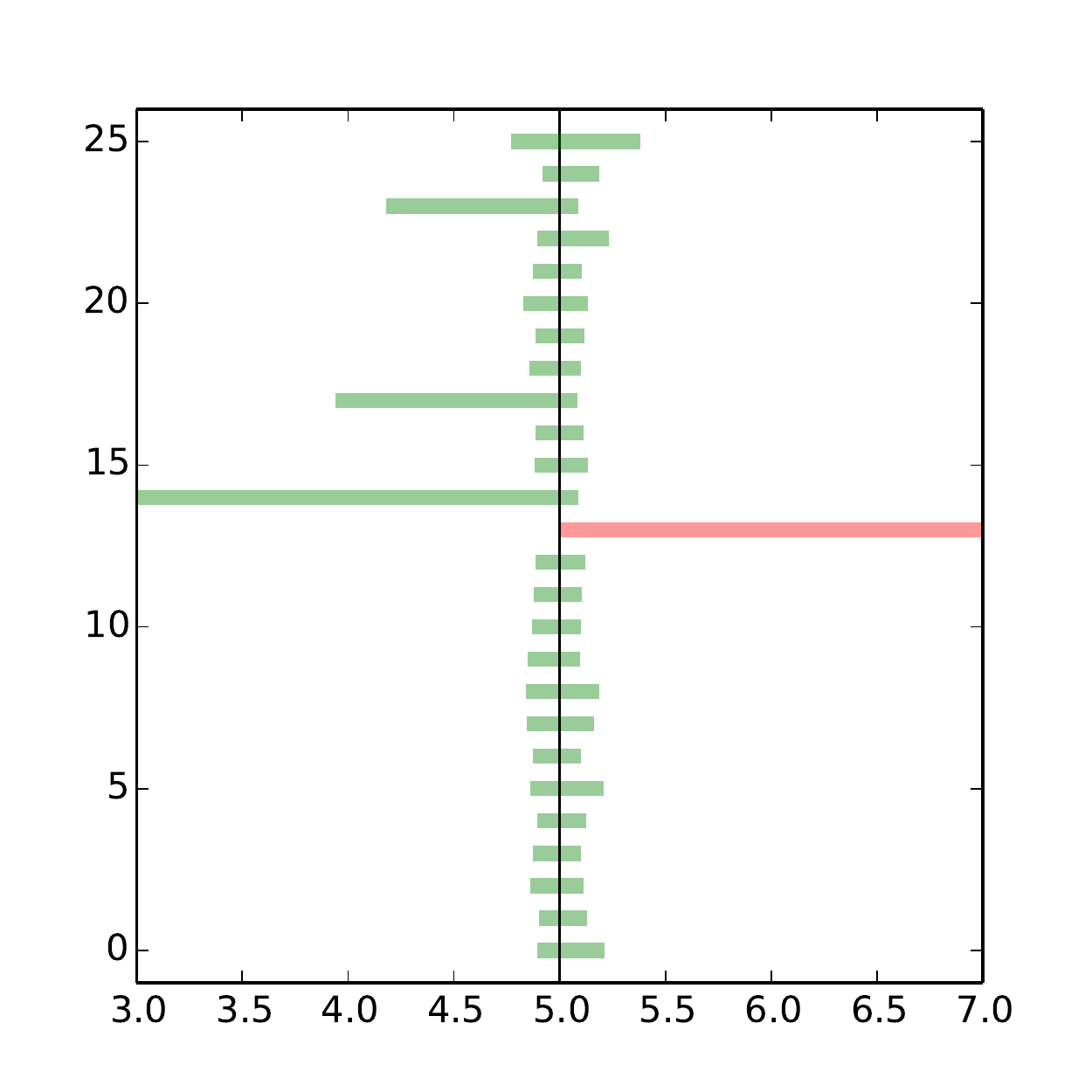}
\includegraphics[width = .48\textwidth]{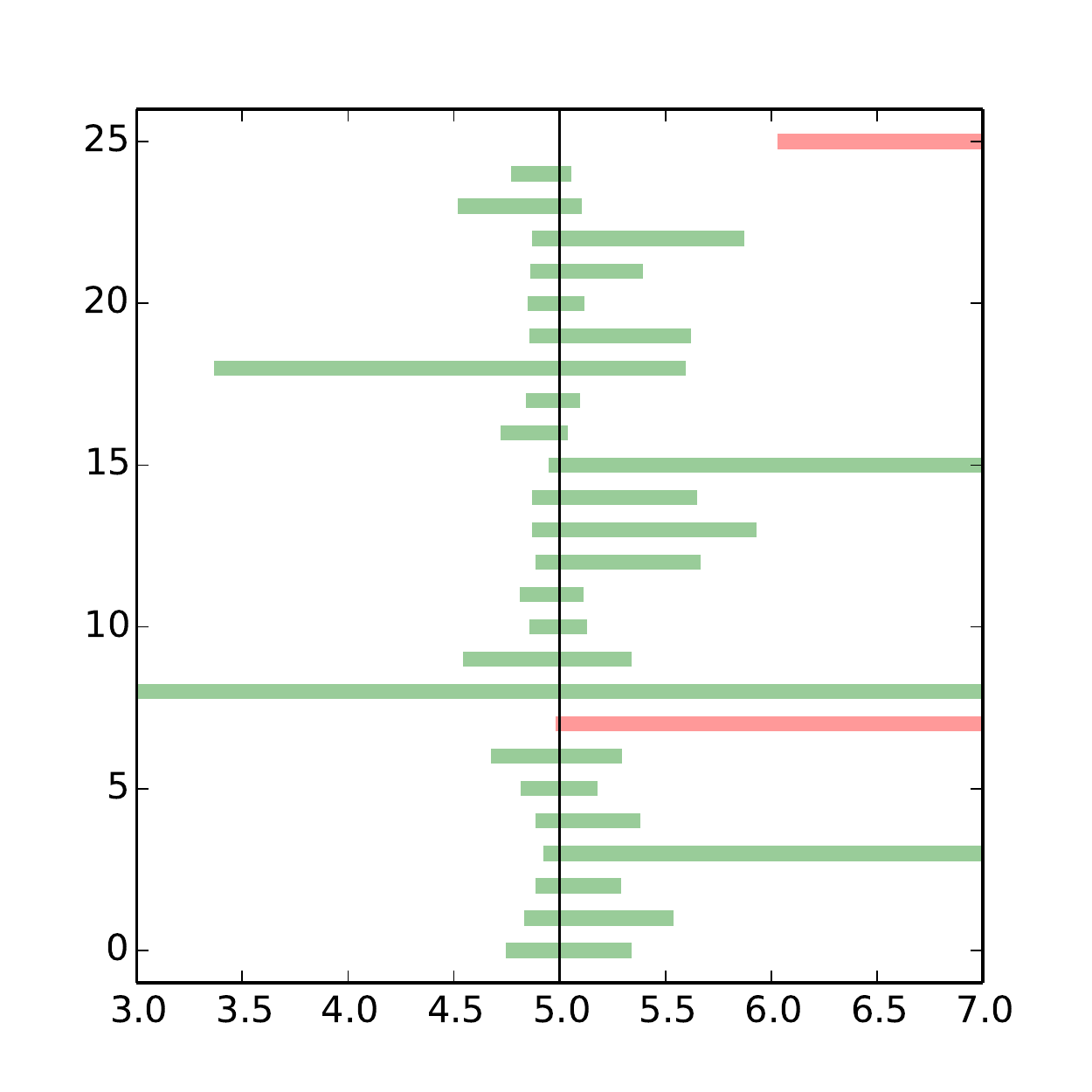}
\caption{90\% confidence intervals for $\eta_1^T\mu$ for a small ($n = 100,\,p=50$) and a large ($n=100,\,p=200$) uncorrelated Gaussian design, computed over 25 simulated data sets. The true model has five non-zero coefficients, all set to 5.0, and the noise variance is 0.25. A green bar means the confidence interval covers the true value while a red bar means otherwise.}
\label{fig:uncorrelated-intervals}
\end{figure}

\subsection{Testing the Lasso-Selected Model}
\label{sec:goodness}

Having observed that the lasso selected the variables $\hat\E$, another relevant question is whether it has captured all of the signal in the model, i.e.,
\begin{equation}
H_0: \beta^0_{-\hat\E} = \zeros.
\label{eq:simple_hyp}
\end{equation} 
We consider a slightly more general question, which does not assume the correctness of the linear model $\mu = X\beta^0$ and also takes into account whether the non-selected variables can improve the fit:
\begin{equation}
H_0: X_{-\hat \E}^T(I-P_{\hat \E}) \mu = \zeros.
\label{eq:hyp}
\end{equation} 
This quantity is the partial correlation of the non-selected variables with $\mu$, adjusting for the variables in $\hat\E$. This is more general because if we assume $\mu = X\beta^0$ for some $\beta^0$ and $X$ is full rank, then rejecting \eqref{eq:hyp} implies that there exists $i \in \text{supp}(\beta^0)$ not in $\hat\E$, so we would also reject \eqref{eq:simple_hyp}.
The natural approach is to compare the observed partial correlations $X_{-\E}^T (I - P_\E) y$ to $\zeros$. However, the framework of Section \ref{sec:truncated-gaussian-test} only allows tests of $\mu$ in a single direction $\eta$. To make use of that framework, we can choose $\eta$ such that it selects the maximum magnitude of $X_{-\E}^T (I - P_\E) y$. In particular, this direction provides the most evidence against the null hypothesis of zero partial correlation, so if the null hypothesis cannot be rejected in this direction, it would not be rejected in any direction. 

Letting $j^\star := \text{argmax}_j\ |e_j^T X_{-\E}^T(I - P_\E)y|$ and $s_{j} := \sign(e_{j}^T X_{-\E}^T(I - P_\E)y)$, we set 
\begin{equation}
\eta_{j^\star} = s_{j^\star} (I - P_\E) X_{-\E} e_{j^\star},
\label{eq:eta_test}
\end{equation}
and test $H_0: \eta_{j^\star}^T \mu = 0$. However, the results in Section \ref{sec:truncated-gaussian-test} cannot be directly applied to this setting because $j^\star$ and $s_{j^\star}$ are random variables that are not measurable with respect to $(\hat\E, \hat z_{\hat\E})$. 

To resolve this issue, we propose a test conditional not only on $(\hat\E, \hat z_{\hat\E})$, but also on the index and sign of the maximizer:
\begin{equation}
\{ (\hat\E, \hat z_{\hat\E}) = (\E, z_\E),\,(j^\star, s_{j^\star}) = (j,s)\}.
\label{eq:expanded_constraint}
\end{equation}
A test that is level $\alpha$ conditional on \eqref{eq:expanded_constraint} for all $(\E, z_\E)$ and $(j, s)$ is also level $\alpha$ conditional on $\{ (\hat\E, \hat z_{\hat\E}) = (\E, z_\E) \}$.

In order to use the results of Section \ref{sec:truncated-gaussian-test}, we must show that \eqref{eq:expanded_constraint} can be written in the form $A(\E, z_\E, j, s)y \leq b(\E, z_\E, j, s)$. This is indeed possible, and the following proposition provides an explicit construction.
\begin{proposition}
\label{prop:expanded_constraints}
Let $A_0, b_0, A_1, b_1$ be defined as in Proposition \ref{prop:A_b}. Then:
\begin{align*}
\{ (\hat\E, \hat z_{\hat\E}) = (\E, z_\E),\,(j^\star, s_{j^\star}) = (j,s)\}  = \left\{ \begin{pmatrix} A_0(\E, z_\E) \\ A_1(\E, z_\E) \\ A_2(\E, j, s)\end{pmatrix} y < \begin{pmatrix} b_0(\E, z_\E) \\ b_1(\E, z_\E) \\ \zeros \end{pmatrix}\right\}
\label{eq:laso-polytope}
\end{align*}
where $A_2(\E, j, s)$ is defined as 
$$ A_2(\E, j, s) = -s \begin{pmatrix} D_j(\E) \\ S_j(\E)\end{pmatrix}X_{-\E}^T (I - P_{\E}) $$
and $D_j$ and $S_j$ are $(|\E|-1) \times |\E|$ operators that compute the difference and sum, respectively, of the $j^\text{th}$ element with the other elements, e.g.,
\begin{align*}
D_1 &= \begin{pmatrix} 1 & -1 \\ 1 & & -1 \\ & & & \ddots \\ 1 & & & & -1 \end{pmatrix} & S_1 &= \begin{pmatrix} 1 & 1 \\ 1 & & 1 \\ & & & \ddots \\ 1 & & & & 1 \end{pmatrix}.
\end{align*}
\end{proposition}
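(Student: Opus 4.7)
The plan is to build on Proposition \ref{prop:A_b} by appending extra affine inequalities that capture the new selection clause $(j^\star, s_{j^\star}) = (j, s)$. Proposition \ref{prop:A_b} already rewrites $\{(\hat\E, \hat z_{\hat\E}) = (\E, z_\E)\}$ as $\{A_0(\E, z_\E) y < b_0(\E, z_\E)\} \cap \{A_1(\E, z_\E) y < b_1(\E, z_\E)\}$, so once I produce a matrix $A_2(\E, j, s)$ with $\{(j^\star, s_{j^\star}) = (j, s)\} = \{A_2(\E, j, s) y \leq 0\}$, intersecting with the two earlier blocks will give the claimed polytope.

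Next I would unpack the definitions of $j^\star$ and $s_{j^\star}$. Set $v := X_{-\E}^T (I - P_\E) y \in \R^{p - |\E|}$; by construction $j^\star = \argmax_k |v_k|$ and $s_{j^\star} = \sign(v_{j^\star})$. The event $\{(j^\star, s_{j^\star}) = (j, s)\}$ is therefore the event that $s v_j \geq |v_k|$ for every $k \neq j$, which splits into the $2(p - |\E| - 1)$ linear inequalities $s v_j \geq v_k$ and $s v_j \geq -v_k$, one pair per $k \neq j$. These stack naturally into $s D_j v \geq 0$ and $s S_j v \geq 0$ (a short case split on $s = \pm 1$ confirms the sign absorption in either case), which is exactly $-s \bigl(\begin{smallmatrix} D_j \\ S_j \end{smallmatrix}\bigr) v \leq 0$. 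Substituting $v = X_{-\E}^T(I - P_\E) y$ then returns the $A_2(\E, j, s)$ formula given in the statement.

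The only subtlety worth flagging is the implicit sign condition. A priori the event $(j^\star, s_{j^\star}) = (j, s)$ also requires $\sign(v_j) = s$, but this is automatic from the two inequalities above: summing them gives $s v_j \geq |v_k|$ for every $k \neq j$, hence $s v_j \geq \max_{k \neq j} |v_k| \geq 0$, so $\sign(v_j) = s$ on the set $\{v_j \neq 0\}$, which has full measure under the general-position assumption on $X$. The remainder is pure book-keeping; the minor care point is preserving the strict inequality vs.\ equality pattern from Proposition \ref{prop:A_b} alongside the new weak inequalities, but since $y$ is absolutely continuous this distinction does not affect the conditional distribution used downstream.
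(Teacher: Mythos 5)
Your proposal is correct and takes essentially the same route as the paper: both introduce the partial-correlation vector $r = X_{-\E}^T(I-P_\E)y$, observe via a case split on $s=\pm 1$ that $\{(j^\star,s_{j^\star})=(j,s)\}$ is exactly the set of linear inequalities $-s\bigl(\begin{smallmatrix} D_j \\ S_j\end{smallmatrix}\bigr) r < \zeros$, and append these rows to the polytope from Proposition \ref{prop:A_b}. Your added remarks on the implied sign condition and on strict versus weak inequalities are sound refinements of points the paper leaves implicit.
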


\begin{proof}
The constraints $\{ A_0 y < b_0 \}$ and $\{A_1 y < b_1 \}$ come from Proposition \eqref{prop:A_b} and encode the constraints $\{(\hat\E, \hat z_{\hat\E}) = (\E, z_\E)\}$. We show that the last two sets of constraints encode $\{ (j^\star, s_{j^\star}) = (j,s)\}$. 

Let $r := X_{-\E}^T (I - P_\E) y$ denote the vector of partial correlations. If $s = +1$, then $|r_j| > |r_i|$ for all $i \neq j$ if and only if $r_j - r_i > 0$ and $r_j + r_i > 0$ for all $i \neq j$. We can write this as $D_j r > 0$ and $S_j r > 0$. If $s = -1$, then the signs are flipped: $D_j r < 0$ and $S_j r < 0$. This establishes 
$$\{ (j^\star, s_{j^\star}) = (j,s)\} = \left\{-s\begin{pmatrix} D_j \\ S_j \end{pmatrix} r < \zeros \right\} = \{ A_2 y < \zeros \}.$$
\end{proof}

Because of Proposition \ref{prop:expanded_constraints}, we can now obtain the following result as a simple consequence of Theorem \ref{thm:truncated-gaussian-pivot}, which says that $F_{0, \sigma^2 ||\eta_{j^\star}||^2}^{[\V^-, \V^+]}(\eta_{j^\star}^T y) \sim \unif(0,1)$, conditional on the set \eqref{eq:expanded_constraint} and $H_0$.
We reject when $ F_{0, \sigma^2 ||\eta_j^{*}||^2}^{[\V^-, \V^+]}(\eta_{j^\star}^T y)$ is large because $F_{0,\ \sigma^2 ||\eta_j^{*}||^2}^{[\V^-, \V^+]}(\cdot)$ is monotone increasing in the argument and $\eta_{j^*}^T\mu$ is likely to be positive under the alternative.
\begin{corollary}
\label{cor:test}
Let $H_0$ and $\eta_{j^\star}$ be defined as in \eqref{eq:eta_test}. Then, the test which rejects when 
$$ \left\{F_{0,\ \sigma^2 ||\eta_j^{*}||^2}^{[\V^-, \V^+]}(\eta_{j^\star}^T y) > 1 - \alpha \right\}$$
is level $\alpha$, conditional on $\{ (\hat\E, \hat z_{\hat\E}) = (\E, z_\E), (j^\star, s_{j^\star}) = (j,s)\}$. That is, 
\begin{gather*}
\Pp\left(F_{0,\ \sigma^2 ||\eta_{j^\star}||^2}^{[\V^-, \V^+]}(\eta_{j^\star}^T y) > 1-\alpha\ \big|\ \{ (\hat\E, \hat z_{\hat\E}) = (\E, z_\E),\, (j^\star, s_{j^\star}) = (j,s)\} \cap H_0 \right) = \alpha.\\
\intertext{In particular, since this holds for every $(\E, z_\E, j, s)$, this test also controls Type I error conditional only on $(\hat\E, \hat z_{\hat\E})$, and unconditionally:}
\Pp\left(F_{0,\ \sigma^2 ||\eta_{j^\star}||^2}^{[\V^-, \V^+]}(\eta_{j^\star}^T y) > 1-\alpha\ \big|\ \{ (\hat\E, \hat z_{\hat\E}) = (\E, z_\E)\} \cap H_0 \right) = \alpha \\
\Pp\left(F_{0,\ \sigma^2 ||\eta_{j^\star}||^2}^{[\V^-, \V^+]}(\eta_{j^\star}^T y) > 1-\alpha\ \big|\ H_0\right) = \alpha.
\end{gather*}
\end{corollary}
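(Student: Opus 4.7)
The plan is to reduce the claim to the pivot already established in Theorem \ref{thm:truncated-gaussian-pivot} (and its ``adaptive'' extension in Theorem \ref{thm:unconditional-pivot}) via the polytope representation of Proposition \ref{prop:expanded_constraints}. First I would observe that, once we condition on $\{(\hat\E,\hat z_{\hat\E})=(\E,z_\E),\,(j^\star,s_{j^\star})=(j,s)\}$, the direction
\[
\eta_{j^\star}\;=\;s_{j^\star}(I-P_{\hat\E})X_{-\hat\E}e_{j^\star}\;=\;s\,(I-P_{\E})X_{-\E}e_{j}
\]
is a deterministic vector depending only on the conditioning event, so we may safely apply Theorem \ref{thm:truncated-gaussian-pivot} with this fixed $\eta_{j^\star}$.

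Next I would use Proposition \ref{prop:expanded_constraints} to write the conditioning event as an affine polytope $\{Ay\le b\}$, where $A$ and $b$ depend on $(\E,z_\E,j,s)$. Under $H_0:X_{-\hat\E}^T(I-P_{\hat\E})\mu=\zeros$, the mean of the test statistic is
\[
\eta_{j^\star}^{T}\mu\;=\;s\,e_{j}^{T}X_{-\E}^{T}(I-P_{\E})\mu\;=\;0,
\]
so plugging $\eta^T\mu=0$ and $\eta^T\Sigma\eta=\sigma^2\|\eta_{j^\star}\|^2$ into the pivot \eqref{eq:pivot} of Theorem \ref{thm:truncated-gaussian-pivot} yields
\[
F_{0,\,\sigma^2\|\eta_{j^\star}\|^2}^{[\V^-,\V^+]}(\eta_{j^\star}^{T}y)\;\big|\;\bigl\{(\hat\E,\hat z_{\hat\E})=(\E,z_\E),\,(j^\star,s_{j^\star})=(j,s)\bigr\}\cap H_0\;\sim\;\unif(0,1).
\]
Since a $\unif(0,1)$ random variable exceeds $1-\alpha$ with probability exactly $\alpha$, the first conditional statement follows immediately.

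For the two coarser conditioning statements I would marginalize. Partitioning by $(j,s)$ and applying the tower property,
\[
\Pp\bigl(\cdot\,\big|\,(\hat\E,\hat z_{\hat\E})=(\E,z_\E)\cap H_0\bigr)\;=\;\sum_{(j,s)}\Pp(\cdot\,|\,\cdots,(j^\star,s_{j^\star})=(j,s))\,\Pp((j^\star,s_{j^\star})=(j,s)\,|\,\cdots),
\]
each inner probability equals $\alpha$, so the sum equals $\alpha$. The same averaging argument over all $(\E,z_\E,j,s)$ produces the unconditional statement. Alternatively one can appeal directly to Theorem \ref{thm:unconditional-pivot}, after noting that $\eta_{j^\star}$ is a function of the extended partition variable $(\hat\E,\hat z_{\hat\E},j^\star,s_{j^\star})$.

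The only slightly delicate point is verifying that conditioning on the extra coordinates $(j^\star,s_{j^\star})$ is legitimate in the framework of Section \ref{sec:truncated-gaussian-test}; this is precisely what Proposition \ref{prop:expanded_constraints} accomplishes by exhibiting the extended selection event as an intersection of half-spaces. Once this polytope representation is in hand, the rest is the standard truncated-Gaussian pivot plus the tower property, so I anticipate no substantive obstacle beyond writing out the marginalization carefully.
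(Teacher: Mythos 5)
Your proposal is correct and follows the same route as the paper: the paper likewise obtains the corollary as an immediate consequence of Theorem \ref{thm:truncated-gaussian-pivot} applied to the expanded polytope of Proposition \ref{prop:expanded_constraints}, with the coarser statements following by summing over $(j,s)$ and $(\E,z_\E)$. Your write-up simply makes explicit the marginalization and the observation that $\eta_{j^\star}$ is fixed by the conditioning event, which the paper leaves implicit.
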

Figures \ref{fig:uncorrelated-p-values} and \ref{fig:correlated-p-values} show the results of four simulation studies that demonstrate that the p-values are uniformly distributed when $H_{0,\lambda}$ is true and stochastically smaller than $\unif(0,1)$ when it is false.

\begin{figure}[!h]
\includegraphics[width = .48\textwidth]{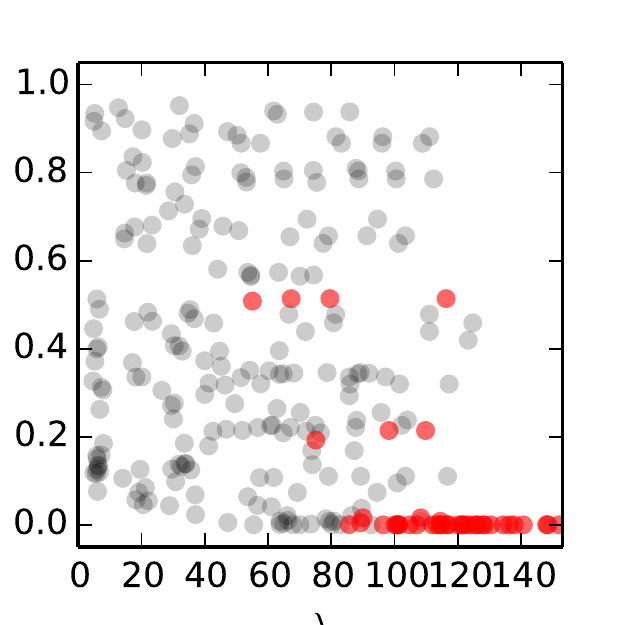}
\includegraphics[width = .48\textwidth]{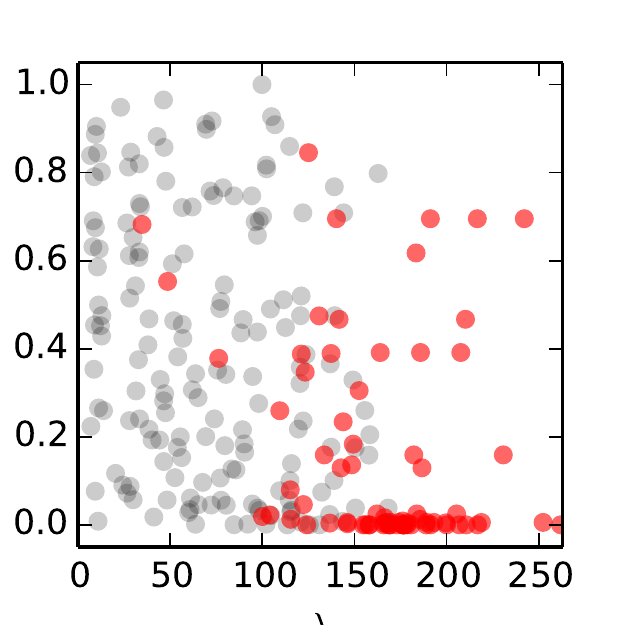}
\caption{P-values for $H_{0,\lambda}$ at various $\lambda$ values for a small ($n = 100,\,p=50$) and a large ($n=100,\,p=200$) uncorrelated Gaussian design, computed over 50 simulated data sets. The true model has three non-zero coefficients, all set to 1.0, and the noise variance is 2.0. We see the p-values are $\unif(0,1)$ when the selected model includes the truly relevant predictors (black dots) and are stochastically smaller than $\unif(0,1)$ when the selected model omits a relevant predictor (red dots).}
\label{fig:uncorrelated-p-values}
\end{figure}

\begin{figure}[!h]
\includegraphics[width = .48\textwidth]{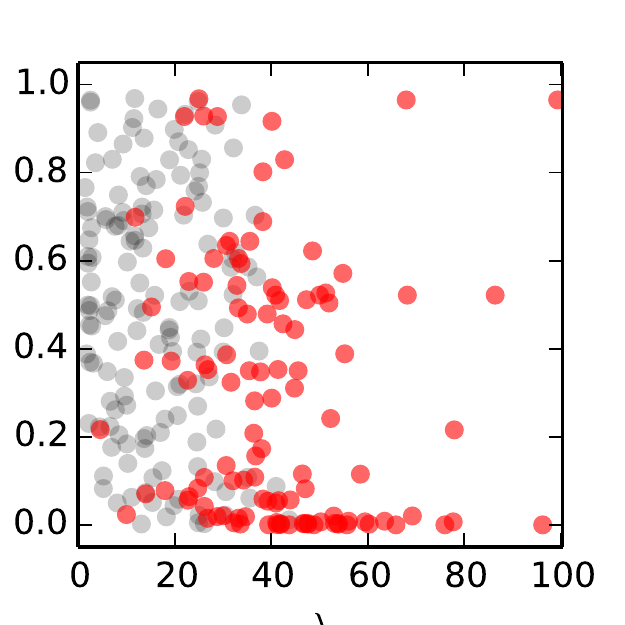}
\includegraphics[width = .48\textwidth]{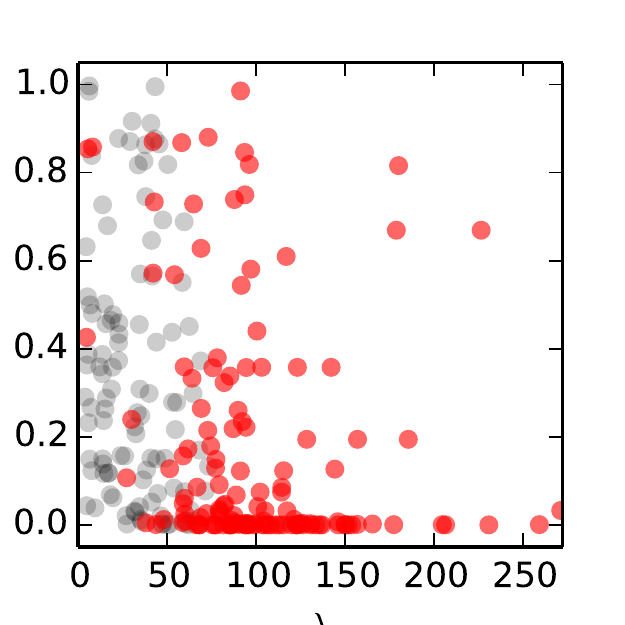}
\caption{P-values for $H_{0,\lambda}$ at various $\lambda$ values for a small ($n = 100,\,p=50$) and a large ($n=100,\,p=200$) \emph{correlated} ($\rho = 0.7$) Gaussian  design, computed over 50 simulated data sets. The true model has three non-zero coefficients, all set to 1.0, and the noise variance is 2.0. Since the predictors are correlated, the relevant predictors are not always selected first. However, the p-values remain uniformly distributed when $H_{0,\lambda}$ is true and stochastically smaller than $\unif(0,1)$ otherwise.}
\label{fig:correlated-p-values}
\end{figure}

\section{Data Example}
\label{sec:examples}

We illustrate the application of inference for the lasso to the diabetes data set from \citet{efron2004least}. First, all variables were standardized. Then, we chose $\lambda$ according to the strategy in \cite{negahban2012unified}, $\lambda = 2 \Expect(\|X^T\epsilon\|_{\infty})$, using an estimate of $\sigma$ from the full model, resulting in $\lambda \approx 190$. The lasso selected four variables: \verb\BMI\, \verb\BP\, \verb\S3\, and \verb\S5\. 

The intervals are shown in Figure \ref{fig:diabetes}, alongside the unadjusted confidence intervals produced by fitting OLS to the four selected variables, ignoring the selection. The latter is not a valid confidence interval conditional on the model. Also depicted are the confidence intervals obtained by \emph{data splitting}; that is, if one splits the $n$ observations into two halves, then uses one half for model selection and the other for inference. This is a competitor method that also produces valid confidence intervals conditional on the model. In this case, data splitting selected the same four variables, and the confidence intervals were formed based on OLS on the half of the data set not used for model selection.

We can make two main observations from Figure \ref{fig:diabetes}. 
\begin{enumerate}
\item The adjusted intervals provided by our method essentially reproduces the OLS intervals for the strong effects, whereas data splitting results in a loss of power by roughly a factor of $\sqrt{2}$ (since only $n/2$ observations are used in the inference). 
\item One variable, \verb\S3\, which would have been deemed significant using the OLS intervals, is no longer significant after adjustment. This demonstrates that taking model selection into account can have substantive impacts on the conclusions that are made.
\end{enumerate} 

\begin{figure}[!h]
\centering
\includegraphics[width=.72\textwidth]{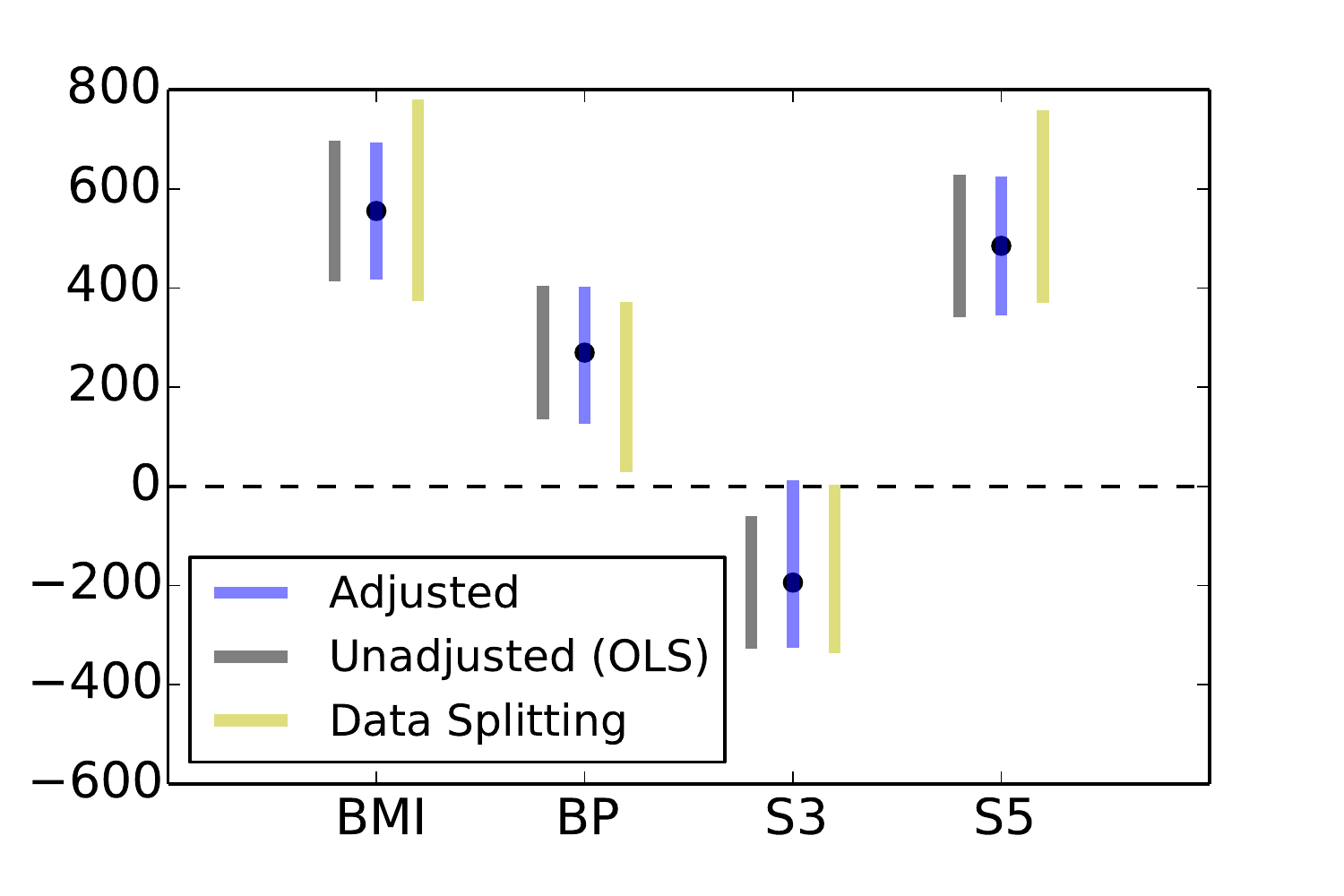}
\caption{Inference for the four variables selected by the lasso ($\lambda = 190$) on the diabetes data set. The point estimate and adjusted confidence intervals using the approach in Section \ref{sec:lasso} are shown in blue. The gray show the OLS intervals, which ignore selection. The yellow lines show the intervals produced by splitting the data into two halves, forming the interval based on only half of the data.}
\label{fig:diabetes}
\end{figure}

\section{Minimal Post-Selection Inference}
\label{sec:minimal}

We have described how to perform post-selection inference for the lasso conditional on both the active set and signs $\{(\hat{\E},\hat{z}_{\hat{\E}}) = (\E,z_\E)\}$. However, recall from Section \ref{sec:intro} that the goal was inference conditional solely on the model, i.e., $\{\hat\E = \E\}$. In this section, we extend our framework to this setting, which we call minimal post-selection inference because we condition on the minimal set necessary for the random $\eta$ to be measurable. This results in more precise confidence intervals at the expense of greater computational cost.

To this end, we note that $\{\hat \E = \E \}$ is simply 
\[\textstyle
\underset{z_\E \in \{-1, 1\}^{|E|}}\bigcup \{(\hat{\E},\hat{z}_{\hat{\E}}) = (\E,z_\E)\},
\]
where the union is taken over all choices of signs. Therefore, the distribution of $y$ conditioned on only the active set $\{\hat \E = \E \}$ is a Gaussian vector constrained to a union of polytopes 
\[\textstyle
y\ \big|\ \underset{z_\E \in \{-1, 1\}^{|E|}}\bigcup\{A(\E,z_\E) y \le b(\E,z_\E)\},
\]
where $A(\E,z_\E)$ and $b(\E,z_\E)$ are given by \eqref{prop:A_b}.

To obtain inference about $\eta^T\mu$, we follow the arguments in Section \ref{sec:truncated-gaussian-test} to obtain that this conditional distribution is equivalent to 
\BEQ\textstyle
\eta^T y\ \big|\ \underset{z_\E \in \{-1, 1\}^{|E|}}\bigcup \{\V_{z_\E}^-(y)\le\eta^Ty\le\V_{z_\E}^+(y),\V_{z_\E}^0(y)\ge 0\},
\label{eq:union-intervals}
\EEQ
where $\V_{z_\E}^-,\,\V_{z_\E}^+,\,\V_{z_\E}^0$ are defined according to \eqref{eq:v_minus}, \eqref{eq:v_plus}, \eqref{eq:v_zero} with $A = A(\E,z_\E)$ and $b = b(\E,z_\E)$. Moreover, all of these quantities are still independent of $\eta^T y$, so instead of having a Gaussian truncated to a single interval $[\V^-, \V^+]$ as in Section \ref{sec:truncated-gaussian-test}, we now have a Gaussian truncated to the union of intervals $\bigcup_{z_\E} [\V^-_{z_\E}, \V^+_{z_\E}]$. The geometric intuition is illustrated in Figure \ref{fig:union}.

\begin{figure}[!h]
\includegraphics[width = \textwidth]{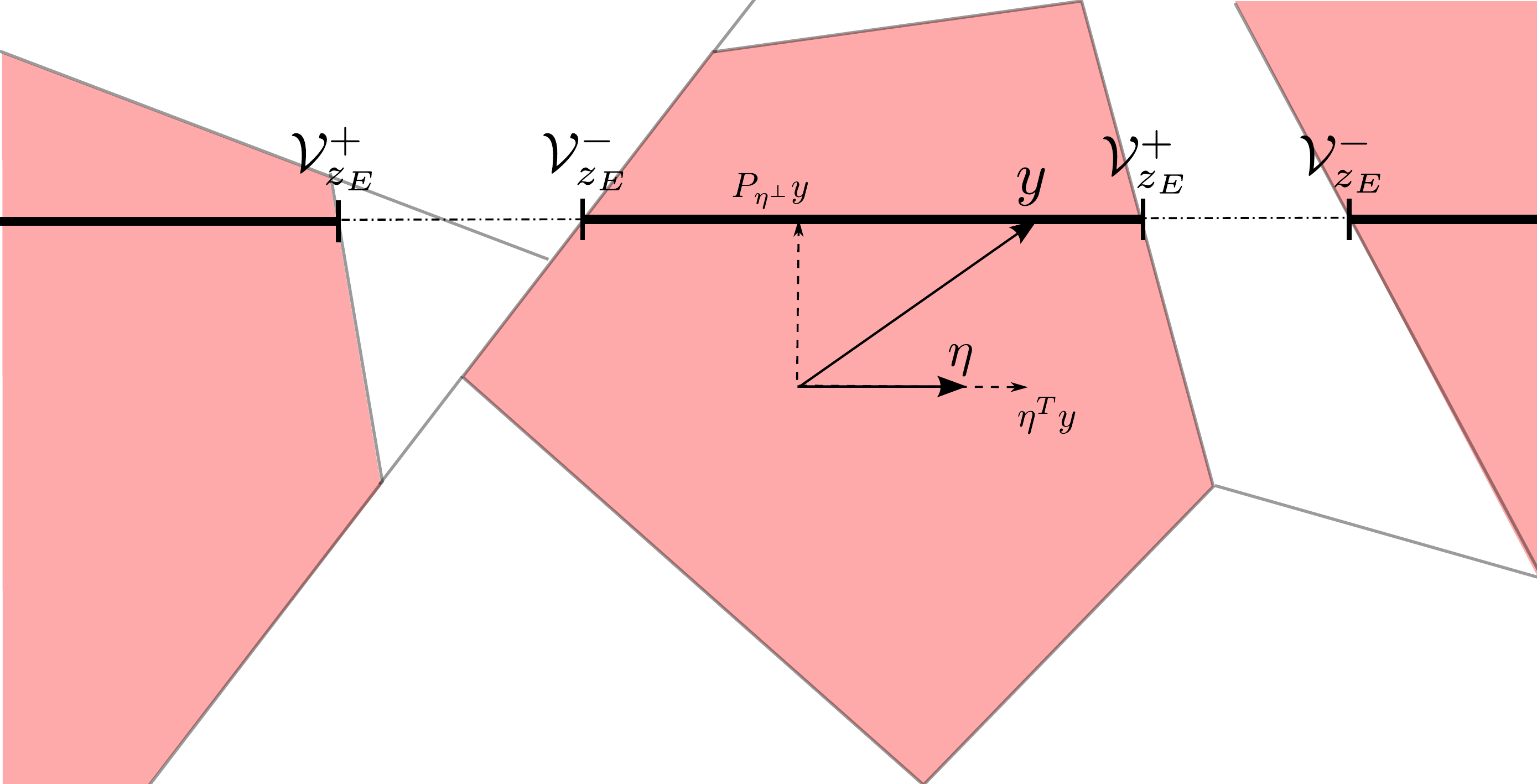}
\caption{A picture demonstrating the effect of taking a union over signs. The polytope in the middle corresponds to the $(\hat\E, \hat z_{\hat\E})$ that was observed and is the same polytope as in Figure \ref{fig:polytope}. The difference is that we now consider potential $(\E, z_\E)$ in addition to the one that was observed. The polytopes for the other $(\E, z_\E)$ which have the same active set $\hat\E$ are red. The conditioning set is the union of these polytopes. We see that for $y$ to be in this union, $\eta^T y$ must be in $\bigcup_{z_\E} [\V^-_{z_\E}, \V^+_{z_\E}]$. The key point is that all of the $\V^-_{z_\E}$ and $\V^+_{z_\E}$ are still functions of only $P_{\eta^\perp} y$ and so are independent of $\eta^T y$.}
\label{fig:union}
\end{figure}

Finally, the probability integral transform once again yields a pivot:
\begin{align*}
F_{\eta^T\mu,\ \eta^T\Sigma\eta}^{\bigcup_{z_E}[\V_{z_E}^-(y), \V_{z_E}^+(y)]}(\eta^Ty)\ \big|\ \{\hat\E = \E\} \sim \unif(0,1).
\end{align*}
It is now more useful to think of the notation of $F$ as indicating the truncation set $C \subset \R$:
\begin{equation}
F_{\mu, \sigma^2}^{C}(x) := \frac{\Phi( (-\infty, x] \cap C )}{ \Phi( C )},
\label{eq:normal-truncated-intervals-cdf}
\end{equation}
where $\Phi$ is the law of a $N(0, 1)$ random variable. We summarize these results in the following theorem. 

\begin{theorem}
\label{thm:minimal}
Let $F_{\mu, \sigma^2}^{\bigcup_i[a_i, b_i]}$ be the CDF of a normal truncated to the union of intervals $\bigcup_i[a_i, b_i]$, i.e., given by \eqref{eq:normal-truncated-intervals-cdf}. Then:
\begin{equation}
F_{\eta^T\mu,\ \eta^T\Sigma\eta}^{\bigcup_{z_\E}[\V_{z_\E}^-(y), \V_{z_\E}^+(y)]}(\eta^Ty)\ \big|\ \{\hat\E = \E\} \sim \unif(0,1),
\label{eq:minimal-pivotal-quantity}
\end{equation}
where $\V_{z_\E}^-(y)$ and $\V_{z_\E}^+(y)$ are defined in \eqref{eq:v_minus} and \eqref{eq:v_plus} with $A = A(\E,z_\E)$ and $b = b(\E,z_\E)$.
\end{theorem}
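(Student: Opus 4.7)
The plan is to adapt the argument of Theorem \ref{thm:truncated-gaussian-pivot} by conditioning on a richer collection of auxiliary statistics so that the union structure of the conditioning event becomes tractable. The starting observation is that
\[
\{\hat E = E\} \;=\; \bigsqcup_{z_E \in \{-1,1\}^{|E|}} \{(\hat E, \hat z_{\hat E}) = (E, z_E)\} \;=\; \bigsqcup_{z_E} \{A(E,z_E)\,y \le b(E,z_E)\},
\]
where the union is \emph{disjoint} because the sign of a lasso solution is determined by $y$ (and $X,\lambda$). For each fixed $z_E$, Lemma \ref{lem:conditional} rewrites the corresponding polytope as $\{\V^-_{z_E}(y) \le \eta^T y \le \V^+_{z_E}(y),\; \V^0_{z_E}(y) \ge 0\}$, and the quantities $\V^\pm_{z_E}(y),\V^0_{z_E}(y)$ are functions only of $y - \tfrac{\Sigma\eta}{\eta^T\Sigma\eta}\eta^T y$, hence independent of $\eta^T y$.

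Next I would condition on the full vector of auxiliary statistics
\[
\mathcal V(y) \;:=\; \bigl(\V^-_{z_E}(y),\V^+_{z_E}(y),\V^0_{z_E}(y)\bigr)_{z_E \in \{-1,1\}^{|E|}},
\]
which is jointly independent of $\eta^T y$ since every coordinate is. Given $\mathcal V(y) = v$, the event $\{\hat E = E\}$ becomes purely a restriction on $\eta^T y$, namely
\[
\eta^T y \;\in\; C(v) \;:=\; \bigcup_{z_E:\, v^0_{z_E} \ge 0} [v^-_{z_E},\, v^+_{z_E}].
\]
Because $\eta^T y \sim N(\eta^T\mu,\eta^T\Sigma\eta)$ is independent of $\mathcal V(y)$, the conditional law of $\eta^T y$ given $\{\hat E = E\}$ and $\mathcal V(y)=v$ is a $N(\eta^T\mu,\eta^T\Sigma\eta)$ truncated to $C(v)$. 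The probability integral transform then gives
\[
F_{\eta^T\mu,\;\eta^T\Sigma\eta}^{C(\mathcal V(y))}(\eta^T y)\;\big|\; \{\hat E = E\},\,\mathcal V(y)=v \;\sim\; \unif(0,1).
\]
Finally I would integrate out $\mathcal V(y)$ against its conditional distribution given $\{\hat E = E\}$, exactly as in the proof of Theorem \ref{thm:truncated-gaussian-pivot}, to obtain the unconditional uniform pivot \eqref{eq:minimal-pivotal-quantity}. Noting that $C(\mathcal V(y))$ is exactly $\bigcup_{z_E}[\V^-_{z_E}(y),\V^+_{z_E}(y)]$ intersected with the set of $z_E$ for which $\V^0_{z_E}(y) \ge 0$ (the others simply contribute empty intervals once we interpret the truncation set carefully), matches the statement of the theorem.

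The main obstacle I anticipate is purely bookkeeping: verifying that the disjointness of the sign polytopes translates into a well-defined truncation to a union of (essentially) disjoint intervals $[\V^-_{z_E},\V^+_{z_E}]$, and that the CDF defined in \eqref{eq:normal-truncated-intervals-cdf} correctly normalizes by the measure of the union. One minor subtlety is ensuring that sign patterns $z_E$ which are never realized (i.e.\ for which $\V^0_{z_E}(y) < 0$ almost surely on $\{\hat E = E\}$) do not contribute to $C(\mathcal V(y))$, but since they give $\{A(E,z_E)y\le b(E,z_E)\} = \emptyset$ conditional on the observed $\mathcal V(y)$, they can be safely dropped. With this handled, the remainder is a direct repetition of the integration argument already spelled out for the single-polytope case.
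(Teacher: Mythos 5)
Your proposal is correct and follows essentially the same route as the paper: decompose $\{\hat\E=\E\}$ into the disjoint union of sign polytopes, rewrite each as an interval constraint on $\eta^Ty$ with endpoints independent of $\eta^Ty$, and apply the probability integral transform conditionally on those endpoints before integrating them out. Your explicit treatment of the $\V^0_{z_\E}$ constraints is in fact slightly more careful than the paper's exposition, which writes the truncation set as the full union $\bigcup_{z_\E}[\V^-_{z_\E},\V^+_{z_\E}]$ without restricting to sign patterns satisfying $\V^0_{z_\E}(y)\ge 0$.
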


The derivations of the confidence intervals and hypothesis tests in Section \ref{sec:lasso} remain valid using \eqref{eq:minimal-pivotal-quantity} as the pivot instead of \eqref{eq:pivot-lasso}. Figure \ref{fig:minimal} illustrates the effect of minimal post-selection inference in a simulation study, as compared with the ``simple'' inference described previously. The intervals are similar in most cases, but one can obtain great gains in precision using the minimal intervals when the simple intervals are very wide.

\begin{figure}
\includegraphics[width=.5\textwidth]{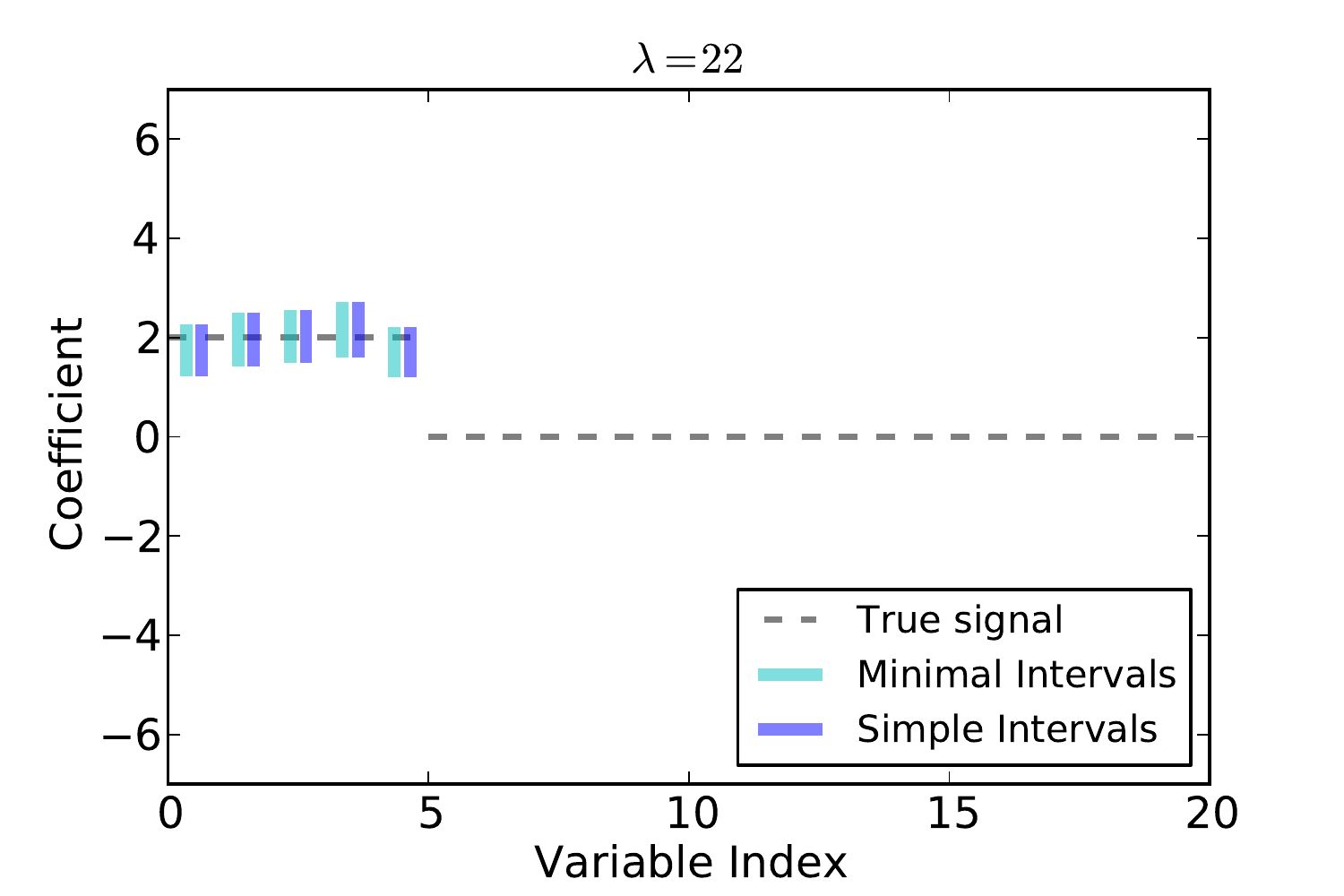}\includegraphics[width=.5\textwidth]{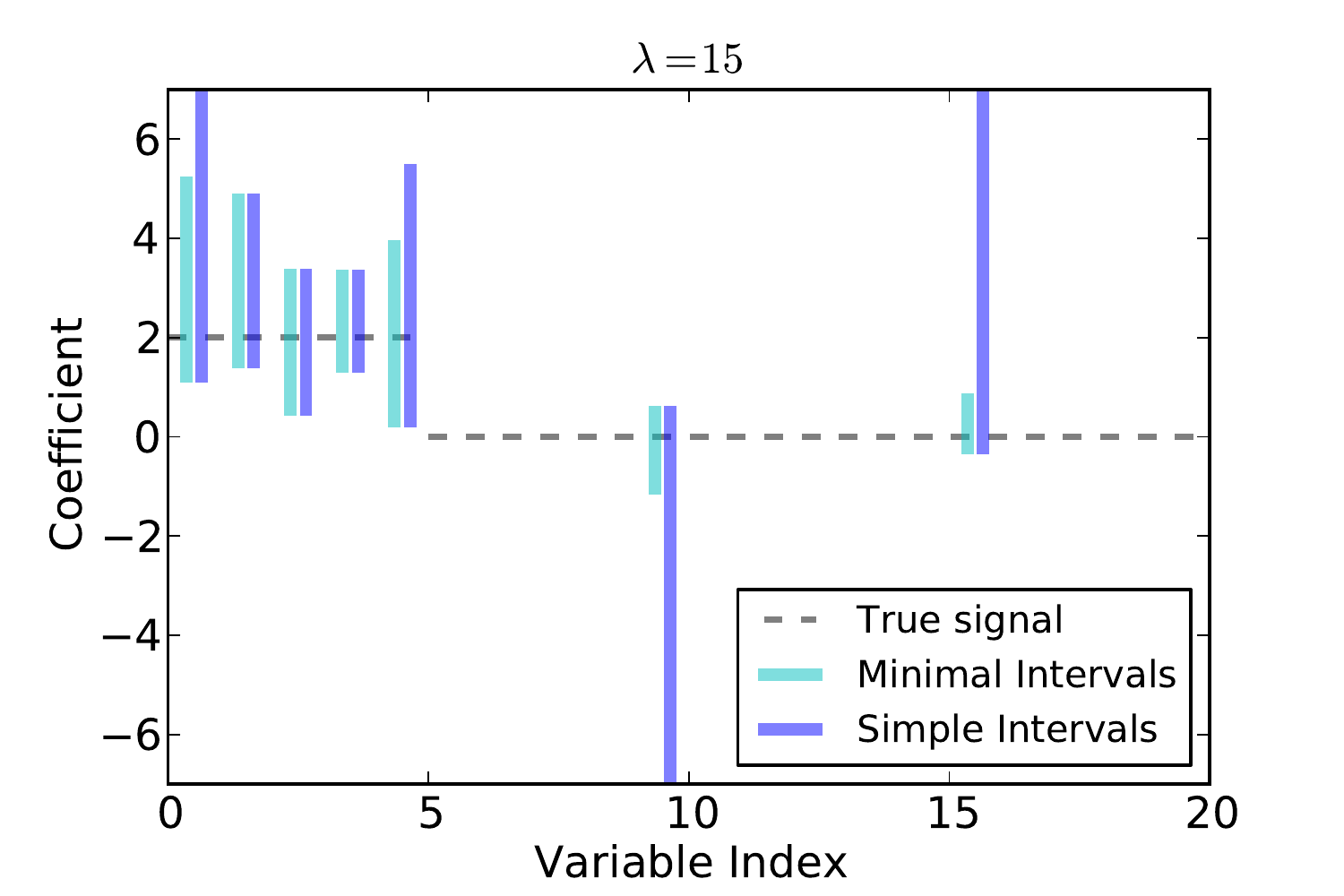}
\caption{Comparison of the minimal and simple intervals as applied to the same simulated data set for two values of $\lambda$. The simulated data featured $n=25$, $p=50$, and 5 true non-zero coefficients; only the first 20 coefficients are shown. (We have included variables with no intervals to emphasize that inference is only on the selected variables.) We see that the simple intervals are virtually as good as the minimal intervals most of the time; the advantage of the minimal intervals is realized when the estimate is unstable and the simple intervals are very long, as in the right plot.}
\label{fig:minimal}
\end{figure}

However, the tradeoff for this increased precision is greater computational cost. We computed $\V^-_{z_\E}$ and $\V^+_{z_\E}$ for all $z_\E \in \{ -1, 1 \}^{|\E|}$, which is only feasible when $|\E|$ is fairly small. In what follows, we revert to the simple intervals described in Section \ref{sec:lasso}, but extensions to the minimal inference setting are straightforward.

\section{Extensions}
\label{sec:extensions}

\subsection{Elastic net}
One problem with the lasso is that it tends to select only one variable out of a set of correlated variables, resulting in estimates which are unstable. The elastic net \citep{zou2005regularization} adds an $\ell_2$ penalty to the lasso objective in order to stabilize the estimates:
\begin{align}
\hat \beta^e = \underset{\beta}{\text{argmin}}\ \frac{1}{2} \norm{y-X \beta}_2^2 +\lambda \norm{\beta}_1+ \frac{\gamma}{2} \norm{\beta}_2 ^2.
\label{eq:elastic-net}
\end{align}
Using a nearly identical argument to the one in Section \ref{sec:lasso-selection}, we see that necessary and sufficient conditions for $\{ (\hat\E, \hat z_{\hat\E}) = (\E, z_\E) \}$ are the existence of $U(\E, z_\E)$ and $W(\E, z_\E)$ satisfying 
\begin{align*}
(X_\E^T X_\E + \gamma I) U - X_\E^T y + \lambda z_\E &= 0 \\
X_{-\E}^T X_\E U - X_{-\E}^T y + \lambda W &= 0 \\
\sign(U) = z_\E,\ W &\in (-1, 1).
\end{align*}
Solving for $U$ and $W$, we see that the selection event can be written 
\begin{equation}
\{ (\hat\E, \hat z_{\hat\E}) = (\E, z_\E) \} = \left\{ \begin{pmatrix} A_0(\E, z_\E) \\ A_1(\E, z_\E) \end{pmatrix} y < \begin{pmatrix} b_0(\E, z_\E) \\ b_1(\E, z_\E) \end{pmatrix}\right\}
\label{eq:enet_A_b}
\end{equation}
where $A_0$, $A_1$, $b_0$, and $b_1$ are the same as in Proposition \ref{prop:A_b}, except replacing $(X_\E^T X_\E)^{-1}$, which appears in the expressions through $P_\E$ and $(X_\E^T)^\dagger$, by the ``damped'' version $(X_\E^T X_\E + \gamma I)^{-1}$.

Having rewritten the selection event in the form \eqref{eq:enet_A_b}, we can once again apply the framework of Section \ref{sec:truncated-gaussian-test} to obtain a test for the elastic net conditional on this event.

\subsection{Alternative norms as test statistics}

In Section \ref{sec:goodness} we used the test statistic
$$
T_{\infty} = \|X_{-\hat{\E}}^T(I-P_{\hat\E})y\|_{\infty}
$$
and its conditional distribution on $\{(\hat\E, \hat z_{\hat\E}) = (\E, z_\E)\}$ to test whether we had missed
any large partial correlations in using $\hat{\E}$ as the estimated active set. If we have indeed missed
some variables in $\E$ there is no reason to suppose that the mean of
$X_{-\E}^T(I-P_{\E})y$ is sparse; hence the $\ell_{\infty}$ norm may not be the best norm to use as a test statistic.

In principle, we could have used virtually any norm, as long as we can say something about the distribution of this
norm conditional on  $\{(\hat\E, \hat z_{\hat\E}) = (\E, z_\E)\}$. Problems of this form are considered in \cite{kacrice}.
For example, if we consider the quadratic
$$
T_2 = \|X_{-\E}^T(I-P_{\E})y\|_{2}
$$
the general approach in \cite{kacrice} derives the conditional distribution of $T_2$ conditioned on
$$
\eta^*_2 = \argmax_{\|\eta\|_2 \leq 1} \eta^T(X_{-\E}^T(I-P_{\E})y).
$$
In general, this distribution will be a $\chi^2$ subject to random truncation as in Section \ref{sec:truncated-gaussian-test} (see the group lasso examples
in \cite{kacrice}). Adding the constraints encoded by $\{(\hat\E, \hat z_{\hat\E}) = (\E, z_\E)\}$ affects only the random truncation $[\V^-,\V^+]$.

\subsection{Estimation of $\sigma^2$}
\label{sec:estimate:sigma}

As noted above, all of our results rely on a reliable estimate of $\sigma^2$.  While there are several 
approaches to estimating $\sigma^2$ in the literature, the truncated Gaussian theory described in this work
itself provides a natural estimate.

Suppose the linear model is correct ($\mu=X\beta^0$). Then, on the event $\{\hat{\E}=\E, \hat{E} \supset S\}$, which we assume, the residual 
$$
(I-P_{\E})y
$$
is a (multivariate) truncated Gaussian with mean $\zeros$, with law 
$$
\Pp_{C,\sigma^2}(B) = \Pp(Z \in B | Z \in C), \qquad Z \sim N(\zeros, \sigma^2 I).
$$
As $\sigma^2$ , one obtains a one-parameter exponential family with density
$$
\frac{d\Pp_{C,\sigma^2}}{dz} = e^{-\alpha \|z\|^2_2 - \Lambda_{C}(\alpha)} 1_{C}(z)
$$
and natural parameter $\alpha = \sigma^2/2$. On the event $\{ (\hat\E, \hat z_{\hat\E}) = (\E, z_\E) \}$, we set 
$$C = \left\{y: A(\E,z_{\E})y \leq b(\E,z_{\E}) \right\},$$
and then choose $\alpha$ (or equivalently, $\sigma^2$) to satisfy the score equation
\begin{align}
\Ee_{C, \hat\sigma^2}(\|Z\|^2_2) = \|(I-P_{\E})y\|^2_2.
\label{eq:sigma}
\end{align}
This amounts to a maximum likelihood estimate of $\sigma^2$.
The expectation on the left is generally impossible to do analytically, but there exist fast
algorithms for sampling from $\Pp_{C,\sigma^2}$, c.f. \cite{truncnorm1,truncnorm2}. A rough outline of a
naive version of such algorithms is to pick a direction such as $e_i$ one of the coordinate axes. Based on the current state of
$Z$, draw a new entry for the $Z_i$ from the appropriate univariate truncated normal determined
from the cutoffs described in Section \ref{sec:truncated-gaussian-test}. We repeat this procedure to evaluate the expectation on the left, and use gradient descent to find $\hat \sigma^2$.

\subsection{Composite Null Hypotheses}

In Section \ref{sec:lasso}, we considered hypotheses of the form $H_0: \eta_{j^\star}^T \mu = 0$, which said that the partial correlation of the variables in $-\E$ with $y$, adjusting for the variables in $\E$, was exactly 0. This may be unrealistic, and in practice, we may want to allow some tolerance for the partial correlation.

We consider testing instead the \emph{composite} hypothesis 
\begin{equation}
\label{eq:composite}
H_0: | \eta_{j^\star}^T\mu | \leq \delta_0.
\end{equation}
The following result characterizes a test for $H_0$.

\begin{proposition}
The test which rejects when  $F_{\delta_0,\ \sigma^2||\eta_{j^\star}||^2}^{[\V^-, \V^+]}(\eta^T y) > 1-\alpha$ is exact level $\alpha$.
\end{proposition}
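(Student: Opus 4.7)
The plan is to reduce the composite null to its worst-case point $\eta_{j^\star}^T\mu = \delta_0$ using the monotone likelihood ratio (MLR) property of the truncated Gaussian family, and then apply Theorem \ref{thm:truncated-gaussian-pivot} at that point. Let me abbreviate $\eta = \eta_{j^\star}$, $\tau^2 = \sigma^2\|\eta\|^2$, and suppress the conditioning on $\{(\hat\E,\hat z_{\hat\E})=(\E,z_\E), (j^\star,s_{j^\star})=(j,s)\}$; everything that follows takes place conditionally on this event, equivalently on $\{Ay\le b\}$ for the matrix $A$ and vector $b$ given by Proposition \ref{prop:expanded_constraints}.

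First I would record the distributional fact from Lemma \ref{lem:conditional}: conditional on the selection event and on $(\V^+, \V^-) = (v^+, v^-)$, the random variable $\eta^T y$ is distributed as $TN(\eta^T\mu, \tau^2, v^-, v^+)$. For fixed truncation limits $(v^-, v^+)$ and fixed $\tau^2$, the family $\{TN(m, \tau^2, v^-, v^+) : m \in \R\}$ is a one-parameter natural exponential family with sufficient statistic $\eta^T y$, and hence has MLR in $\eta^T y$. This is the key monotonicity already invoked in the confidence-interval construction of Section \ref{sec:lasso} and spelled out in Appendix \ref{appendix:monotone}. The consequence I need is that for every fixed $x$, the map
\[
m \;\mapsto\; F_{m,\tau^2}^{[v^-,v^+]}(x)
\]
is strictly decreasing; equivalently, $-\partial_m F_{m,\tau^2}^{[v^-,v^+]}(x) = \tau^{-2}\cov_m(\mathbb{1}[\eta^Ty\le x],\,\eta^Ty) \ge 0$ since the two functions are oppositely monotone.

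Next I would combine this with Theorem \ref{thm:truncated-gaussian-pivot}. At the boundary point $\eta^T\mu = \delta_0$ the pivot gives
\[
\Pp_\mu\!\left(F_{\delta_0,\tau^2}^{[\V^-,\V^+]}(\eta^Ty) > 1-\alpha \,\Big|\, Ay\le b\right) = \alpha,
\]
so the test attains level $\alpha$ at $\eta^T\mu = \delta_0$. For any $\mu$ in the composite null, $\eta^T\mu \le \delta_0$, so the MLR monotonicity yields, pointwise in $y$,
\[
F_{\delta_0,\tau^2}^{[\V^-(y),\V^+(y)]}(\eta^Ty) \;\le\; F_{\eta^T\mu,\,\tau^2}^{[\V^-(y),\V^+(y)]}(\eta^Ty),
\]
and integrating over $y$ together with another application of Theorem \ref{thm:truncated-gaussian-pivot} (now with the true mean $\eta^T\mu$) gives
\[
\Pp_\mu\!\left(F_{\delta_0,\tau^2}^{[\V^-,\V^+]}(\eta^Ty) > 1-\alpha\right) \;\le\; \Pp_\mu\!\left(F_{\eta^T\mu,\tau^2}^{[\V^-,\V^+]}(\eta^Ty) > 1-\alpha\right) = \alpha.
\]
Taking the supremum over $\mu$ with $|\eta^T\mu|\le\delta_0$ and observing that equality is attained at $\eta^T\mu = \delta_0$ shows the size is exactly $\alpha$. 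Finally, since the bound holds on every fixed selection event, it holds unconditionally as in Corollary \ref{cor:test}.

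I do not anticipate a real obstacle here: the MLR/monotonicity of $F_{m,\tau^2}^{[v^-,v^+]}(x)$ in $m$ is a standard exponential-family fact already used in the paper, and after that everything reduces to a pointwise dominance inequality plus Theorem \ref{thm:truncated-gaussian-pivot}. The only point that requires a moment of care is that the composite null is two-sided ($|\eta^T\mu|\le \delta_0$), but because the rejection region is one-sided in the direction of large $\eta^Ty$, the monotonicity automatically identifies $\eta^T\mu = \delta_0$ as the unique least-favorable configuration, so no separate argument for the $\eta^T\mu \ge -\delta_0$ side is needed.
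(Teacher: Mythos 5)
Your proof is correct and follows essentially the same route as the paper: both reduce the composite null to the least-favorable point $\eta_{j^\star}^T\mu=\delta_0$ via the monotonicity of $F_{\delta}$ in $\delta$ from Appendix \ref{appendix:monotone} (the paper phrases this as the infimum $T_{\delta_0}=\inf_{|\delta|\le\delta_0}F_\delta$ being attained at $\delta_0$), then apply Theorem \ref{thm:truncated-gaussian-pivot} at the true mean and check the reverse inequality at $\delta_0$ for exactness. The only blemish is a sign slip in your covariance identity (the covariance of $\mathbb{1}{\left[\eta^Ty\le x\right]}$ with $\eta^Ty$ is nonpositive, so $-\partial_m F$ equals \emph{minus} $\tau^{-2}$ times that covariance), which does not affect the argument since your stated conclusion that $F_{m,\tau^2}^{[v^-,v^+]}(x)$ is decreasing in $m$ is the correct one.
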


\begin{proof}
Let $\delta := \eta_{j^\star}^T \mu$. Define $\displaystyle T_{\delta_0} := \inf_{|\delta| \leq \delta_0} F_{\delta,\ \sigma^2||\eta_{j^\star}||^2}^{[\V^-, \V^+]}(\eta^T y)$. Then:
\begin{align*}
\text{Type I error} &:= \sup_{|\delta| \leq \delta_0} \Pp_\delta(T_{\delta_0} > 1-\alpha) \\
&\leq  \sup_{|\delta| \leq \delta_0} \Pp_\delta\left(F_{\delta,\ \sigma^2||\eta_{j^\star}||^2}^{[\V^-, \V^+]}(\eta^T y) > 1-\alpha\right)\\
&= \alpha 
\end{align*}
Next, we have that $T_{\delta_0} = F_{\delta_0,\ \sigma^2||\eta_{j^\star}||^2}^{[\V^-, \V^+]}(\eta^T y)$, i.e., the infimum is achieved at $\delta = \delta_0$, so calculating $T_{\delta_0}$ is a simple matter of evaluating $F_{\delta_0}$. This follows from the fact that $F_\delta$ is monotone decreasing in $\delta$ (c.f. Appendix \ref{appendix:monotone}). 

Finally, the Type I error is exactly $\alpha$ because the reverse inequality also holds:
\[ \text{Type I error} \geq  \Pp_{\delta_0}(T_{\delta_0} > 1-\alpha) = \alpha. \]
\end{proof}

Although the test is exact level $\alpha$, the significance level of a test for a composite null is a ``worst-case'' Type I error; for most values of $\mu$ such that $|\eta^T \mu| \leq \delta_0$, the Type I error will be less than $\alpha$, so the test will be conservative. Of course, what we lose in power, we gain in robustness to the assumption that $\eta^T \mu = 0$ exactly. 

%

\subsection{How long a lasso should you use?}
\label{sec:fwer}

Procedures for fitting the lasso, such as {\tt glmnet} \citep{friedman2010regularization}, solve \eqref{eq:lasso} for a decreasing sequence of $\lambda$ values starting from $\lambda_1 = \|X^T y\|_{\infty}$. The framework developed so far provides a means to decide when to stop along the regularization path, i.e., when the lasso has done enough ``fitting.''  In this section, we describe a path-wise testing procedure for the lasso,

The path-wise procedure is simple. At each value of $\lambda$:
\BNUM
\item Solve the lasso and obtain an active set $\hat{\E}_{\lambda}$ and signs $\hat{z}_{\hat{\E}_\lambda}$.
\item Test $H_{0, \lambda}:  X_{\hat E_{\lambda}}^T (I - P_{\hat E_{\lambda}})(\mu) = 0$ at level $\alpha$. Rather than being conditional on only $(\hat\E_\lambda, \hat z_{\hat\E_\lambda})$, this test is conditional on the entire sequence of active sets and signs $\{(\hat\E^m, \hat z^m) = (\E^m, z^m)\}$, as we describe below.
\ENUM
As $\lambda$ decreases, we expect to reject the null hypotheses as the fit improves and stop once the first null hypothesis has been accepted.

To understand the properties of this procedure, we formalize it as a multiple testing problem. For each value $\lambda_1, ..., \lambda_m$, we test $H_{0, \lambda_i}$. We test these hypotheses sequentially and stop after the first hypothesis has been accepted. Implicitly, this means that we accept all the remaining hypotheses.

Our next result shows that this procedure controls the family-wise error rate (FWER) at level $\alpha$. Let $V$ denote that number of false rejections. Then FWER is defined as $\Pp(V \geq 1)$. The practical implication of this result is the model selected by this procedure will be larger than the true model with probability $\alpha$.

\begin{proposition}
The path-wise testing procedure controls FWER at level $\alpha$.
\end{proposition}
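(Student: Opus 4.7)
The plan is to reduce the FWER to the Type~I error of a single test along the path, and then invoke the conditional level-$\alpha$ guarantee established for the goodness-of-fit test in Corollary~\ref{cor:test}, with the conditioning enlarged to include the entire lasso path.

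First, let $i^* = \min\{i : H_{0,\lambda_i} \text{ is true}\}$, with $i^*=\infty$ if no such index exists. I would argue that
\[
\{V \geq 1\} = \{\text{procedure rejects } H_{0,\lambda_{i^*}}\},
\]
and in particular $V = 0$ whenever $i^* = \infty$. The reasoning is short: for $i < i^*$ the hypothesis $H_{0,\lambda_i}$ is false, so rejections at those steps are not false rejections; and because the procedure stops at the first acceptance, if it accepts $H_{0,\lambda_{i^*}}$ then every subsequent hypothesis is also accepted, yielding no false rejections. Hence a false rejection can occur only at step $i^*$, and conversely whenever the procedure reaches step $i^*$ and rejects there, we have $V \geq 1$. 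This gives $\Pp(V\geq 1) \leq \Pp(\text{reject at step }i^*)$.

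Next I would control $\Pp(\text{reject at step }i^*)$ by conditioning on the \emph{entire} sequence of active sets and signs $\mathcal{S} = \{(\hat\E_{\lambda_j}, \hat z_{\hat\E_{\lambda_j}}) = (\E^j, z^j)\}_{j=1}^m$. The key observation is that $\mathcal{S}$ is a finite intersection of the affine events described in Proposition~\ref{prop:A_b}, hence itself of the form $\{\tilde Ay \leq \tilde b\}$ for a stacked matrix $\tilde A$ and vector $\tilde b$. Once we condition on $\mathcal{S}$ (and, as in Proposition~\ref{prop:expanded_constraints}, on the realized $(j^*,s^*)$ used to build $\eta_{j^*}$ at step $i^*$), the index $i^*$ is deterministic and the conditioning set is still a polytope. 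Therefore Theorem~\ref{thm:truncated-gaussian-pivot} applies verbatim to $\eta_{j^*}^T y$, using $\V^-,\V^+$ recomputed from $\tilde A,\tilde b$, so that under $H_{0,\lambda_{i^*}}$ the pivot $F_{0,\sigma^2\|\eta_{j^*}\|^2}^{[\V^-,\V^+]}(\eta_{j^*}^T y)$ is $\unif(0,1)$ conditional on $\mathcal{S}\cap\{(j^*,s^*)=(j,s)\}$. Corollary~\ref{cor:test} then yields
\[
\Pp\bigl(\text{reject at step }i^* \,\big|\, \mathcal{S},\,(j^*,s^*)=(j,s),\,H_{0,\lambda_{i^*}}\bigr) \leq \alpha.
\]

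Finally I would marginalize: summing over all possible sequences $\mathcal{S}$ (and over $(j,s)$) gives $\Pp(\text{reject at step }i^*) \leq \alpha$, hence $\Pp(V\geq 1)\leq \alpha$. The main obstacle I anticipate is the bookkeeping in step~2: verifying that enlarging the conditioning from $\{(\hat\E,\hat z)=(\E,z)\}$ to the full-path event $\mathcal{S}$ does not break the pivotal property. This is really just the observation that intersecting the conditioning polytope with additional halfspaces only shrinks $[\V^-,\V^+]$ but preserves the truncated-Gaussian law of $\eta_{j^*}^T y$, since the extra affine constraints involve $y$ through quantities that remain independent of $\eta_{j^*}^T y$ (Lemma~\ref{lem:conditional}). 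Once that point is settled, the FWER bound follows immediately.
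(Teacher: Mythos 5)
Your proposal is correct and follows essentially the same route as the paper: identify the first index $i^*$ at which the null is true, observe that a false rejection can only occur by rejecting $H_{0,\lambda_{i^*}}$ (since the procedure stops at the first acceptance and all earlier nulls are false), condition on the full sequence of active sets and signs so that the conditioning event is a stacked polytope to which Theorem~\ref{thm:truncated-gaussian-pivot} applies, and then marginalize. The paper presents the partition over path sequences first and the reduction to step $i^*$ second, whereas you reverse the order, but the substance is identical.
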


\begin{proof}

Let $\hat\E^m$ and $\hat z^m$ denote the complete sequence of active sets and signs at $\lambda_1,\ldots,\lambda_m$, i.e.,
\begin{align*}
\hat\E^m &= \{\hat\E_{\lambda_1},\dots,\hat\E_{\lambda_m}\} \\
\hat z^m &= \{\hat z_{\hat\E_{\lambda_1}}, \dots, \hat z_{\hat\E_{\lambda_m}}\}.
\end{align*}

We seek to control the family-wise error rate (FWER) when testing the hypotheses $H_{0,\lambda_1},\dots,H_{0,\lambda_m}$, i.e., $\mathbb{P}(V \ge 1)$. We partition the space over all possible sequences $\hat\E^m$ and $\hat z^m$:
$$
\Pp(V \ge 1) =\sum_{(\E^m, z^m)} \Pp\left( V \ge 1\ \big|\ (\hat\E^m, \hat{z}^m) = (\E^m, z^m)\right) \Pp\left((\hat\E^m, \hat{z}^m) = (\E^m, z^m)\right).
$$
Since $\sum_{(\E^m, z^m)} \Pp\left((\hat\E^m, \hat{z}^m) = (\E^m, z^m)\right)=1$, we can ensure $\fwer\le\alpha$ by ensuring
$$
\Pp\left( V \ge 1\ \big|\ (\hat\E^m, \hat{z}^m) = (\E^m, z^m)\right) \le \alpha\text{ for any } (\E^m, z^m).
$$

Let $\lambda_{k}$ denote the first $\lambda_i$ for which $H_{0, \lambda_i}$ is true. Then the event $V \ge 1$ is equivalent to the event that we reject $H_{0,\lambda_{k}}$ because the preceding hypotheses $H_{0,\lambda_1},\dots,H_{0,\lambda_{k-1}}$ are all false so we cannot make a false discovery before the $k^\text{th}$ hypothesis. Thus
$$
\Pp\left( V \ge 1\ \big|\ (\hat\E^m, \hat{z}^m) = (\E^m, z^m)\right) = \Pp\left(\text{reject }H_{0,\lambda_k}\ \big|\ (\hat\E^m, \hat{z}^m) = (\E^m, z^m)\right).
$$
Therefore, we can control FWER at level $\alpha$ by ensuring 
$$
\Pp\left(\text{reject }H_{0,\lambda}\ \big|\ (\hat\E^m, \hat z^m) = (\E^m, z^m) \right) \le \alpha
$$
for each $\lambda \in \{\lambda_1,\dots,\lambda_k\}$. 
\end{proof}

To perform a test of $H_{0,\lambda}$ conditioned on $\{(\hat\E^m, \hat z^m) = (\E^m, z^m)\}$, we apply the framework of Section \ref{sec:truncated-gaussian-test}. Let 
$$
\{ A(\E_i,s_i)y < b(\E_i,s_i) \}
$$
be the affine constraints that characterize the event $\{(\hat{\E}_{\lambda_i},\hat{z}_{\lambda_i}) = (\E_i, z_i)\}$ from Proposition \ref{prop:A_b}. The event $\{(\hat\E^m, \hat z^m) = (\E^m, z^m)\}$ is equivalent to the intersection of all of these constraints:
\begin{align*}
\underbrace{\BMAT A(\E_1, z_1) \\ \vdots \\ A(\E_m, z_m)\EMAT}_{A(\E^m, z^m)}y < \underbrace{\BMAT b (\E_1, z_1)\\ \vdots \\ b(\E_m, z_m)\EMAT}_{b(\E^m, z^m)}.
\end{align*}
Now Theorem \ref{thm:truncated-gaussian-pivot} applies, and we can obtain the usual pivot as a test statistic.

\section{Conclusion}

We have described a method for making inference about $\eta^T \mu$ in the linear model based on the lasso estimator, where $\eta$ is chosen adaptively after model selection. The confidence intervals and tests that we propose are conditional on $\{ (\hat\E, \hat z_{\hat\E}) = (\E, z_\E) \}$. In contrast to existing procedures on inference for the lasso, we provide a pivot whose conditional distribution can be characterized exactly (non-asymptotically). This pivot can be used to derive confidence intervals and hypothesis tests based on lasso estimates anywhere along the solution path, not necessarily just at the knots of the LARS path as in \citet{lockhart2012significance}. Finally, our test is computationally simple: the quantities required to form the test statistic are readily available from the solution of the lasso.


\section{Appendix}

%
%

\subsection{Monotonicity of $F$}
\label{appendix:monotone}

\begin{lemma}
Let $F_{\mu}(x) := F_{\mu, \sigma^2}^{[a,b]}(x)$ denote the cumulative distribution function of a truncated Gaussian random variable, as defined as in \eqref{eq:U}. Then $F_\mu(x)$ is monotone decreasing in $\mu$. 
\end{lemma}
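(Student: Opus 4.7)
The plan is to recognize that for fixed $\sigma^2, a, b$, the truncated normal law is a one-parameter natural exponential family in $\mu$ with sufficient statistic $T(z) = z$. Concretely, its density can be written
\[
p_\mu(z) = \frac{1}{Z(\mu)}\exp\!\left(\frac{\mu z}{\sigma^2}\right)\exp\!\left(-\frac{z^2}{2\sigma^2}\right)\mathbf{1}[a \le z \le b],
\]
where $Z(\mu)$ is the normalizing constant. For any $\mu_1 < \mu_2$ the likelihood ratio
\[
\frac{p_{\mu_2}(z)}{p_{\mu_1}(z)} = \frac{Z(\mu_1)}{Z(\mu_2)}\exp\!\left(\frac{(\mu_2-\mu_1)z}{\sigma^2}\right)
\]
is strictly increasing in $z$, so the family has strict monotone likelihood ratio (MLR) in $z$. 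This is the key structural observation.

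From here I would invoke the standard Karlin--Rubin consequence of MLR: if $\{p_\theta\}$ has MLR in $T$, then $\Expect_\theta[g(T)]$ is non-decreasing in $\theta$ for every non-decreasing $g$, and hence $\Pp_\theta(T > x)$ is non-decreasing in $\theta$ for each $x$. Applied to $T(z) = z$ and $g(z) = \mathbf{1}[z > x]$, this yields $1 - F_{\mu_2}(x) \ge 1 - F_{\mu_1}(x)$, i.e., $F_{\mu_2}(x) \le F_{\mu_1}(x)$, which is the desired monotonicity.

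A fully self-contained alternative is to differentiate directly. Using the exponential-family identity $\partial_\mu \Expect_\mu[h(Z)] = \sigma^{-2}\cov_\mu(h(Z), Z)$ with $h(z) = \mathbf{1}[z \le x]$ gives
\[
\sigma^2\,\frac{\partial}{\partial \mu} F_\mu(x) = \cov_\mu\!\bigl(\mathbf{1}[Z \le x],\, Z\bigr) \le 0,
\]
because the indicator is a non-increasing function of $Z$ and hence negatively correlated with $Z$ (formally by the FKG/Chebyshev sum inequality for monotone functions of a single random variable). The inequality is strict whenever $a < x < b$, yielding strict monotonicity in the non-trivial range.

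I expect no real obstacle: both routes are short and textbook. The only boundary care needed is the degenerate cases $x \le a$ and $x \ge b$, where $F_\mu(x)$ is constant in $\mu$ (equal to $0$ or $1$), so the monotonicity holds trivially (non-strictly).
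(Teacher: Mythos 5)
Your proposal is correct and rests on exactly the same key observation as the paper's proof: the truncated Gaussian is a natural exponential family in $\mu$ and therefore has monotone likelihood ratio. The only difference is that the paper proves the step from MLR to the ordering of the CDFs by hand (integrating the MLR inequality first over $x_0 \in (-\infty,x)$ and then over $x_1 \in (x,\infty)$ to get $(1-F_{\mu_1}(x))F_{\mu_0}(x) > (1-F_{\mu_0}(x))F_{\mu_1}(x)$), whereas you invoke the standard Karlin--Rubin consequence (or, in your alternative, the covariance identity); both are valid and essentially equivalent.
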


\begin{proof}
First, the truncated Gaussian distribution with CDF $F_{\mu} := F_{\mu, \sigma^2}^{[a,b]}$ is a natural exponential family in $\mu$, since it is just a Gaussian with a different base measure. Therefore, it has monotone likelihood ratio in $\mu$. That is, for all $\mu_1 > \mu_0$ and $x_1 > x_0$:
$$ \frac{f_{\mu_1}(x_1)}{f_{\mu_0}(x_1)} > \frac{f_{\mu_1}(x_0)}{f_{\mu_0}(x_0)}$$
where $f_{\mu_i} := dF_{\mu_i}$ denotes the density. (Instead of appealing to properties of exponential families, this property can also be directly verified.)

This implies 
\begin{align*}
f_{\mu_1}(x_1) f_{\mu_0}(x_0) &> f_{\mu_1}(x_0) f_{\mu_0}(x_1) & x_1 > x_0.
\end{align*}
Therefore, the inequality is preserved if we integrate both sides with respect to $x_0$ on $(-\infty, x)$ for $x < x_1$. This yields:
\begin{align*}
\int_{-\infty}^{x} f_{\mu_1}(x_1) f_{\mu_0}(x_0)\,dx_0 &> \int_{-\infty}^{x} f_{\mu_1}(x_0) f_{\mu_0}(x_1)\,dx_0 & x < x_1 \\
f_{\mu_1}(x_1) F_{\mu_0}(x) &> f_{\mu_0}(x_1) F_{\mu_1}(x) & x < x_1
\end{align*}
Now we integrate both sides with respect to $x_1$ on $(x, \infty)$ to obtain:
\begin{align*}
(1 - F_{\mu_1}(x)) F_{\mu_0}(x)  &> (1 - F_{\mu_0}(x)) F_{\mu_1}(x)
\end{align*}
which establishes $F_{\mu_0}(x) > F_{\mu_1}(x)$ for all $\mu_1 > \mu_0$.
\end{proof}

\section{Lasso Screening Property}
\label{appendix:screening}
In this section, we state some sufficient conditions that guarantee $\text{support}(\beta^0) \subset \text{support}(\hat \beta)$. Let $\E=\text{support}(\beta^0)$ and $\hat \E \subset\text{support}(\hat \beta)$. The results of this section are well known in the literature and can be found in \cite[Chapter 2.5]{buhlmann2011statistics}.
\begin{definition}[Restricted Eigenvalue Condition]
Restricted eigenvalue condition requires that $X$ satisfy
$$
\norm{Xv}_2 ^2 \ge m \norm{v}^2 _2 
$$
for all $v \in \{ x: \norm{x_{-\E}}_1 \le 3 \norm{x_{\E}} \}$.
\end{definition}
\begin{definition}[Beta-min Condition]
The beta-min condition requires that for all $j \in \E$,
$$
|\beta^0 _j|> \beta_{min}.
$$
\end{definition}
\begin{theorem}
Let $y= X\beta^0 +\epsilon$, where $\epsilon$ is subgaussian with parameter $\sigma$, and $\hat \beta$ be the solution to \ref{eq:lasso} with $\lambda=4 \sigma\sqrt{\frac{\log p}{n}}$. Assume that $X$ satisfies the restricted eigenvalue condition, $\beta^0$ satisfies the beta-min condition with $\beta_{min} =\frac{8\sigma}{m} \sqrt{\frac{s\log p}{n}}$ , and $X$ is column normalized, $\norm{x_j}_2 \le \sqrt{n}$. Then $M \subset \hat \E$.
\end{theorem}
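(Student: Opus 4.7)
The plan is to follow the standard high-dimensional recipe for support recovery: first derive an $\ell_2$ error bound on $\hat\beta - \beta^0$ on a good event on which the noise is well controlled, then combine this bound with the beta-min condition to rule out the lasso ever zeroing out a truly nonzero coefficient.

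The first step is to control the noise by working on the event $\cA := \left\{\norm{X^T\epsilon}_\infty \le \lambda/2\right\}$ (with the appropriate scaling dictated by the objective in \eqref{eq:lasso}). Since each component $(X^T\epsilon)_j$ is mean-zero subgaussian with parameter at most $\sigma\norm{x_j}_2 \le \sigma\sqrt{n}$, a standard subgaussian tail bound followed by a union bound over $j = 1,\dots,p$ shows that $\Pp(\cA) \to 1$; the constant $4$ in the prescription of $\lambda$ is sized precisely to give this margin.

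The second step is the basic inequality combined with the cone analysis. Since $\hat\beta$ minimizes the lasso objective,
\[
\tfrac{1}{2}\norm{y - X\hat\beta}_2^2 + \lambda\norm{\hat\beta}_1 \;\le\; \tfrac{1}{2}\norm{y - X\beta^0}_2^2 + \lambda\norm{\beta^0}_1.
\]
Substituting $y = X\beta^0 + \epsilon$, letting $v := \hat\beta - \beta^0$, and bounding the cross term $v^T X^T\epsilon$ via H\"older on the event $\cA$, a routine rearrangement (using $\norm{\beta^0}_1 - \norm{\hat\beta}_1 \le \norm{v_\E}_1 - \norm{v_{-\E}}_1$) produces
\[
\tfrac{1}{2}\norm{Xv}_2^2 \;\le\; \tfrac{3\lambda}{2}\norm{v_\E}_1 - \tfrac{\lambda}{2}\norm{v_{-\E}}_1.
\]
Non-negativity of the left side yields the cone condition $\norm{v_{-\E}}_1 \le 3\norm{v_\E}_1$, so $v$ lies in the set on which the restricted eigenvalue condition is assumed. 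Applying RE to get $m\norm{v}_2^2 \le \norm{Xv}_2^2$ and then chaining with Cauchy--Schwarz via $\norm{v_\E}_1 \le \sqrt{s}\norm{v}_2$ delivers
\[
\norm{\hat\beta - \beta^0}_2 \;\le\; \frac{C\sigma}{m}\sqrt{\frac{s\log p}{n}}
\]
for a small universal constant $C$.

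Finally, the beta-min assumption $\beta_{\min} = \tfrac{8\sigma}{m}\sqrt{s\log p/n}$ is, modulo constants, exactly this $\ell_2$ bound, so $\norm{\hat\beta-\beta^0}_\infty \le \norm{\hat\beta-\beta^0}_2 \le \beta_{\min}$ provided $C \le 8$. Then for every $j \in \E$ the triangle inequality gives $|\hat\beta_j| \ge |\beta^0_j| - |\hat\beta_j - \beta^0_j| > \beta_{\min} - \beta_{\min} = 0$, using the strict beta-min inequality $|\beta^0_j| > \beta_{\min}$. Hence $j \in \mathrm{support}(\hat\beta) \subset \hat\E$, yielding $\E \subset \hat\E$. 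The main obstacle in writing out a fully formal proof is not conceptual; it is the careful bookkeeping of constants to ensure the $C$ produced by the noise, basic-inequality, and RE chain is at most the $8$ in the beta-min assumption, which depends on the precise normalization in \eqref{eq:lasso} and in the subgaussian tail bound.
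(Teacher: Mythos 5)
Your proof is correct and follows essentially the same route as the paper: establish the $\ell_2$ error bound $\norm{\hat\beta-\beta^0}_2 \lesssim \frac{\sigma}{m}\sqrt{s\log p/n}$ and combine it with the beta-min condition to conclude that no $j\in\E$ can be dropped. The only difference is that the paper invokes this bound as a black box from Corollary 2 of \citet{negahban2012unified} rather than rederiving it via the basic inequality, cone condition, and restricted eigenvalue argument, and it is similarly cavalier about the exact constant (8 versus the 12 your chain of inequalities would literally produce), so the constant bookkeeping you flag is no worse than in the paper itself.
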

\begin{proof}
From \cite[Corollary 2]{negahban2012unified}, 
$$
\big\|\hat \beta- \beta^0\big\|_2 \le \frac{8\sigma}{m} \sqrt{\frac{s\log p}{n}}.
$$
Assume that their is a $j$ such that $j \in \E$, but $j \not \in \hat \E$. We must have
\begin{align*}
\big\|\hat \beta - \beta^0\big\|_2 > |\beta^0 _j | \ge \beta_{min}=\frac{8\sigma}{m} \sqrt{\frac{s\log p}{n}}.
\end{align*}
This is a contradiction, so for all $j \in \E$ we have $j \in \hat \E$.
\end{proof}

Next we provide a geometric proof of Lemma \ref{lem:conditional} which will be useful in the next chapter.
\begin{lemma}
	\label{lem:conditional-geometric}
	The conditioning set can be rewritten in terms of $\eta^T y$ as follows:
	\[  \{Ay \leq b\} = \{\V^-(y) \leq \eta^T y \leq \V^+(y), \V^0(y) \geq 0 \} \]
	where 
	\begin{align}
	\alpha &= \frac{A\Sigma\eta}{\eta^T\Sigma\eta} \label{eq:alpha} \\
	\V^- = \V^-(y) &= \max_{j:\ \alpha_j < 0} \frac{b_j - (Ay)_j + \alpha_j\eta^T y}{\alpha_j} \label{eq:v_minus-geometric} \\
	\V^+ = \V^+(y) &= \min_{j:\ \alpha_j > 0} \frac{b_j - (Ay)_j + \alpha_j\eta^T y}{\alpha_j}. \label{eq:v_plus-geometric} \\
	\V^0 = \V^0(y) &= \min_{j:\ \alpha_j = 0} b_j - (Ay)_j \label{eq:v_zero}
	\end{align}
	Moreover, $(\V^+, \V^-, \V^0)$ are independent of $\eta^T y$.
\end{lemma}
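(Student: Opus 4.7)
The plan is to give an explicit geometric decomposition of $y$ into two parts, one that carries the value of $\eta^T y$ and one that is independent of it, and then rewrite the polytope constraint $Ay \le b$ in these coordinates. Concretely, I would introduce
\[
c := \frac{\Sigma\eta}{\eta^T\Sigma\eta}, \qquad z := y - c\,\eta^T y,
\]
so that $y = c\,\eta^T y + z$. This is the analogue of ``projecting $y$ orthogonally to $\eta$'' in the metric induced by $\Sigma$. Since $(y,\eta^T y)$ is jointly Gaussian and
\[
\cov(z,\eta^T y) = \cov(y,\eta^T y) - c\,\var(\eta^T y) = \Sigma\eta - \frac{\Sigma\eta}{\eta^T\Sigma\eta}\cdot \eta^T\Sigma\eta = 0,
\]
it follows that $z$ and $\eta^T y$ are independent. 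This is the step that replaces the conditional-expectation computation in the earlier proof of Lemma \ref{lem:conditional}.

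Next I would substitute the decomposition into $Ay \le b$. Using $Ay = Az + \alpha\,\eta^T y$ with $\alpha = A\Sigma\eta/(\eta^T\Sigma\eta)$, the constraint reads $Az + \alpha\,\eta^T y \le b$, or equivalently, row by row,
\[
\alpha_j\,\eta^T y \;\le\; b_j - (Az)_j.
\]
Solving for $\eta^T y$ according to the sign of $\alpha_j$ yields an upper bound when $\alpha_j > 0$, a lower bound when $\alpha_j < 0$, and a constraint free of $\eta^T y$ when $\alpha_j = 0$. Taking the tightest of each family of constraints produces exactly the quantities $\V^+(y),\V^-(y),\V^0(y)$ in the statement, once one rewrites $b_j-(Az)_j = b_j - (Ay)_j + \alpha_j\,\eta^T y$ to match the formulas \eqref{eq:v_minus-geometric}--\eqref{eq:v_plus-geometric} as given.

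For the independence claim, I would simply observe that $b_j-(Ay)_j+\alpha_j\,\eta^T y = b_j-(Az)_j$ depends on $y$ only through $z$, and hence $(\V^+,\V^-,\V^0)$ is a deterministic function of $z$ alone. Since $z$ is independent of $\eta^T y$ by the first step, so is $(\V^+,\V^-,\V^0)$.

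The only real subtlety—and the step I would think of as the ``main obstacle''—is making sure the decomposition $y = c\,\eta^T y + z$ is carried out in the correct (covariance-weighted) geometry: choosing $c = \Sigma\eta/(\eta^T\Sigma\eta)$ is precisely what makes $z$ uncorrelated with $\eta^T y$ without further assumptions on $\Sigma$. Once that is fixed, the rest is bookkeeping and the result drops out; there is no need to compute any normalizing constants or truncated-normal densities, which is why this geometric proof will be useful in the next chapter.
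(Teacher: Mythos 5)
Your proof is correct and is essentially the paper's argument: the projection $z = y - \frac{\Sigma\eta}{\eta^T\Sigma\eta}\,\eta^T y$ is exactly the covariance-weighted orthogonal decomposition the paper obtains by whitening with $\Sigma^{-1/2}$, your zero-covariance computation is the same independence fact, and the row-by-row rearrangement of $Az + \alpha\,\eta^T y \le b$ into lower, upper, and sign-free constraints matches the paper's derivation of $\V^-,\V^+,\V^0$. No gaps.
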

\begin{proof}
	Although the proof of Lemma \ref{lem:conditional-geometric} is elementary, the geometric picture gives more intuition as to why $\V^+$ and $\V^-$ are independent of $\eta^T  y$. Since $\Sigma$ is assumed known, let $\tilde y = \Sigma^{-\frac{1}{2}} y$ so that $\tilde y \sim N(\Sigma^{-\frac{1}{2}}\mu, I)$. We can decompose $\tilde y$ into two independent components: a one-dimensional component along $\tilde\eta := \Sigma^{\frac{1}{2}}\eta$ and a $(p-1)$-dimensional component orthogonal to $\tilde\eta$:
	\[ \tilde y = \tilde y_{\tilde\eta} + \tilde y_{\tilde\eta^\perp}. \]
	From Figure \ref{fig:polytope}, it is clear that the extent of the set $\{A y \leq b\} = \{A\Sigma^{\frac{1}{2}}\tilde y \leq b \}$ (i.e., $\V^+$ and $\V^-$) along the direction $\tilde\eta$ depends only on $\tilde y_{\tilde\eta^\perp}$ and is hence independent of $\tilde\eta^T \tilde y = \eta^T  y$. We present a geometric derivation below.
	The values $\V^+$ and $\V^-$ are the maximum and minimum possible values of $\tilde\eta^T \tilde y $, holding $\tilde y _{\tilde\eta^\perp}$ fixed, while remaining inside the polytope $A\Sigma^{\frac{1}{2}}\tilde y  \leq b$. Writing $\tilde y  = c\tilde\eta + \tilde y _{\tilde\eta^\perp}$ where $c$ is allowed to vary, $\V^+$ and $\V^-$ are the optimal values of the optimization problems:
	\begin{align*}
	\text{max. / min.}&\ \ \ \tilde\eta^T \tilde y  = c||\tilde\eta||_2^2 \\
	\text{subject to}&\ \ \ A\Sigma^{\frac{1}{2}}(c\tilde\eta + \tilde y _{\tilde\eta^\perp}) \leq b
	\end{align*}
	Rewriting this problem in terms of the original variables $\eta$ and $y$, we obtain:
	\begin{align*}
	\text{max. / min.}&\ \ \ c (\eta^T \Sigma \eta) \\
	\text{subject to}&\ \ \ c(A\Sigma\eta) \leq b - Ay + \frac{A\Sigma\eta}{\eta^T \Sigma\eta}\eta^Ty
	\end{align*}
	Since $c$ is the only free variable, we see from the constraints that the optimal values $\V^+$ and $\V^-$ are precisely those given in \eqref{eq:v_minus} and $\eqref{eq:v_plus}$.
\end{proof}

\chapter{Condition-on-Selection Method}
\label{chap:selection-additional}
In the previous chapter, we focused on selective inference for the sub-model coefficients selected by the lasso by conditioning on the event that lasso selects a certain subset of variables.  However the procedure we developed is not restricted to the sub-model coefficients, nor is it restricted to the lasso. In \cite{lee2014exact}, we used the same Condition-on-Selection (COS) method for  marginal screening, orthogonal matching pursuit, and screening+lasso variable selection methods.

In this chapter, we first discuss some definitions and formalism, which will help us understand how to generalize the results of Chapter \ref{chap:sel-lasso} to other selection procedures. In Section \ref{sec:formalism}, we see that the COS method results in tests that control the selective type 1 error. Then in Section \ref{sec:other-affine-selection}, we show how the selection events for several variable selection methods such as marginal screening, and orthogonal matching pursuit are affine in the response $y$. For non-affine selection events, we propose a general algorithm in Section \ref{sec:general-method}. We then describe inference for the full model regression coefficients, provide a method for FDR control and establish the asymptotic coverage property in the high-dimensional setting in Section \ref{sec:full-model}. Finally in Section \ref{sec:knockoff}, we show how to construct selectively valid confidence intervals for regression coefficients selected by the knockoff filter \citep{foygel2014controlling}.
\section{Formalism}
\label{sec:formalism}
This section closely follows the development in \cite{fithian2014optimal}, which in turn uses the COS method developed in earlier works \cite{lee2014exact,lee2013exact,taylor2014post}. Our main result of this section is to show that tests constructed using the COS method control selective type 1 error. This is the original motivation of \cite{lee2014exact,lee2013exact} for designing tests with the COS method.

We start off by defining a valid test in the classical setting.
\begin{definition}[Valid test]
Let $H \in \cH$ be a hypothesis, and $\phi(y; H) \in \{0,1\}$ is a test of $H$ meaning we reject $H$ if $\phi(y;H)=1$. $\phi(y;H)$ is a valid test of $H$ if
\[
\Pp_{F} \left( \phi(y;H) =1\right) \le \alpha
\]
for all  $F$ null with respect to $H$, meaning $F \in N_H$ , where $N_H$ is the set of distributions null with respect to $H$.

\end{definition}

For selective inference, there is an analog of type 1 error.
\begin{definition}[Selective Type 1 Error ]
	$\phi(y, H(y))$ is a valid test of the hypothesis $H(y)$ if it controls the selective type 1 error,
	\[
	\Pp_F \left(\phi(y;H(y)) = 1 \mid  F \in N_{H(y)} \right) \le \alpha.
	\]	
	\label{def:sel-type-1}
\end{definition}

The framework laid out in Chapter \ref{chap:sel-lasso} proposes controlling the selective type 1 error via the COS method. As we showed in the case of confidence intervals for regression coefficients and goodness-of-fit tests, by conditioning on the lasso selection event, we are guaranteed to control the conditional type 1 error by design, and this implies the control of the unconditional type 1 error. We now show that this is not specific to the lasso; in fact controlling the conditional type 1 error always controls the unconditional type 1 error in Definition \ref{def:sel-type-1}.

\begin{definition}
	Let $\cH$ be the hypothesis space. The selection algorithm $H : \reals^n \to \cA$ maps data to hypothesis. This induces the selection event $S(H) = \{ y:  H(y) =H\}$.
\end{definition}

The following definition motivates the construction in Equation \eqref{eq:coverage}.
\begin{definition} [Condition-on-Selection method]
	A test $ \phi$ is constructed via the Condition-on-Selection (COS) method if for all $F \in N_{H_i}$
		\begin{equation}
	\Pp_F \left(  \phi(y;H_i)=1 \mid H(y) =H_i  \right).
	\label{eq:conditional-type-1}
	\end{equation}
	This means that $\phi(y; H_i)$ controls the conditional type 1 error rate.
\end{definition}

By a simple generalization of the argument in Theorem \ref{thm:unconditional-pivot}, we show that using the COS method to design a conditional test \ref{eq:conditional-type-1} implies control of the selective type 1 error \ref{def:sel-type-1}.
\begin{theorem}[Selective Type 1 Error control]
	A test constructed using the COS method, \ie\ satisfies \eqref{eq:conditional-type-1}, controls the selective type 1 error meaning
		$$\Pp_F \left(\phi(y;H(y)) = 1 \mid  F \in N_{H(y)} \right) \le \alpha.$$
\end{theorem}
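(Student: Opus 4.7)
The plan is to mimic the marginalization argument used in Theorem \ref{thm:unconditional-pivot}, but now partitioning over the (discrete) range of the selection map $H(\cdot)$ instead of over the active set of the lasso. The hypothesis $H(y)$ is random, but once we condition on the event $\{H(y) = H_i\}$ the hypothesis becomes the fixed hypothesis $H_i$, so the COS guarantee \eqref{eq:conditional-type-1} is directly applicable on each piece of the partition.

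Concretely, fix any null distribution $F$ and note that the event $\{F \in N_{H(y)}\}$ is simply the event that the selected hypothesis lies in the (deterministic) collection $\cH_F := \{H_i \in \cH : F \in N_{H_i}\}$. I would expand the joint probability as
\begin{align*}
\Pp_F\bigl(\phi(y;H(y)) = 1,\,F \in N_{H(y)}\bigr)
&= \sum_{H_i \in \cH_F} \Pp_F\bigl(\phi(y;H_i) = 1,\,H(y) = H_i\bigr) \\
&= \sum_{H_i \in \cH_F} \Pp_F\bigl(\phi(y;H_i) = 1 \mid H(y) = H_i\bigr)\,\Pp_F(H(y) = H_i).
\end{align*}
For each $H_i \in \cH_F$, by construction $F$ is null with respect to $H_i$, so the COS property \eqref{eq:conditional-type-1} bounds each conditional factor by $\alpha$. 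Pulling $\alpha$ out and recognizing that what remains is $\Pp_F(F \in N_{H(y)})$, I obtain
\[
\Pp_F\bigl(\phi(y;H(y)) = 1,\,F \in N_{H(y)}\bigr) \le \alpha\,\Pp_F\bigl(F \in N_{H(y)}\bigr),
\]
from which the claim follows by dividing both sides by $\Pp_F(F \in N_{H(y)})$ (assuming it is positive; if it is zero there is nothing to prove since the conditional probability is vacuous).

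There is essentially no technical obstacle here; the result is a bookkeeping identity. The only subtle point worth articulating clearly is that the set $\cH_F$ over which the sum runs is a property of $F$ (not of the realization $y$), so once $F$ is fixed the decomposition into summands is deterministic and the COS assumption applies verbatim to every term. If $\cH$ is uncountable one would replace the sum by an integral against the law of $H(y)$ under $F$; the argument is otherwise unchanged.
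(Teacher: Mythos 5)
Your proof is correct and follows essentially the same route as the paper's: partition over the realizations of $H(\cdot)$, observe that conditioning on $\{H(y)=H_i\}$ with $F\in N_{H_i}$ reduces each term to the COS guarantee, and sum the weights. The only cosmetic difference is that you bound the joint probability and divide by $\Pp_F(F\in N_{H(y)})$ at the end, whereas the paper carries the conditioning on $\{F\in N_{H(y)}\}$ through every line; your handling of the degenerate case $\Pp_F(F\in N_{H(y)})=0$ is a welcome extra.
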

\begin{proof}
	\begin{align*}
&	\Pp_F ( \phi(y; H(y)) =1 \mid F \in N_{H(y)}) = \sum_{i=1}^{|\cH|} \Pp ( \phi(y; H(y))=1, H(y)=H_i)\mid F \in N_{H(y)})\\
&	= \sum_{i: F \in N_{H_i} } \Pp( \phi(y; H(y))=1, H(y) = H_i \mid F \in N_{H(y)})+\\
&\sum_{i: F \not\in N_{H_i} } \Pp( \phi(y; H(y))=1, H(y) = H_i \mid F \in N_{H(y)})\\
&	= \sum_{i: F \in N_{H_i}} \Pp( \phi(y; H(y))=1, H(y) = H_i \mid F \in N_{H(y)})+0\\
&=\sum_{i: F \in N_{H_i}} \Pp( \phi(y; H(y))=1 \mid F \in N_{H(y)} , H(y) =H_i) \Pp (  H(y) =H_i \mid F \in N_{H(y)}) \\
&= \sum_{i: F \in N_{H_i} } \Pp ( \phi( y; H_i) =1 \mid H(y) = H_i) \Pp ( H(y) = H_i \mid F \in N_{H(y)} ) \\
&= \sum_{i: F \in N_{H_i}} \alpha \Pp ( H(y) =H_i \mid F \in N_{H(y)}) \\
&\le  \alpha.
	\end{align*}
where all of the previous probabilities are with respect to the distribution $F$. The first equality is the law of total probability, and the second equality is breaking the sum over disjoint sets. Since $ F \not \in N_{H_i} $, implies   $ \Pp_F ( H(y) =H_i \mid F \in N_{H(y)} )=0$, so $ \sum_{i: F \not \in H_i} \Pp ( \phi(y; H(y))= 1 , H(y) = H_i \mid F \in N_{H(y)} ) =0 $, which establishes the third equality. The fourth equality is the definition of conditional probability, and the fifth follows from noticing that $\{ F \in N_{H(y)} , H(y) =H_i , F \in N_{H_i} \} = \{ H(y) =H_i , F \in N_{H_i}\}$. The sixth equality uses the COS property of $\phi$: $ \Pp_F ( \phi(y; H_i) =1 \mid H(y) =H_i)  \le \alpha$ for any $F \in N_{H_i}$. Finally, the result follows since probabilities sum to less than or equal to 1.
\end{proof}
This result allows us to interpret the tests constructed via the COS method as unconditionally valid.  
\section{Marginal Screening, Orthogonal Matching Pursuit, and other Variable Selection methods}
\label{sec:other-affine-selection}
In lieu of the developments of the previous section, it is clear that the COS method developed for affine selection events in Chapter \ref{chap:sel-lasso} is not specific to the lasso. By changing the variable selection method, we are simply changing the selection algorithm and the selection event. The main work is in characterizing the selection event $\{ y : \hat M(y) =M\}$, the event that the variable selection methods chooses the subset $M$. In this section, we characterize the selection event for several variable selection methods: marginal screening, orthogonal matching pursuit (forward stepwise), non-negative least squares, and marginal screening+lasso. 
\subsection{Marginal Screening}
\label{sec:marg-selection-event}
In the case of marginal screening, the selection event $\hat \E(y)$ corresponds to the set of selected variables $\hat \E$ and signs $s$: 
\begin{align}
&\hat \E(y) =\left\{y: \text{sign}(x_i ^T y) x_i ^T y > \pm x_j^T y \text{ for all $i \in \hat \E$ and $j \in \hat \E^c$} \right \} \nonumber\\
&=\left\{ y:\hat s_i x_i^T y > \pm x_j ^T y \text{ and } \hat s_i x_i^T y \ge 0 \text{ for all $i \in \hat \E$ and $j \in \hat \E^c$}\right\} \nonumber\\
&=\left\{y: A(\hat \E,\hat s)y \le 0\right \}
\label{eq:A-b-defn}
\end{align}
for some matrix $A(\hat \E,\hat s)$.

\subsection{Marginal screening + Lasso}
The marginal screening+Lasso procedure was introduced in \cite{fan2008sure} as a variable selection method for the ultra-high dimensional setting of $p=O(e^{n^k})$. Fan et al. \cite{fan2008sure} recommend applying the marginal screening algorithm with $k= n-1$, followed by the Lasso on the selected variables. This is a two-stage procedure, so to properly account for the selection we must encode the selection event of marginal screening followed by Lasso. This can be done by representing the two stage selection as a single event. Let $(\hat \E_m, \hat s_m)$ be the variables and signs selected by marginal screening, and the $(\hat \E_L, \hat z_L)$ be the variables and signs selected by Lasso. In Proposition 2.2 of \cite{lee2013exact}, it is shown how to encode the Lasso selection event $(\hat \E_L, \hat z_L)$ as a set of constraints $\{ A_L y \le b_L\}$ \footnote{The Lasso selection event is with respect to the Lasso optimization problem after marginal screening.}, and in Section \ref{sec:marg-selection-event} we showed how to encode the marginal screening selection event $(\hat \E_m, \hat s_m)$ as a set of constraints $\{A_m y \le b_m\}$. Thus the selection event of marginal screening+Lasso can be encoded as $\{ A_L y \le b_L, A_m y \le 0\}$. 

\subsection{Orthogonal Matching Pursuit}
Orthogonal matching pursuit (OMP) is a commonly used variable selection method \footnote{OMP is sometimes known as forward stepwise regression.}. At each iteration, OMP selects the variable most correlated with the residual $r$, and then recomputes the residual using the residual of least squares using the selected variables. The description of the OMP algorithm is given in Algorithm \ref{alg:omp}.

\begin{algorithm}
	\caption{Orthogonal matching pursuit (OMP)}
	\begin{algorithmic}[1]
		\State \textbf{Input:} Design matrix $X$, response $y$, and model size $k$.
		\State \textbf{for}: $i=1$ to $k$
		\State \quad  $p_i = \arg \max_{j=1,\ldots,p} |r_i ^T x_j|$.
		\State \quad  $\hat S_i =\cup_{j=1}^i  \ \{p_i\}$.
		\State \quad $r_{i+1} = (I- X_{\hat S_i} X_{\hat S_i} ^{\dagger} ) y$.
		\State \textbf{end for}
		\State \textbf{Output}: $\hat S :=\{p_1, \ldots, p_k\}$, and $\hat \beta_{\hat S} = (X_{\hat S} ^T X_{\hat S} )^{-1} X_{\hat S}^T y$
	\end{algorithmic}
	\label{alg:omp}
\end{algorithm}

The OMP selection event as a set of linear constraints on $y$.
\begin{align*}
\hat \E(y) &=\left\{y:\text{sign}(x_{p_i}^T r_i  )x_{p_i}^T r_i > \pm x_{j}^T r_i\text{, for all } j\neq p_i \text{ and all $i \in [k]$}   \right\}\\
&=\{ y:\hat s_i x_{p_i}^T(I-X_{\hat \E_{i-1}}X_{\hat \E_{i-1}}^\dagger)y  > \pm x_{j}^T(I-X_{\hat \E_{i-1}}X_{\hat \E_{i-1}}^\dagger)y \text{ and }     \\
& \hat s_i x_{p_i}^T(I-X_{\hat \E_{i-1}}X_{\hat \E_{i-1}}^\dagger)y>0 \text{, for all }  j\neq p_i \text{, and all $i \in [k]$ }\}\\
&=\left\{y: A(\hat \E_1,\ldots, \hat \E_k,\hat s_1,\ldots, \hat s_k)  \le b(\hat \E_1,\ldots, \hat \E_k,\hat s_1,\ldots, \hat s_k)\right\}.
\end{align*}
The selection event encodes that OMP selected a certain variable and the sign of the correlation of that variable with the residual, at steps $1$ to $k$. The primary difference between the OMP selection event and the marginal screening selection event is that the OMP event also describes the order at which the variables were chosen. The marginal screening event only describes that the variable was among the top $k$ most correlated, and not whether a variable was the most correlated or $kth$ most correlated.

\subsection{Nonnegative Least Squares}
Non-negative least squares (NNLS) is a simple modification of the linear regression estimator with non-negative constraints on $\beta$:
\begin{align}
\arg \min_{\beta: \beta \ge 0} \frac{1}{2} \norm{y-X\beta}^2 .
\label{eq:nnls}
\end{align}
Under a positive eigenvalue conditions on $X$, several authors \cite{slawski2013non,meinshausen2013sign} have shown that NNLS is comprable to the Lasso in terms of prediction and estimation errors. The NNLS estimator also does not have any tuning parameters, since the sign constraint provides a natural form of regularization. NNLS has found applications when modeling non-negative data such as prices, incomes, count data. Non-negativity constraints arise naturally in non-negative matrix factorization, signal deconvolution, spectral analysis, and network tomography; we refer to \cite{chen2009nonnegativity} for a comprehensive survey of the applications of NNLS.

We show how our framework can be used to form exact hypothesis tests and confidence intervals for NNLS estimated coefficients. The primal dual solution pair $(\hat \beta, \hat \lambda)$ is a solution iff the KKT conditions are satisfied, 
\begin{align*}
\hat \lambda_i :=-x_i^T (y-X\hat\beta) &\ge 0 \text{ for all i}\\
\hat \beta &\ge 0.
\end{align*}
Let $\hat \E =\{i: -x_i^T (y-X\hat\beta)=0\}$. By complementary slackness $\hat \beta_{-\hat \E} =0$, where $-\hat \E$ is the complement to the ``active" variables $\hat \E$ chosen by NNLS. Given the active set we can solve the KKT equation for the value of $\hat \beta_{\hat \E}$,
\begin{align*}
-X_{\hat \E}^T (y- X\hat \beta) =0\\
-X_{\hat \E} ^T (y- X_{\hat \E} \hat \beta_{\hat \E}) =0\\
\hat \beta_{\hat \E} = X_{\hat \E} ^\dagger y,
\end{align*}
which is a linear contrast of $y$. The NNLS selection event is 
\begin{align*}
\hat \E(y)&=\{y: X_{\hat \E} ^T (y-X\hat \beta) =0,\ X_{-\hat \E}^T (y-X\hat \beta) >0\}\\
&=\{y: X_{\hat \E}^T (y-X\hat \beta) \ge 0, -X_{\hat \E}^T (y-X \hat \beta) \ge 0, X_{-\hat \E}^T (y-X\hat \beta) >0\}\\
&=\{y: X_{\hat \E}^T (I- X_{\hat \E} X_{\hat \E}^\dagger)y \ge 0, -X_{\hat \E}^T(I- X_{\hat \E} X_{\hat \E}^\dagger)y \ge 0, X_{-\hat \E}^T (I- X_{\hat \E} X_{\hat \E}^\dagger)y >0\}\\
&=\{y: A(\hat \E) y \le 0\}.
\end{align*}
The selection event encodes that for a given $y$ the NNLS optimization program will select a subset of variables $\hat \E(y)$. 

\subsection{Logistic regression with Screening}
The focus up to now has been on the linear regression estimator with additive Gaussian noise. In this section, we discuss extensions to conditional MLE (maximum likelihood estimator) such as logistic regression. This section is meant to be speculative and non-rigorous; our goal is only to illustrate that these tools are not restricted to the linear regression. A future publication will rigorously develop the inferential framework for conditional MLE.

Consider the logistic regression model with loss function and gradient below,
\begin{align*}
\ell(\beta) &= \frac{1}{n}\left(-y^T X\beta +\sum_{i=1}^n \log ( 1+e^{\beta^T x_i})\right)\\
\nabla\ell (\beta) &= -\frac{1}{n}X^T (y- s(X\beta)),
\end{align*}
where $s(X\beta)$ is the sigmoid function applied entrywise. By taylor expansion, the empirical estimator is given by
\begin{align*}
\hbeta &\approx \beta^0 - \left(\nabla^2\ell(\beta^0)\right)^{-1} \nabla \ell(\beta^0)\\
&=\beta^0 + \left(\nabla^2\ell(\beta^0)\right)^{-1}X^T (y- s(X\beta))
\end{align*}

By the Lindeberg CLT (central limit theorem), $\frac{1}{\sqrt{n}}X^T (y- s(X\beta^0))\to\cN(0,\Expect(\nabla^2 \ell(\beta^0)))$, and thus $w:=\frac{1}{\sqrt{n}} X^T y$ converges to a Gaussian. The marginal screening selection procedure can be expressed as a set of inequalities $\{ \text{sign}(w_i)w_i \ge \pm w_j, i \in \hat \E, j \in \hat \E^c \}= \{Aw \le b\}$. Thus conditional on the selection, $w$ is approximately a constrained Gaussian. The framework in Chapter \ref{sec:truncated-gaussian-test} and \ref{sec:formalism} can be applied to $w$, instead of $y$, to derive hypothesis tests and confidence intervals for the coefficients of logistic regression. The resulting test and confidence intervals should be correct asymptotically. However, this is the best we can expect for logistic regression and other conditional MLE because even in the classical case the Wald test is only asymptotically correct. For other conditional maximum likelihood estimator similar reasoning applies, since the  gradient $\nabla \ell(\beta)$ converges in distribution to a Gaussian.

For logistic regression with $\ell_1$ regularizer, $
 \frac{1}{n}\left(-y^T X\beta +\sum_{i=1}^n \log ( 1+e^{\beta^T x_i})\right) +\lambda \norm{\beta}_1$ the selection event cannot be analytically described. However, the COS method can still be applied using the general method presented in Chapter \ref{sec:general-method}.
\section{General method for Selective inference}
\label{sec:general-method}
In this section, we describe a computationally-intensive algorithm for finding selection events, when they are not easily described analytically. 

We first review the construction used in Chapter \ref{chap:sel-lasso} for affine selection events. Let $\Pind (y)= (I - \frac{\Sigma\eta\eta^T}{\eta^T \Sigma \eta}) y $. Recall that $y$ can be decomposed into two independent components $y= (\eta^Ty) \frac{\Sigma \eta}{\eta^T \Sigma \eta }  +(I-\frac{\Sigma\eta\eta^T}{\eta^T \Sigma \eta})y$. This is derived by defining $ \tilde y= \Sigma^{-1/2} y  \sim \cN(0,  I)$ and $\tilde \eta = \Sigma^{1/2} y $. $\tilde y $ can  be orthogonally decomposed as $\tilde y =(\tilde \eta^T \tilde y )  \frac{\tilde \eta}{\norm{\tilde \eta}} +( I - \frac{\tilde \eta \tilde \eta^T}{\norm{\tilde \eta}^2}) \tilde y $, so 
\begin{align*}
y = \Sigma^{1/2} \tilde y =(\tilde \eta^T \tilde y )  \frac{\Sigma^{1/2}\tilde \eta}{\norm{\tilde \eta}} +\Sigma^{1/2}( I - \frac{\tilde \eta \tilde \eta^T}{\norm{\tilde \eta}^2}) \tilde y \\
= (\eta^Ty )\frac{\Sigma \eta}{\eta^T \Sigma \eta} + (I - \frac{\Sigma \eta\eta^T}{\eta^T \Sigma \eta}) y .
\end{align*}

Lemma \ref{lem:conditional} shows that 
\[
\eta^Ty | \{ Ay \le b, \Pind y=y_0 \} \sim TN(\eta^T \mu , \sigma^2 \norm{\eta}^2, \V^-(y_0, A,b), \V^+ (y_0, A,b)).
\]

We can generalize this result to arbitrary selection events, where the selection event is not explicitly describable. Recall that $H$ is a selection algorithm that maps $\reals^n \to \mathcal{H}$. The selection event is $S(H) = \{ x: H(x) = H\}$, so $y \in S(H) \text{ iff } H(y) =H$. In the upcoming section, it will be convenient to work with the definition using $H(\cdot)$, since the set $ S(H)$ cannot be described, but the function $H(\cdot)$ can be efficiently computed. Thus we can only verify if a point $ y \in S(H)$.

The following Theorem is a straightforward generalization of Theorem \ref{thm:truncated-gaussian-pivot} from polyhedral sets to arbitrary sets $S$.
\begin{theorem}[Arbitrary selection events]
Let $y$ be a multivariate truncated normal, so $L(y) \propto e(-\frac1{2} (y-\mu)^T \Sigma^{-1}(y-\mu) ) \ones(y \in S(H))$. Then 
\begin{align*}
\eta^Ty | \{y \in S(H), \Pind y =y_0 \} \overset{d}{=} TN(\eta^T\mu, \eta^T \Sigma \eta, U(H,y_0, \frac{\Sigma \eta}{\eta^T \Sigma \eta}))
\end{align*}
and  $U(H,y_0,\frac{\Sigma \eta}{\eta^T \Sigma \eta}))=\{c: H(y_0+c \frac{\Sigma \eta}{\eta^T \Sigma \eta})=H \}$.

\label{thm:arbitrary-selection-event}
\end{theorem}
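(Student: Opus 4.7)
The plan is to generalize the geometric proof of Lemma \ref{lem:conditional-geometric} by replacing the polyhedral selection set $\{Ay \le b\}$ with an arbitrary $S(H)$; the structural fact driving both arguments is that conditioning on $\Pind y = y_0$ fixes every coordinate of $y$ except the one parallel to $\Sigma\eta$, so the selection event collapses to a condition on a single scalar.

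First, I will reuse the orthogonal decomposition
\[
y \;=\; (\eta^T y)\,\frac{\Sigma\eta}{\eta^T\Sigma\eta} \;+\; \Pind y
\]
derived at the start of Section \ref{sec:general-method}, and note that under the unconstrained Gaussian reference law $N(\mu,\Sigma)$ the scalar $\eta^T y$ and the vector $\Pind y$ are jointly Gaussian with zero covariance (since $\eta^T \Pind = \eta^T - \eta^T \Sigma\eta\eta^T/(\eta^T\Sigma\eta) = 0$), hence independent. Consequently, in the unconstrained model the conditional law of $\eta^T y$ given $\Pind y = y_0$ is $N(\eta^T\mu,\eta^T\Sigma\eta)$, with no dependence on $y_0$.

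Next, I will parametrize the fiber over $y_0$. Writing $v := \Sigma\eta/(\eta^T\Sigma\eta)$ and $c := \eta^T y$, the decomposition becomes $y = y_0 + c v$, so varying $c$ traces out the one-dimensional affine line along which $\Pind y = y_0$. Pulling back the selection event along this parametrization yields
\[
\{y \in S(H)\} \cap \{\Pind y = y_0\} \;=\; \{c \in U(H, y_0, v)\},
\]
by the very definition $U(H,y_0,v) = \{c : H(y_0+cv) = H\}$. No convexity, connectedness, or polyhedral structure of $S(H)$ is used here; we are only changing coordinates. Combining the two observations via Bayes' rule, the density of $c$ under the truncated model, conditional on $\Pind y = y_0$, is proportional to the unconstrained $N(\eta^T\mu, \eta^T\Sigma\eta)$ density restricted to $U(H,y_0,v)$, which is exactly the definition of $TN(\eta^T\mu, \eta^T\Sigma\eta, U(H,y_0,v))$.

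The main obstacle is bookkeeping rather than depth: the polyhedral proof exploited the closed-form interval $U = [\V^-,\V^+]$ read off from $A,b$, whereas here $U$ can be an arbitrary subset of $\reals$ and must be described operationally through queries to the blackbox $H(\cdot)$. The cost therefore shifts from analysis to computation, e.g.\ approximating the normalizing constant of the one-dimensional truncated density by probing $H$ at candidate points along the line $\{y_0 + c v\}$, but the distributional identity itself is immediate from independence and the change of variables. A secondary subtlety is ensuring $\eta^T\Sigma\eta > 0$ so that $v$ is well defined and the decomposition is non-degenerate; this is implicit in the statement of the theorem.
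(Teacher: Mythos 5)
Your proposal is correct and follows essentially the same route as the paper: decompose $y$ into the independent pieces $\eta^T y$ and $\Pind y$, parametrize the fiber $\{\Pind y = y_0\}$ as $y_0 + c\,\Sigma\eta/(\eta^T\Sigma\eta)$ so that the selection event pulls back to $\{c \in U(H,y_0,\cdot)\}$ by definition of $U$, and conclude that the conditional law is the unconstrained $N(\eta^T\mu,\eta^T\Sigma\eta)$ restricted to $U$. The paper's own proof uses exactly these two ingredients (independence of $\eta^Ty$ and $\Pind y$, plus the fiber rewriting of the conditioning set), so there is nothing substantive to add.
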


\begin{proof}
We know that $\eta^Ty | \{y \in S(H), \Pind y =y_0 \} \overset{d}{=} TN(\eta^T\mu, \norm{\eta}^2, U(H,y_0,\frac{\Sigma \eta}{\eta^T \Sigma \eta}))$, so $\eta^Ty |\{y \in S(H), \Pind y =y_0 \}$ is a univariate normal truncated to some region $U$. The goal is to check that $U(H,y_0,\Sigma \eta)=\{c: H(y_0+c \frac{\Sigma \eta}{\eta^T \Sigma \eta})=H \}$. 
We can describe the conditioning set as
\begin{align*}
&\{y:y \in S(H), \Pind y = y_0  \} = \{ y:H(y) =H, \Pind y =y_0 \}\\
&= \{y=y_0 + c  \frac{\Sigma \eta}{\eta^T \Sigma \eta} :H(y_0 + c  \frac{\Sigma \eta}{\eta^T \Sigma \eta} )=H, \Pind y = y_0\}\\
&= \{ y= y_0 + c   \frac{\Sigma \eta}{\eta^T \Sigma \eta} : \Pind y = y_0, c \in U(H,y_0,c  \frac{\Sigma \eta}{\eta^T \Sigma \eta}) \}\\
&= \{ y : \Pind y = y_0, \eta^Ty  \in U(H,y_0,c  \frac{\Sigma \eta}{\eta^T \Sigma \eta}) \}
\end{align*}
Thus we have that 
\begin{align*}
\left[\eta^Ty | \{y \in S(H), \Pind y =y_0 \}\right] &\overset{d}{=} \left[\eta^Ty | \eta^Ty \in U(H,y_0,\frac{\Sigma \eta}{\eta^T \Sigma \eta}), \Pind y =y_0 \}\right] \\
&\overset{d}{=} \left[\eta^Ty | \eta^Ty \in  U(H,y_0,\frac{\Sigma \eta}{\eta^T \Sigma \eta}) \}\right]\\
& \sim TN(\eta^T\mu , \eta^T \Sigma \eta,  U(H,y_0,\frac{\Sigma \eta}{\eta^T \Sigma \eta}))
\end{align*}
where the second equality follows from independence of $\eta^Ty$ and $\Pind y$.
\end{proof}

\subsection{Computational Algorithm for arbitrary selection algorithms}
In this section, we study the case of where the set $S(H)$ cannot be explicitly described, but the function $H(\cdot)$ is easily computable. Our goal will be to approximately compute the p-value $F(\eta^Ty; \eta^T \mu, U(H,y_0,\frac{\Sigma \eta}{\eta^T \Sigma \eta}))$, where $F$ is the cdf of $TN(\eta^T\mu , \eta^T \Sigma \eta,  U(H,y_0,\frac{\Sigma \eta}{\eta^T \Sigma \eta})$.

Algorithm \ref{alg:approx-pvalue} is the primary contribution of this section. This allows us to compute the pivotal quantity for algorithms $H(\cdot)$ with difficult to describe selection events. This includes linear regression with the SCAD/MCP regularizers, and logistic regression with $\ell_1$-regularizer, where the selection events do not have analytical forms.

Let $\tilde \phi(z;  \nu,  \sigma) = \phi(\frac{z-\nu }{\sigma})$ be the pdf of a univariate truncated normal with mean $\nu$ and variance $\sigma^2$.
\begin{algorithm}
	\caption{Compute approximate p-value}    
	\label{alg:approx-pvalue}
\begin{algorithmic}
	\State {\bf Input: } Grid points $D=\{d_1, \ldots, d_n\}$ and empty set $C = \emptyset$
	\State {\bf Output: } Approximate p-value $p$
	 \ForAll{ $d_i \in D$}
	 \State Compute $H_i= H(y_0+d \frac{\Sigma \eta}{\eta^T \Sigma \eta})$.
	 \If {$H_i =H$}
	 \State $C= C \cup d_i$.
	 \EndIf
	 \EndFor
	
	\State {\bf Return: } \[
	p = \frac{\sum_{c\in C, c \le \eta^Ty}  \tilde \phi( c ;\gamma, \eta^T \Sigma \eta) }  {\sum_{c \in C} \tilde \phi( c ; \gamma, \eta^T \Sigma \eta)}
	\]
\end{algorithmic}
\end{algorithm}
Algorithm \ref{alg:approx-pvalue} gives an approximate p-value for the null hypothesis $H_0: \eta^T \mu = \gamma$. The advantage of this algorithm is it does not need an explicit description of the set $S$, nor the set $U$. It runs the selection algorithm $H(\cdot)$ at the grid points $d_i$, and determines if the point $y_0 + d \frac{\Sigma \eta}{\eta^T \Sigma \eta}$ is in the selection event. Then it approximates the CDF of the univariate truncated normal by a discrete truncated normal.

\begin{conjecture}
	Let $D_m$ be a set of grid points $2 m^2$ grid points that is equispaced on $[ -m,m]$.  Let $U \subset \reals$ be an open interval, and  $ p_m$ be the p-value from Algorithm \ref{alg:approx-pvalue} using $D_m$.
	We have 
	\[
	\lim_{m\to \infty} p_m = F(\eta^Ty;\gamma, U(H,y_0,\frac{\Sigma \eta}{\eta^T \Sigma \eta})).
	\]
\end{conjecture}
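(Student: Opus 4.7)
\medskip
\noindent\textbf{Proof plan.} The plan is to recognize $p_m$ as the CDF, evaluated at $\eta^Ty$, of a discrete probability measure $\mu_m$ supported on the grid $D_m\cap U$ that approximates the truncated normal $TN(\gamma,\eta^T\Sigma\eta,U)$, and then show this discrete measure converges to the truncated normal. Since $U$ is open, the indicator $\mathbf{1}_U$ is continuous Lebesgue-a.e., and the density $\tilde\phi(\,\cdot\,;\gamma,\eta^T\Sigma\eta)$ is bounded and integrable on $\R$, so the usual Riemann-sum machinery applies once the tails are controlled.

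First I would multiply the numerator and denominator of
\[
p_m \;=\; \frac{\sum_{c\in D_m\cap U,\; c\le \eta^Ty} \tilde\phi(c;\gamma,\eta^T\Sigma\eta)}{\sum_{c\in D_m\cap U} \tilde\phi(c;\gamma,\eta^T\Sigma\eta)}
\]
by the grid spacing $\Delta_m = 1/m$, which does not change the ratio. This exhibits the numerator and denominator as Riemann sums of
$g_1(c)=\tilde\phi(c;\gamma,\eta^T\Sigma\eta)\mathbf{1}_U(c)\mathbf{1}_{(-\infty,\eta^Ty]}(c)$ and $g_2(c)=\tilde\phi(c;\gamma,\eta^T\Sigma\eta)\mathbf{1}_U(c)$, respectively, on the equispaced partition of $[-m,m]$ of width $\Delta_m$. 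By Theorem \ref{thm:arbitrary-selection-event}, the set $C=\{c: H(y_0+c\,\Sigma\eta/\eta^T\Sigma\eta)=H\}\cap D_m$ generated by Algorithm \ref{alg:approx-pvalue} is exactly $D_m\cap U$, so the sums match.

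Next I would prove convergence of each Riemann sum to its Lebesgue integral. On any fixed compact interval $[-M,M]$, both $g_k$ are bounded by $1$ and continuous except on a finite set (the at most two endpoints of $U$ inside $[-M,M]$ and possibly $\eta^Ty$), so they are Riemann integrable on $[-M,M]$ and $\Delta_m\sum_{c\in D_m\cap[-M,M]} g_k(c)\to \int_{-M}^M g_k(c)\,dc$ as $m\to\infty$ for $k=1,2$. For the tails, Gaussian decay of $\tilde\phi$ gives, for any $\epsilon>0$, an $M$ such that $\int_{|c|>M}\tilde\phi<\epsilon$, and a direct comparison of the tail Riemann sum to the tail integral (using monotonicity of $\tilde\phi$ away from its mode) yields
\[
\Delta_m\sum_{c\in D_m,\,|c|>M}\tilde\phi(c;\gamma,\eta^T\Sigma\eta)\;<\;2\epsilon
\]
uniformly for all sufficiently large $m$. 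Combining the bounded and tail pieces, $\Delta_m\sum_{c\in D_m} g_k(c)\to \int_\R g_k(c)\,dc$ for $k=1,2$.

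Finally, since $U$ is a nonempty open interval, $\int_U\tilde\phi(c;\gamma,\eta^T\Sigma\eta)\,dc>0$, so I can divide the two limits and obtain
\[
\lim_{m\to\infty} p_m \;=\; \frac{\int_{U\cap(-\infty,\eta^Ty]}\tilde\phi(c;\gamma,\eta^T\Sigma\eta)\,dc}{\int_U \tilde\phi(c;\gamma,\eta^T\Sigma\eta)\,dc} \;=\; F(\eta^Ty;\gamma,U),
\]
which is the desired conclusion. The main obstacle is the unboundedness of the domain: because $D_m$ only covers $[-m,m]$, a textbook Riemann-sum statement over a fixed bounded interval does not apply directly, and the argument must be assembled from a bounded piece plus a uniformly controlled tail using Gaussian decay. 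A minor additional technicality is that a grid point may land on $\partial U$ or on $\eta^Ty$, but these contribute at most $O(\Delta_m)=O(1/m)\to 0$ to the properly scaled sums and hence do not affect the limit.
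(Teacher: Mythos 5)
The paper offers no proof of this statement: it is stated as a \emph{conjecture} and left open, so there is nothing to compare your argument against. Judged on its own, your Riemann-sum argument is sound and, under the stated hypothesis that $U$ is an open interval, it does establish the claim. The structure is right: multiplying numerator and denominator by the grid spacing $\Delta_m=1/m$ turns $p_m$ into a ratio of Riemann sums for $\tilde\phi\,\mathbf{1}_U\mathbf{1}_{(-\infty,\eta^Ty]}$ and $\tilde\phi\,\mathbf{1}_U$; each integrand is bounded with at most finitely many discontinuities, so the sums over any fixed compact window converge to the corresponding integrals, and the Gaussian decay of $\tilde\phi$ (together with its monotonicity outside a neighborhood of $\gamma$) controls the tail sums uniformly in $m$. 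The denominator's limit $\int_U\tilde\phi$ is strictly positive because $U$ is a nonempty open interval, so the ratio of limits is legitimate, and the identification $C=D_m\cap U$ is immediate from the definition of $U(H,y_0,\Sigma\eta/\eta^T\Sigma\eta)$. You also correctly flag the two genuine delicacies — the expanding domain $[-m,m]$ (which rules out quoting a fixed-interval Riemann-sum theorem) and grid points landing on $\partial U$ or at $\eta^Ty$, both of which contribute $O(1/m)$ after rescaling.

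Two small remarks. First, for finite $m$ the denominator can vanish ($D_m\cap U=\emptyset$), so $p_m$ is only well defined for $m$ large enough that the grid meets $U$; this is worth a sentence but is harmless for the limit. Second, your argument nowhere uses that $U$ is a single interval beyond ensuring $\int_U\tilde\phi>0$ and that $\mathbf{1}_U$ has finitely many discontinuities; the same proof goes through verbatim for any open $U$ with $\Pp(U)>0$ whose boundary is Lebesgue-null (e.g.\ a finite union of intervals), which is the case actually needed for the motivating applications (SCAD/MCP, $\ell_1$-logistic, knockoffs) where the one-dimensional selection set need not be connected. Stating the result at that level of generality would strengthen the conjecture at no extra cost.
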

\section{Inference in the full model}
\label{sec:full-model}
In Chapter \ref{chap:sel-lasso}, we focused on inference for the submodel coefficients $\beta^\star_{M}= X_M ^\dagger \mu$. In selective inference, the choice of the model $M$ is selected via an algorithm \eg\ the lasso, and the COS method constructed confidence intervals 
\[
\Pp\left( \beta^\star _{j,\hat M } \in C_j \right) = 1-\alpha.
\]

One possible criticism of the selective confidence intervals for submodel coefficients is the interpretability of the quantity $\beta^\star _{j,\hat M}$, since this is the population regression coefficient of variable $j$ within the model $\hat M$. The significance of variable $j$ depends on the choice of model meaning variable $j$ can be significant in model $M_1$, but not significant in $M_2$, which makes interpretation difficult. 

However, this is not an inherent limitation of the COS method. As we saw in the previous two sections, the COS method is not specific to the submodel coefficients. We simply need to change the space of hypothesis $\cH$ and the selection function $H$ to perform inference for other regression coefficients.

In many scientific applications, the quantity of interest is the regression coefficient within the full model $M=[1 \ldots p]$. We first discuss the case of $n\ge p$. Let us assume that $y \sim \cN(\mu, \sigma^2 I)$. In ordinary least squares , the parameter of interest is  $\beta^0 = X^\dagger \mu$, and a classical confidence interval guarantees 
\[
\Pp( \beta^0_j \in C_j ) = 1- \alpha
. \]
In the case of least squares after variable selection, we only want to make a confidence interval for the $j \in \hat M$, or variables selected by the lasso. This corresponds to inference for a subset $ \beta^0 _{\hat M } = E_{\hat M} \beta^0$, where $E_M$ selects the coordinates in $M$. The interpretation of $\beta^0_{\hat M} $ for $ j \in \hat M$ is clear;  this is the regression coefficient of the least squares coefficient restricted to the set selected by the lasso. 

For each coefficient $j\in \hat M$, Equation \eqref{eq:pivot-lasso} provides a valid p-value of the hypothesis $\beta^0_{j, \hat M} =\gamma$ ,
\begin{equation}
p_j = F_{\gamma,\ \sigma^2 ||\eta_j||_2^2}^{[\V^-, \V^+]}(\hat \beta_{j, \hat M}),
\label{eq:p-value}
\end{equation}
where $\eta_j = (X_{\hat\E}^T)^\dagger e_j$. 
By inverting, we obtain a selective confidence interval
\begin{equation}
\Pp\left( \beta^0_{j , \hat M} \in C_j \right)= 1-\alpha.
\end{equation}

\subsection{False Discovery Rate}
In this section, we show how to combine selective confidence intervals with the Benjamini-Yeuketieli procedure for FDR control. False discovery rate (FDR) is defined as,
$$
\Expect\left[ \frac{V}{R} \right],
$$
where $V$ is the number of incorrectly rejected hypotheses and $R$ is the total number of rejected hypotheses. We will restrict ourselves to the case of the well-specified linear model, $y=X\beta^0 +\epsilon$,  and $n\ge p$ with $X$ having full rank. In the context of linear regression, there is a sequence of hypotheses $H_{0,j} : \beta^0_j=0$ and a hypothesis is considered to be incorrectly rejected if $H_{0,j}$ is true, yet the variable is selected.

Given p-values, we can now apply the Benjamini-Yekutieli procedure \citep{benjamini2001control} for FDR control. Let $p_{(1)}\le p_{(2)}\le ... \le p_{(|\hat \E|)}$ be the order statistics, and $h_{|\hat \E|} = \sum_{i=1}^{|\hat \E|} \frac{1}{i}$. Let $k$ be
\begin{equation}
k=\max\left\{k: p_{(k)} \le \frac{k}{|\hat \E| h_{|\hat \E|}} \alpha\right\},
\label{eq:BY-rule}
\end{equation}
then reject $p_{(1)},\ldots, p_{(k)}$.
\begin{theorem}
	Consider the procedure that forms p-values using Equation \eqref{eq:p-value}, chooses $k$ via Equation \eqref{eq:BY-rule}, and rejects $p_{(1)}, \ldots,p_{(k)}$. Then FDR is controlled at level $\alpha$.
\end{theorem}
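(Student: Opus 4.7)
The plan is to reduce the claim to the classical Benjamini--Yekutieli (BY) theorem by first conditioning on the lasso selection event and then averaging. The key input supplied by the earlier developments is that the p-values in \eqref{eq:p-value} are selectively valid: under the null $\beta^{0}_{j}=0$, the conditional distribution of $p_{j}$ given the selection event $\{(\hat M,\hat z_{\hat M})=(M,z_{M})\}$ is uniform on $[0,1]$, by Theorem \ref{thm:truncated-gaussian-pivot} applied as in \eqref{eq:pivot-lasso}. No assumption is placed on the joint distribution of $\{p_{j}\}_{j\in M}$.

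First, I would fix a realization $(M,z_{M})$ and set $S=\{(\hat M,\hat z_{\hat M})=(M,z_{M})\}$. On this event the index set $\hat M$ is deterministic and equals $M$, so the BY rule \eqref{eq:BY-rule} becomes a data-adaptive threshold applied to the deterministic collection $\{p_{j}\}_{j\in M}$ with $m=|M|$ and $h_{m}=\sum_{i=1}^{m}1/i$. Let $M_{0}=\{j\in M:\beta^{0}_{j}=0\}$. Then by the previous paragraph, $\{p_{j}\}_{j\in M_{0}}$ are each (conditionally) $\unif(0,1)$-distributed, with arbitrary joint dependence among themselves and with $\{p_{j}\}_{j\in M\setminus M_{0}}$.

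Second, I would invoke the classical BY theorem conditionally on $S$. The BY procedure with threshold $\frac{k}{m h_{m}}\alpha$ controls FDR at level $\alpha$ for any joint distribution of the p-values, provided the null p-values are super-uniform; this robustness to arbitrary dependence is exactly why the BY correction $h_{m}$ appears in \eqref{eq:BY-rule}. Using the standard convention $V/R=0$ when $R=0$, this yields
\[
\Expect\!\left[\,\frac{V}{R}\;\Big|\;S\,\right] \le \alpha.
\]

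Third, I would integrate over the selection event using the law of total expectation:
\[
\fwer\text{-analog here is }\;\Expect\!\left[\frac{V}{R}\right]
=\sum_{(M,z_{M})}\Expect\!\left[\frac{V}{R}\;\Big|\;S\right]\Pp(S)
\le \alpha\sum_{(M,z_{M})}\Pp(S)=\alpha.
\]
The main conceptual subtlety, which is also the only real obstacle, is that the set of hypotheses being tested is itself random, determined by the selection. Conditioning on $(\hat M,\hat z_{\hat M})$ freezes this set and converts the problem into a standard multiple testing problem with a fixed index set, at which point the off-the-shelf BY guarantee applies. All the dependence between the $p_{j}$ (which arises through the shared truncation region $[\V^{-},\V^{+}]$) is absorbed by the arbitrary-dependence version of BY, so no further analysis of the joint law is needed.
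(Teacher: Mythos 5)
Your proof is correct and follows essentially the same route as the paper's: condition on the selection event so the hypothesis set is frozen and the null p-values are uniform, apply the arbitrary-dependence Benjamini--Yekutieli guarantee conditionally, and average over selection events. Your write-up is in fact more careful than the paper's (which conditions only on $j\in\hat\E$ and omits the explicit law-of-total-expectation step), but it is the same argument.
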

\begin{proof}
	Conditioned on the event that variable $j$ is in the lasso active set, $j \in \hat \E$, then $p_j$ is uniformly distributed among the null variables. Applying the Benjamini-Yekutieli procedure to the p-values $p_{(1)},\ldots, p_{(\hat \E)}$ guarantees FDR. The Benjamini-Yekutieli procedure allows for arbitrary dependence among the p-values, and only requires that the null p-values are uniformly distributed
\end{proof}


\subsection{Intervals for coefficients in full model when $n<p$}

In this section, we present a method for selective inference for coordinates of the full-model parameter $\beta^0$. We will assume the sparse linear model, namely, 
\[
y= X \beta^0 +\epsilon
\]
where $ \epsilon \sim \cN(0,\sigma^2$ and $\beta^0$ is $s$-sparse. 
Since $n<p$, we cannot use the method in the previous section since $\beta^0 \neq X^\dagger X \beta^0$.  Instead, we will construct a quantity $\beta^d$ that is extremely close to $\beta^0$ and show that $\beta^d _j = \eta_j ^T (X\beta^0) + h_j$. We do this by constructing a population version of the debiased estimator.

The debiased estimator presented in \cite{javanmard2013confidence,van2013asymptotically,zhang2011confidence} is
\begin{align*}
\hat \beta^d &= \hat \beta + \frac1n \hat \Theta X^T (y- X \hat \beta)\\
&=\frac1n \hat \Theta X^Ty+(I - \hat \Theta \hat \Sigma) \hat \beta\\
&=\frac1n \hat \Theta X^Ty+(I - \hat \Theta \hat \Sigma) \begin{bmatrix} 
\frac1n \hat \Sigma_{\hat M} ^{-1} X_{\hat M} ^T y - \lambda \hat \Sigma _{\hat M} ^{-1} s_{\hat M}\\ 0
\end{bmatrix}
\end{align*}
where $\hat \Sigma _{\hat M} := \frac 1n X_{\hat M} ^T X_{\hat M}$ and $\hat \Theta$ is an approximate inverse covariance that is the solution to 
\begin{align*}
\min& \sum_{j} \hat \Theta_j ^T \hat \Sigma \hat \Theta_j\\
\text{ subject to } & \norm{\hat \Sigma \hat \Theta - I }_\infty \le C \sqrt{\frac{ \log p }{n}}.
\end{align*}

Define the population quantity $\beta^d$ by replacing all occurrences of $y$ with $\mu$:
\begin{align}
&\beta^d ( M, s) :=\frac1n \hat \Theta X^T \mu+(I - \hat \Theta \hat \Sigma) \begin{bmatrix} 
\frac1n \hat \Sigma_{ M} ^{-1} X_{ M} ^T \mu - \lambda \hat \Sigma _{ M} ^{-1} s\\ 0
\end{bmatrix} \nonumber \\
&=\beta^0 +\frac1n \hat \Theta X^T \mu+(I - \hat \Theta \hat \Sigma) \begin{bmatrix} 
\frac1n \hat \Sigma_{ M} ^{-1} X_{ M} ^T \mu - \lambda \hat \Sigma _{ M} ^{-1} s-\beta^0_{M}\\ -\beta^0_{-M}
\end{bmatrix} \label{eq:betad-beta0}\\
&= \left(\frac1n \Theta X^T  +(I- \Theta \hat \Sigma) F_M\hat \Sigma_{M} ^{-1} X_M^T\right) \mu -\lambda (I- \Theta \hat \Sigma ) F_M  \hat \Sigma_{M}^{-1} s  \nonumber\\
&:= B\mu +h \nonumber,
\end{align}
where $F_M$ is the matrix such that it takes an $|M|$ vector and pads with $0$ to make a $p$ vector.

By choosing $\eta$ as a row of $B$, COS framework provides a selective test and confidence interval, 
\begin{align*}
H_0: \beta^d _j (\Hat M,\hat s) = \gamma-\eta^T h\\
\Pp ( \beta^d _j (\hat M, \hat s) \in C_j ) = 1-\alpha.
\end{align*}

The next step is to show that $\beta^d (\hat M, \hat s) $ is close to $\beta^0$, so by appropriately widening $C_j$, we cover $\beta^0$.
\begin{theorem}
	Assume that lasso is consistent in the sense $\norm { \hat \beta - \beta^0}_1 \le c_L s \sqrt { \frac {\log p}{n}}$, $\Theta $ satisfies $\norm{ \Theta \hat \Sigma - I}_\infty \le c_{\Theta}  \sqrt{\frac{ \log p }{n}}$, and $X$ has the sparse eigenvalue condition  $\mu(S,k) := \min_{\norm{v}_0\le  k, \norm{v}_2 = 1}\frac1n \norm{v^T Sv}>0$, and the empirical sparsity $\hat s := |\hat M| < c_{M} s$, then 
	\[
	\norm{\beta^d (\hat M, \hat s) - \beta^0 }_\infty \le C_{\beta^d} \frac{s \log p }{n}.
	\]
\end{theorem}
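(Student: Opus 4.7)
The plan is to show that the population debiased estimator admits a clean decomposition of the form $\beta^d(\hat M,\hat s)-\beta^0 = (I-\hat\Theta\hat\Sigma)(\hat\beta_{\text{pop}}(\hat M,\hat s)-\beta^0)$, where $\hat\beta_{\text{pop}}(M,s)$ denotes the ``population lasso'' $[\hat\Sigma_M^{-1}(\tfrac1n X_M^T\mu-\lambda s);\,0]$, and then apply H\"older's inequality together with the two key assumptions.

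\textbf{Step 1 (algebraic identity).} Starting from \eqref{eq:betad-beta0} and using $\mu=X\beta^0$, one has $\tfrac1n\hat\Theta X^T\mu = \hat\Theta\hat\Sigma\beta^0$ and $\tfrac1n\hat\Sigma_M^{-1}X_M^T\mu = \beta^0_M+\hat\Sigma_M^{-1}\hat\Sigma_{M,-M}\beta^0_{-M}$. Substituting and simplifying yields
\begin{equation*}
\beta^d(\hat M,\hat s) - \beta^0 = (I-\hat\Theta\hat\Sigma)\bigl(\hat\beta_{\text{pop}}(\hat M,\hat s)-\beta^0\bigr).
\end{equation*}
This is the deterministic analog of the standard debiased-lasso decomposition, with the noise term $\tfrac1n\hat\Theta X^T\epsilon$ absent because we replaced $y$ with $\mu$.

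\textbf{Step 2 (H\"older).} Taking the $\ell_\infty$ norm and applying H\"older with row-wise $\ell_1$ norms,
\begin{equation*}
\|\beta^d(\hat M,\hat s) - \beta^0\|_\infty \le \|\hat\Theta\hat\Sigma-I\|_\infty\,\|\hat\beta_{\text{pop}}(\hat M,\hat s)-\beta^0\|_1 \le c_\Theta\sqrt{\tfrac{\log p}{n}}\,\|\hat\beta_{\text{pop}}(\hat M,\hat s)-\beta^0\|_1,
\end{equation*}
using the assumption on $\Theta$ directly.

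\textbf{Step 3 (triangle through the lasso).} It suffices to show $\|\hat\beta_{\text{pop}}(\hat M,\hat s)-\beta^0\|_1 = O(s\sqrt{\log p/n})$. Using the KKT conditions, on the event $\{(\hat M,\hat s)\}$ the actual lasso satisfies $\hat\beta_{\hat M}=\hat\Sigma_{\hat M}^{-1}(\tfrac1n X_{\hat M}^Ty-\lambda\hat s)$ and $\hat\beta_{-\hat M}=0$, so
\begin{equation*}
\hat\beta - \hat\beta_{\text{pop}}(\hat M,\hat s) = \begin{bmatrix}\hat\Sigma_{\hat M}^{-1}\tfrac1n X_{\hat M}^T\epsilon \\ 0\end{bmatrix}.
\end{equation*}
By the triangle inequality, $\|\hat\beta_{\text{pop}}-\beta^0\|_1 \le \|\hat\beta-\beta^0\|_1 + \|\hat\Sigma_{\hat M}^{-1}\tfrac1n X_{\hat M}^T\epsilon\|_1$. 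The first term is bounded by $c_L s\sqrt{\log p/n}$ by assumption. For the second, use $\|v\|_1\le\sqrt{\hat s}\|v\|_2$, the sparse-eigenvalue bound $\|\hat\Sigma_{\hat M}^{-1}\|_{\mathrm{op}}\le 1/\mu(\hat\Sigma,\hat s)$, and $\|\tfrac1n X_{\hat M}^T\epsilon\|_2\le\sqrt{\hat s}\,\|\tfrac1n X^T\epsilon\|_\infty$. Standard Gaussian concentration gives $\|\tfrac1n X^T\epsilon\|_\infty\le C\sigma\sqrt{\log p/n}$ w.h.p., so
\begin{equation*}
\|\hat\Sigma_{\hat M}^{-1}\tfrac1n X_{\hat M}^T\epsilon\|_1 \;\le\; \frac{\hat s}{\mu(\hat\Sigma,\hat s)}\cdot C\sigma\sqrt{\tfrac{\log p}{n}} \;\le\; \frac{c_M s}{\mu(\hat\Sigma,c_M s)}\cdot C\sigma\sqrt{\tfrac{\log p}{n}},
\end{equation*}
where the last step uses $\hat s\le c_M s$. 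Combining, $\|\hat\beta_{\text{pop}}(\hat M,\hat s)-\beta^0\|_1 \le C' s\sqrt{\log p/n}$.

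\textbf{Step 4 (conclude).} Plugging this back into Step 2,
\begin{equation*}
\|\beta^d(\hat M,\hat s)-\beta^0\|_\infty \le c_\Theta\sqrt{\tfrac{\log p}{n}}\cdot C' s\sqrt{\tfrac{\log p}{n}} = C_{\beta^d}\tfrac{s\log p}{n}.
\end{equation*}
The main obstacle is Step 3: one must argue that the sparse-eigenvalue lower bound $\mu(\hat\Sigma,c_M s)$ is bounded away from zero uniformly over the (random) active set $\hat M$, which is where the full strength of the sparse-eigenvalue hypothesis and the empirical-sparsity control $\hat s<c_M s$ are simultaneously needed. The Gaussian tail bound on $\|X^T\epsilon\|_\infty$ and a union bound over $p$ coordinates handles the noise piece.
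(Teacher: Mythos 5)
Your proposal is correct and follows essentially the same route as the paper: the identity $\beta^d-\beta^0=(I-\hat\Theta\hat\Sigma)(\hat\beta_{\text{pop}}-\beta^0)$ is the paper's Equation \eqref{eq:betad-beta0}, Step 2 is the same H\"older bound via $\norm{\Theta\hat\Sigma-I}_\infty$, and Step 3 reproduces the paper's chain (triangle inequality through the lasso solution, $\ell_1$-to-$\ell_2$ via $\sqrt{\hat s}$, the sparse-eigenvalue bound on $\hat\Sigma_{\hat M}^{-1}$, and $\norm{\tfrac1n X^T\epsilon}_\infty\lesssim\sqrt{\log p/n}$). The only cosmetic difference is that you fold the paper's two separate triangle inequalities (one for the $\hat M$ block, one for $\beta^0_{-\hat M}$) into a single one, which is slightly cleaner.
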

\begin{proof}
	Starting from Equation \eqref{eq:betad-beta0}, we have 
	\begin{align*}
	\beta^d - \beta^0 &= (I- \Theta \hat \Sigma) \begin{bmatrix} 
	\frac1n \hat \Sigma_{ \hat M} ^{-1} X_{\hat M} ^T \mu - \lambda \hat \Sigma _{\hat M} ^{-1} s-\beta^0_{\hat M}\\ -\beta^0_{-\hat M}
	\end{bmatrix}\\
	\norm{\beta^d - \beta^0 }_\infty &\le \norm{I- \Theta \hat \Sigma}_\infty \norm{\begin{bmatrix} 
		\frac1n \hat \Sigma_{\hat M} ^{-1} X_{\hat M} ^T \mu - \lambda \hat \Sigma _{ \hat M} ^{-1} s-\beta^0_{\hat M}\\ -\beta^0_{-\hat M}
		\end{bmatrix}}_1\\
	&\le c_{\Theta}  \sqrt{\frac{ \log p }{n}} \left( \norm{\frac1n \hat \Sigma_{\hat M} ^{-1} X_{\hat M} ^T \mu - \lambda \hat \Sigma _{\hat M} ^{-1} s-\beta^0_{\hat M}}_1 + \norm{ \beta^0_{-\hat M}}_1 \right)\\
	&\le c_{\Theta}  \sqrt{\frac{ \log p }{n}} \left( \norm{\frac1n \hat \Sigma_{\hat M} ^{-1} X_{\hat M} ^T \mu - \lambda \hat \Sigma _{\hat M} ^{-1} s-\beta^0_{\hat M}}_1 + \norm{ \hat \beta - \beta^0}_1 \right)\\
	&\le c_{\Theta}  \sqrt{\frac{ \log p }{n}} \left( \norm{\frac1n \hat \Sigma_{\hat M} ^{-1} X_{\hat M} ^T \mu - \lambda \hat \Sigma _{\hat M} ^{-1} s-\beta^0_{\hat M}}_1 +c_{L} s\sqrt{\frac{\log p}{n}} \right)
	\end{align*}
	where we used the lasso consistency assumption,$\norm{ \Theta \hat \Sigma - I}_\infty \le c_{\Theta}  \sqrt{\frac{ \log p }{n}}$n, and the second to last inequality uses the fact that $\hat \beta_{-\hat M} =0$, so $\norm { \hat \beta - \beta^0}_1 = \norm{ \hat \beta_{\hat M}-\beta^0_{\hat M} }_1 + \norm{ -\beta^0_{-\hat M}}_1 \ge \norm{ -\beta^0 _{-\hat M}}_1.$
	
	We now show $\norm{\frac1n \hat \Sigma_{\hat M} ^{-1} X_{\hat M} ^T \mu - \lambda \hat \Sigma _{\hat M} ^{-1} s-\beta^0_{\hat M}}_1 \le s \sqrt{\frac{\log p }{n}}$.
	\begin{align*}
	&\norm{\frac1n \hat \Sigma_{\hat M} ^{-1} X_{\hat M} ^T \mu - \lambda \hat \Sigma _{\hat M} ^{-1} s-\beta^0_{\hat M}}_1\le \norm{(\frac1n \hat \Sigma_{\hat M} ^{-1} X_{\hat M} ^T \mu - \lambda \hat \Sigma _{\hat M} ^{-1} s) - \hat \beta_{\hat M}}_1 + \norm{ \hat \beta_{\hat M} - \beta^0 _{\hat M}}_1\\
	&\le \norm{(\frac1n \hat \Sigma_{\hat M} ^{-1} X_{\hat M} ^T \mu - \lambda \hat \Sigma _{\hat M} ^{-1} s) -(\frac1n \hat \Sigma_{\hat M} ^{-1} X_{\hat M} ^T y - \lambda \hat \Sigma _{\hat M} ^{-1} s)   }_1+ c_L s \sqrt{ \frac{\log p}{n}}\\
	&\le \norm{ \frac1n \hat \Sigma_{\hat M} ^{-1} X_{\hat M} ^T\epsilon}_1+c_L s \sqrt{ \frac{\log p}{n}}\\
	&\le \sqrt{\hat s} \norm{ \frac1n \hat \Sigma_{\hat M} ^{-1} X_{\hat M} ^T\epsilon}_2 +c_L s \sqrt{ \frac{\log p}{n}}\\
	&\le \sqrt{\hat s} \norm{  \hat \Sigma_{\hat M} ^{-1}}_2\norm{\frac1n X_{\hat M} ^T\epsilon}_2+c_L s \sqrt{ \frac{\log p}{n}} \\
	&\le \sqrt{\hat s} \norm{  \hat \Sigma_{\hat M} ^{-1}}_2 \sqrt{\hat s} \norm{\frac1n X^T\epsilon}_\infty+c_L s \sqrt{ \frac{\log p}{n}}\\
	&\le \hat s\sqrt{ \frac{\log p }{n}} \norm{  \hat \Sigma_{\hat M} ^{-1}}_2 +c_L s \sqrt{ \frac{\log p}{n}}\\
	&\le \frac{1}{\lambda_{\min} ( \frac1n  X_{\hat M} ^T X_{\hat M}) } \hat s \sqrt{ \frac{\log p }{n}}+c_L s \sqrt{ \frac{\log p}{n}}\\
	&\le \left(\frac{1}{\mu(\hat \Sigma,c_M s)} c_M  + c_L \right)s \sqrt{\frac{\log p}{n}}
	\end{align*}
	, where $\hat s = |\hat M|$,and  $\lambda_{\min} ( \frac1n  X_{\hat M} ^T X_{\hat M}) \ge \mu(\hat s)> \mu (c_M s)$.
	
	Plugging this into the expression for $\norm{\beta^d - \beta^0}$,
	\begin{align}
	\norm{\beta^d - \beta^0}_\infty &\le c_{\Theta} \sqrt{\frac{\log p }{n}}  \left(\left(\frac{1}{\mu(\hat \Sigma,c_M s)} c_M  + c_L \right)s \sqrt{\frac{\log p}{n}}+c_L s \sqrt{\frac{\log p}{n}} \right)\\
	&\le \left(\frac{c_\Theta}{\mu(\hat \Sigma,c_M s)} c_M  +  2 c_L c_\Theta \right) \frac{s \log p }{n} \label{eq:betad-error-bound}
	\end{align}
\end{proof}
\begin{lemma}[Assumptions hold under random Gaussian design with additive Gaussian noise]
	Assume that the rows of $X \sim \cN(0, \Sigma)$ and $\lim \frac{n}{ s \log p} =\infty$. Then the estimation consistency property, existence of a good approximation $\hat \Theta$, empirical sparsity $\hat s <c_M s$, and $\mu(\hat \Sigma,c_M s) > \frac12 \mu (\Sigma,c_M s)$ with probability tending to 1.
	\label{lem:assumptions-rand-design}
\end{lemma}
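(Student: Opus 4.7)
My plan is to verify the four claims in the order in which one dominates the next, using the sparse eigenvalue concentration inequality for Gaussian matrices as the master estimate and then reducing the other three to standard high-dimensional arguments. Throughout, the scaling $n/(s\log p)\to\infty$ is used to drive the deviation terms to zero.

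First, I would establish (4), the sparse eigenvalue claim, since (1)--(3) all rely on a RE/compatibility-type condition on $\hat\Sigma$. For i.i.d.\ rows $X_i\sim\cN(0,\Sigma)$, a uniform sparse eigenvalue concentration theorem (e.g., Rudelson--Zhou 2013 or Raskutti--Wainwright--Yu 2010) yields that, on an event of probability at least $1-2\exp(-cn)$,
\[
\sup_{\|v\|_0\le c_M s,\,\|v\|_2=1}\bigl|v^T(\hat\Sigma-\Sigma)v\bigr|\le C\sqrt{\tfrac{s\log p}{n}},
\]
provided $n\gtrsim s\log p$. Since the right side tends to $0$, for $n$ large enough we get $\mu(\hat\Sigma,c_M s)\ge\tfrac12\mu(\Sigma,c_M s)$, which is claim (4). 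In particular this implies the compatibility/RE constant of the design is bounded away from zero.

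Next I would derive (1) and (3) from the compatibility condition. Conditional on $X$, the noise $\epsilon$ is Gaussian, so on the event $\{\|X^T\epsilon/n\|_\infty\le \lambda/2\}$, which holds w.h.p.\ under column normalization and $\lambda\asymp\sigma\sqrt{\log p/n}$, a standard lasso oracle bound (Bickel--Ritov--Tsybakov, or the argument in Section~\ref{appendix:screening}) gives $\|\hat\beta-\beta^0\|_1\le c_L s\sqrt{\log p/n}$; this is claim (1). The empirical sparsity bound (3) is then the classical consequence of the KKT system on the active set: using the lower sparse eigenvalue on $\hat\Sigma_{\hat M}$ together with $\lambda\asymp\sigma\sqrt{\log p/n}$, one shows $\hat s=|\hat M|\le c_M s$ for a constant depending only on the compatibility constant and ratios of $c_L,c_\Theta,\sigma$ (see e.g.\ Belloni--Chernozhukov or van de Geer--B\"uhlmann).

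Finally, for (2), I would construct $\hat\Theta$ by the nodewise lasso of van de Geer et al.\ (2014): regress each column $X_j$ on $X_{-j}$ via an $\ell_1$-penalized regression, and assemble $\hat\Theta$ from the resulting coefficients and residual variances. Under sparsity of each row of $\Sigma^{-1}$ (a standard side assumption implicit in the debiased-lasso framework), the same Gaussian concentration used in step one and another application of the lasso oracle bound to each nodewise problem yields $\|\hat\Theta\hat\Sigma-I\|_\infty\le c_\Theta\sqrt{\log p/n}$.

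The main obstacle is the uniform sparse eigenvalue bound in the first step: proving it requires a covering argument on the intersection of the sphere with the $c_M s$-sparse cone, combined with Hanson--Wright-type tail bounds for Gaussian quadratic forms, and a union bound over the $\binom{p}{c_M s}$ support patterns (which contributes the $\log p$ factor and forces the scaling $n\gtrsim s\log p$). Once that master inequality is in hand, (1), (3), and (4) are immediate, and (2) follows from applying the same machinery column-by-column to the nodewise lasso.
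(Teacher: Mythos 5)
Your proposal is correct in outline and, like the paper, ultimately rests on importing standard high-dimensional results; the paper's own proof is almost entirely citation-based. The one place where the paper does real work is claim (4): it invokes Lemma 15 of Loh and Wainwright for the uniform sparse-eigenvalue deviation bound $\sup_{\norm{v}_0\le s,\norm{v}_2=1}|v^T(\hat\Sigma-\Sigma)v|$ and then sets $t=\frac12\mu(\Sigma,c_M s)$ to conclude, whereas you propose to re-derive that deviation bound from scratch via covering, Hanson--Wright, and a union bound over supports --- the same machinery, just unpacked. Your logical organization is in fact tighter than the paper's: you derive claims (1) and (3) as consequences of the restricted-eigenvalue control from (4), while the paper simply cites \cite{negahban2012unified} for (1) and \cite{belloni2011l1,belloni2013least} for (3) independently. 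The one substantive divergence is claim (2): the paper's $\hat\Theta$ is defined as the solution of the Javanmard--Montanari constrained program, and feasibility of that program for Gaussian designs is exactly what the cited result of \cite{javanmard2013confidence} provides, with no sparsity assumption on $\Sigma^{-1}$. Your nodewise-lasso construction instead requires row sparsity of $\Sigma^{-1}$ --- an assumption the lemma does not make --- so your route for (2) establishes a strictly weaker statement unless you add that hypothesis. If you instead argue feasibility of the constrained program directly (e.g., by showing $\Sigma^{-1}$ itself satisfies the $\ell_\infty$ constraint with high probability), the extra assumption disappears and you recover exactly the paper's claim.
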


\begin{proof}
	The estimation consistency property follows from \cite{negahban2012unified}. The bound $\norm{\hat \Sigma \hat \Theta -I}_\infty < \sqrt{\frac{\log p }{n}}$ is established in \cite{javanmard2013confidence}. The empirical sparsity result $ \hat s \le c_M s$ is from \cite{belloni2011l1,belloni2013least}.
	
	The condition on concentration of sparse eigenvalues can be derived using  \citet[Lemma 15, Supplementary Materials]{loh2012high}. Lemma 15 states if $X$ is a zero-mean sub-Gaussian matrix with covariance $\Sigma$ and subgaussian parameter $\sigma^2$, then there is a universal constant $c>0$ such that
	\begin{equation}
	\Pp\left( \sup_{\norm{v}_0 \le s, \norm{v}_2 =1 } | \frac1n v^T X^TX v - v^T \Sigma v| \ge t\right) \le 2 \exp\left( -cn \min( \frac{t^2}{\sigma^4}, \frac{t}{\sigma^2}) +s \log p \right) .
	\label{eq:loh-sparse-eigenvalue}
	\end{equation}
	With high probability and for all $v \in K(s) : = \{v: \norm{v}_0 \le s, \norm{v}_2 = 1\}$,
	\begin{align*}
	| v^T \hat \Sigma v - v^T \Sigma v| &< t\\
	v^T \Sigma v - t &< v^T \hat \Sigma v\\
	\min_{w \in K(s)} w^T \Sigma w - t&<  v^T \hat \Sigma v \\
	\mu (\Sigma,s) -t &< v^T \hat \Sigma v\\
	\mu(\Sigma,s)- t &< \min_{v \in K(s)} v^T \hat \Sigma v\\
	\mu(\Sigma,s) - t &<\mu(\hat \Sigma,s).
	\end{align*}
	We now use this to show $\mu(\hat \Sigma, C_m s)> \frac12 \mu(\Sigma,c_M s)$. Let $t= \frac12 \mu(\Sigma,c_M s)$, then by the previous argument and Equation \eqref{eq:loh-sparse-eigenvalue},
	\[ \mu(\hat \Sigma, c_M s) > \frac12 \mu( \Sigma, c_M s) \]
	with probability at least 
	\[
	1-2 \exp\left( -cn \min(\frac{ \mu(\Sigma,c_M s)^2}{4\sigma^4}, \frac{\mu(\Sigma,c_M s)}{2\sigma^2} ) +c_M s \log p \right).
	\]
	For $n> \frac{2c_M s \log p }{c \min(\frac{ \mu(\Sigma,c_M s)^2}{4\sigma^4}, \frac{\mu(\Sigma,c_M s)}{2\sigma^2} ) } $, we have with probability at least,
	\[
	1-2 \exp \left(-\frac12 cn \min(\frac{ \mu(\Sigma,c_M s)^2}{4\sigma^4}, \frac{\mu(\Sigma,c_M s)}{2\sigma^2} ) \right).
	\]
	
\end{proof}

\begin{corollary}
	Under the assumptions of Lemma \ref{lem:assumptions-rand-design}, and $\lim \frac{n}{ s^2 \log^2 p} =\infty$, 
	\[
	\norm{\beta^d (\hat M, \hat s ) - \beta^0}_\infty \le \frac{\delta }{\sqrt{n}}
	\]
	for any $\delta >0$.
\end{corollary}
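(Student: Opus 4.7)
The plan is to combine the deterministic bound from the preceding theorem with the probabilistic verification of its hypotheses provided by Lemma~\ref{lem:assumptions-rand-design}, and then use the asymptotic scaling $n / (s^2 \log^2 p) \to \infty$ to convert the rate $s \log p / n$ into the desired $\delta / \sqrt{n}$ bound.

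Concretely, I would first invoke Lemma~\ref{lem:assumptions-rand-design} under the random Gaussian design assumption to conclude that, with probability tending to $1$, all the structural hypotheses of the preceding theorem hold simultaneously: lasso estimation consistency with constant $c_L$, the approximate inverse covariance bound $\norm{\hat\Theta \hat\Sigma - I}_\infty \le c_\Theta \sqrt{\log p / n}$, the empirical sparsity control $\hat s \le c_M s$, and the sparse eigenvalue lower bound $\mu(\hat\Sigma, c_M s) \ge \tfrac{1}{2}\mu(\Sigma, c_M s) > 0$. On this intersection of good events, the preceding theorem applies and yields
\[
\norm{\beta^d(\hat M, \hat s) - \beta^0}_\infty \le C_{\beta^d}\, \frac{s \log p}{n},
\]
where $C_{\beta^d}$ depends only on $c_L$, $c_\Theta$, $c_M$, and $\mu(\Sigma, c_M s)$ (which is bounded away from zero by assumption).

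Next, I would rewrite the right-hand side as
\[
C_{\beta^d}\, \frac{s \log p}{n} \;=\; \frac{C_{\beta^d}}{\sqrt{n}} \cdot \frac{s \log p}{\sqrt{n}} \;=\; \frac{C_{\beta^d}}{\sqrt{n}} \sqrt{\frac{s^2 \log^2 p}{n}}.
\]
By hypothesis $n/(s^2 \log^2 p) \to \infty$, so $s^2 \log^2 p / n \to 0$. Therefore, for any fixed $\delta > 0$, there exists $N(\delta)$ such that for all $n \ge N(\delta)$ we have $\sqrt{s^2 \log^2 p / n} \le \delta / C_{\beta^d}$, giving the advertised bound $\delta / \sqrt{n}$.

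The main obstacle, and it is a minor one, is bookkeeping: the conclusion is an eventual statement that holds on an event whose probability tends to $1$, so strictly speaking the corollary should be stated as ``with probability tending to $1$ as $n \to \infty$'' rather than a deterministic inequality. Beyond that, the argument is essentially an algebraic rescaling of the deterministic estimate from the preceding theorem; no new probabilistic machinery is required, since Lemma~\ref{lem:assumptions-rand-design} already packages the concentration results for the Gaussian design, the lasso, and the sparse eigenvalue.
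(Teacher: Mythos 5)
Your proposal is correct and follows essentially the same route as the paper: invoke Lemma \ref{lem:assumptions-rand-design} to validate the hypotheses of the preceding theorem (in particular the sparse eigenvalue bound $\mu(\hat\Sigma, c_M s) > \tfrac{1}{2}\mu(\Sigma, c_M s)$), plug into the bound \eqref{eq:betad-error-bound}, and use $n \gg s^2\log^2 p$ to conclude that $s\log p/n = o(1/\sqrt{n}) \le \delta/\sqrt{n}$ eventually. Your remark that the conclusion really holds only on an event of probability tending to one is a fair observation that the paper also elides.
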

\begin{proof}
	Lemma \ref{lem:assumptions-rand-design} ensures that $\mu (\hat \Sigma, s) > \frac12 \mu(\Sigma,s)$, and plugging this into Equation \eqref{eq:betad-error-bound} gives
	\begin{align*}
	\norm{\beta^d - \beta^0}_\infty \le \left(\frac{2c_\Theta}{\mu( \Sigma,c_M s)} c_M  +  2 c_L c_\Theta \right) \frac{s \log p }{n} 
	\end{align*}
	Since $n \gg s^2 \log ^2p $, we have 
	\begin{align*}
	\norm{\beta^d - \beta^0}_\infty &\le \left(\frac{2c_\Theta}{\mu( \Sigma,c_M s)} c_M  +  2 c_L c_\Theta \right) \frac{s \log p }{n} \\
	&\le \left(\frac{2c_\Theta}{\mu( \Sigma,c_M s)} c_M  +  2 c_L c_\Theta \right) o(\frac{1}{\sqrt{n}})\\
	&\le \frac{\delta}{\sqrt{n}}.
	\end{align*}
	
\end{proof}

\begin{corollary}
	Let $C_j$ be a selective confidence interval for $\beta^d$ meaning $\Pp( \beta^d _{j} \in C_j ) = 1 -\alpha$, then 
	\[
	\liminf \Pp( \beta^0_j \in C_j \pm \frac{\delta}{\sqrt{n}}) \ge 1- \alpha.
	\]
\end{corollary}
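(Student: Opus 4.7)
The plan is to combine the high-probability approximation bound from the previous corollary with a straightforward inclusion-exclusion argument. The selective confidence interval $C_j$ is defined so that the event $\{\beta^d_j(\hat M, \hat s) \in C_j\}$ has exact probability $1-\alpha$, and the previous corollary provides an $L^\infty$ bound between the population debiased quantity $\beta^d(\hat M, \hat s)$ and the true $\beta^0$, which holds with probability tending to one. Widening $C_j$ by the same order as this approximation error then yields coverage for $\beta^0_j$.

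First, I would carefully state the two input facts. By the previous corollary, under the stated scaling $n/(s^2 \log^2 p) \to \infty$, there is a sequence of events $A_n := \{\|\beta^d(\hat M, \hat s) - \beta^0\|_\infty \le \delta/\sqrt{n}\}$ with $\Pp(A_n) \to 1$. By the hypothesis on $C_j$, the event $B_n := \{\beta^d_j(\hat M, \hat s) \in C_j\}$ satisfies $\Pp(B_n) = 1-\alpha$. Next, I would observe the deterministic pointwise implication: on the event $A_n \cap B_n$, we have $|\beta^d_j - \beta^0_j| \le \delta/\sqrt{n}$ and $\beta^d_j \in C_j$, so $\beta^0_j$ lies in the $\delta/\sqrt{n}$-enlargement $C_j \pm \delta/\sqrt{n}$ of $C_j$. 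Formally,
\[
A_n \cap B_n \;\subseteq\; \{\beta^0_j \in C_j \pm \delta/\sqrt{n}\}.
\]

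The main step is then a union bound:
\[
\Pp(\beta^0_j \in C_j \pm \delta/\sqrt{n}) \;\ge\; \Pp(A_n \cap B_n) \;\ge\; \Pp(B_n) - \Pp(A_n^c) \;=\; (1-\alpha) - o(1).
\]
Taking the $\liminf$ as $n \to \infty$ yields the claimed bound, since $\Pp(A_n^c) \to 0$.

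There is no real obstacle here; the heavy lifting was done in the previous corollary, which required the assumptions of Lemma \ref{lem:assumptions-rand-design} together with the scaling $n \gg s^2 \log^2 p$ so that $\|\beta^d - \beta^0\|_\infty \le \delta/\sqrt{n}$ holds with high probability. The only subtlety worth flagging is that $C_j$ itself is a random interval depending on $(\hat M, \hat s)$, but this is immaterial to the argument: the pointwise inclusion $A_n \cap B_n \subseteq \{\beta^0_j \in C_j \pm \delta/\sqrt{n}\}$ holds sample-by-sample regardless of how $C_j$ was formed, so the union bound goes through unchanged.
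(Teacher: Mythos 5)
Your argument is correct and is essentially the same as the paper's: the paper likewise intersects the exact-coverage event for $\beta^d_j$ with the high-probability event $\|\beta^d - \beta^0\|_\infty \le \delta/\sqrt{n}$ from the preceding corollary and applies a union bound to get coverage $1-\alpha-o(1)$ for the widened interval. Your write-up is simply a more careful spelling-out of the same one-line argument.
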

\begin{proof}
	With probability at least $1-\alpha$, $\beta^d_j \in C_j$ and with probability tending to $1-o(1)$,  $\beta^d_j - \beta^0_j < \frac{\delta}{\sqrt{n}}$. Thus with probability at least $1-\alpha -o(1)$, $\beta^0_j \in C_j \pm \frac{\delta}{\sqrt{n}}$. 
\end{proof}

\begin{figure}
	\centering
	\includegraphics[width=.72\textwidth]{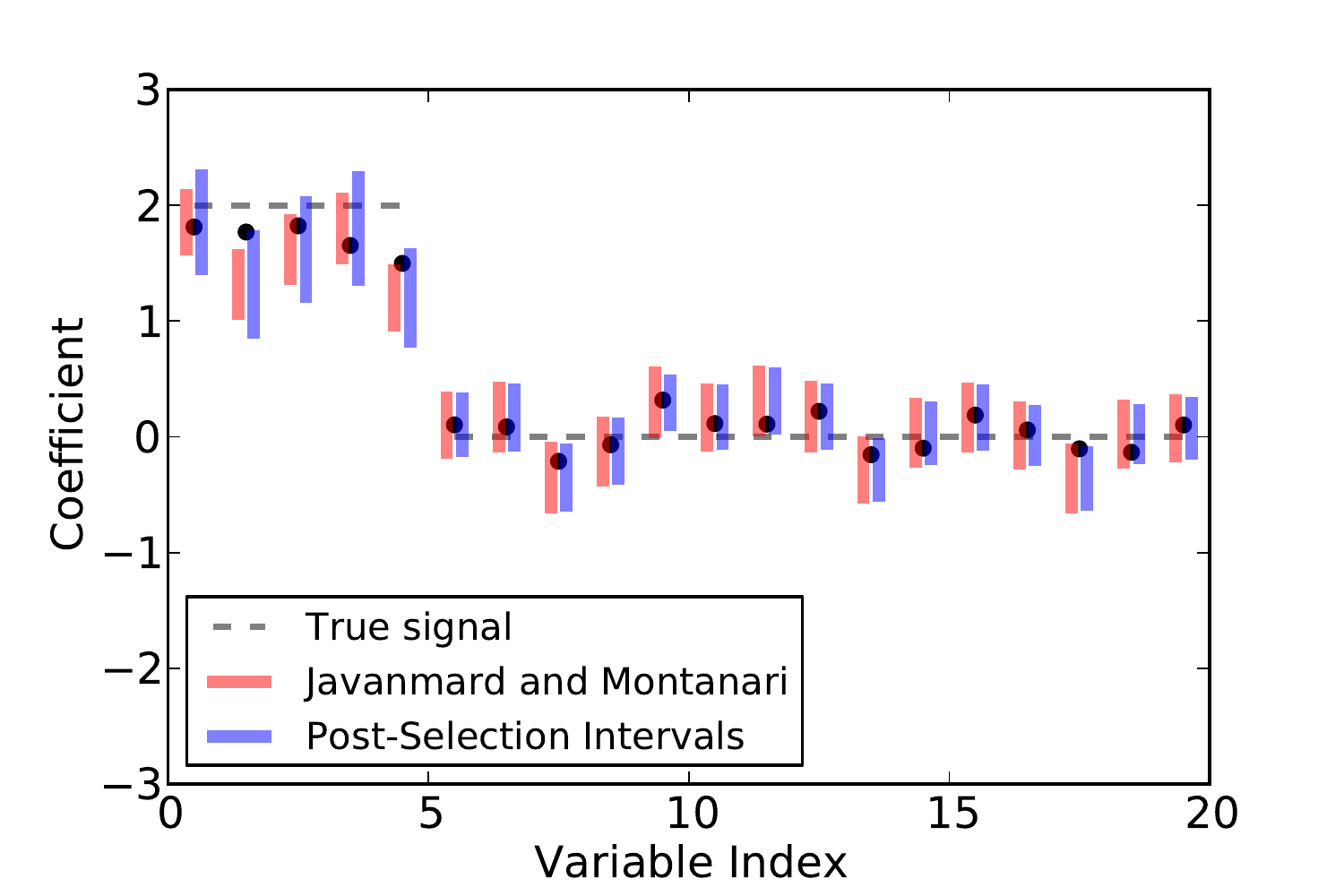}
	\caption{Confidence intervals for the coefficients in a design with $n=25$, $p=50$, and 5 non-zero coefficients. Only the first 20 coefficients are shown. The dotted line represents the true signal, and the points represent the (biased) post-selection target. The colored bars denote the intervals.}
	\label{fig:montanari}
\end{figure}

Figure \ref{fig:montanari} shows the results of a simulation study. It makes clear that the intervals of \citet{javanmard2013confidence} and our selective confidence intervals cover $\beta^d$, which is close to $\beta^0$.  The Javanmard-Montanari intervals are the high-dimensional analog of a z-interval, so they are not selectively valid, unlike the selective intervals in blue.

\section{Selective Inference for the Knockoff Filter}
\label{sec:knockoff}
In this section, we show how to make selectively valid confidence intervals for the knockoff method \cite{foygel2014controlling}.
Let $\tilde X$ be the knockoff design matrix, so the knockoff regression is done on $y=[X;  \tilde X] \beta+\epsilon$. The introduction of the knockoff variables, $\tilde X$, allows us to estimate the FDP as the number of knockoff variables selected divided by the number of true variables selected:
\begin{align}
 FDP( M) =\frac{| M \cap \tilde X|}{ | M \cap X| \vee 1}
 \label{eq:fdp-estimate}
\end{align}

Given a sequence of models $M(1), \ldots, M(k)$ be a sequence of nested models $M(k) \subset M_(k-1) \subset \ldots \subset M(1) \subset [ 1\ ..\ 2p]$. For the lasso, where the $M(j)$ correspond to the lasso active set at $\lambda_j$, the models are not necessarily nested. We define $M(j) = \cup_{l=k}^j A_j$, where $A_j$ is the active set of lasso at $\lambda_j$. We have an estimate FDP estimate for each model, $FDP(M(j))= \frac{|M(j) \cap \tilde X|}{| M(j) \cap X|}$, where $X$ and $\tilde X$ represent the indices of the real and knockoff variables respectively. This suggests selecting the largest model such that the FDP estimate is less than $\alpha$,
\begin{align}
T= \min \{ t: FDP(M(t))\le \alpha \}.
\label{eq:stopping-rule}
\end{align}
To show this controls modified FDR, we need to construct $W$-statistics such that our stopping rule, corresponds to the stopping rule of \cite{foygel2014controlling}. 
\begin{theorem}
	The model selected by the stopping rule in Equation \eqref{eq:stopping-rule} controls the modified FDR, that is
	\[
	\Ee \left[ \frac{ |M(T) \cap V| }{|M(T) \cap X| +1/\alpha}  \right] \le \alpha
	\]
	where $V=\{ 1\le j \le p : \beta_j =0\}$.
\end{theorem}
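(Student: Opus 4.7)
The plan is to recast the stopping rule in the framework of the knockoff filter of \cite{foygel2014controlling} and invoke their modified-FDR control theorem. The argument would proceed in three steps.

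First, I would construct knockoff $W$-statistics from the nested sequence. For each $j \in \{1,\dots,p\}$, let $\tau_j^R := \max\{t : j \in M(t)\}$ (with $\tau_j^R := 0$ if $j$ never enters); because $M(\cdot)$ is nested, this records the latest step at which the real variable $j$ is still retained. Define $\tau_j^K$ analogously for the knockoff column $j+p$, and set
\[
W_j := \max(\lambda_{\tau_j^R}, \lambda_{\tau_j^K}) \cdot \sign(\lambda_{\tau_j^R} - \lambda_{\tau_j^K}).
\]
By the knockoff equivariance conditions on $\tilde X$, swapping the columns $X_j$ and $\tilde X_j$ for any $j \in V$ preserves the joint law of the full selection path $(M(1),\ldots,M(k))$. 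Consequently, conditional on $(|W_j|)_{j=1}^p$, the signs $(\sign(W_j))_{j \in V}$ are i.i.d.\ uniform on $\{\pm 1\}$---exactly the sign-flip property invoked in \cite{foygel2014controlling}.

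Second, I would verify that the stopping rule \eqref{eq:stopping-rule} coincides with the knockoff threshold
\[
T^{\mathrm{KF}} := \min\!\Bigl\{ t : \tfrac{\#\{j : W_j \le -\lambda_t\}}{\#\{j : W_j \ge \lambda_t\} \vee 1} \le \alpha \Bigr\},
\]
via the identifications $\#\{j : W_j \ge \lambda_t\} = |M(t) \cap X|$ and $\#\{j: W_j \le -\lambda_t\} = |M(t) \cap \tilde X|$, so that the FDP estimate \eqref{eq:fdp-estimate} matches the knockoff FDP estimate at threshold $\lambda_t$ and the two stopping times agree. Third, I would apply the modified-FDR bound from \cite{foygel2014controlling}, which combines the sign-flip property with a supermartingale argument to give
\[
\Ee\!\left[\frac{\#\{j \in V : W_j \ge \lambda_{T^{\mathrm{KF}}}\}}{\#\{j : W_j \ge \lambda_{T^{\mathrm{KF}}}\} + 1/\alpha}\right] \le \alpha.
\]
Identifying the numerator with $|M(T) \cap V|$ and the denominator with $|M(T) \cap X| + 1/\alpha$ then yields the theorem.

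The hard part will be the second step: the naive identity $\#\{j : W_j \ge \lambda_t\} = |M(t) \cap X|$ can fail because $M(t)$ may simultaneously contain a real variable $j$ and its knockoff $j+p$, whereas at most one of them contributes to $\#\{j : |W_j| \ge \lambda_t\}$ under the $W$-statistic definition. Reconciling this requires either (i) redefining the selection set so that each pair $(j, j+p)$ contributes at most one member (e.g., counting $j$ only when $\lambda_{\tau_j^R} \ge \lambda_{\tau_j^K}$), or (ii) proving the FDR bound directly via a leave-one-out martingale on the pair-exchangeable sequence $(\lambda_{\tau_j^R}, \lambda_{\tau_j^K})_{j \in V}$, which would circumvent the need for an exact reduction to the knockoff stopping rule. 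The latter route mirrors the proof structure in \cite{foygel2014controlling} but applied directly to the pair ordering rather than to signed statistics.
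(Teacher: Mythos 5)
Your proposal follows essentially the same route as the paper: define signed $W$-statistics from the entry times of each real/knockoff pair along the nested sequence, identify the resulting knockoff FDP estimate with the estimate in \eqref{eq:fdp-estimate}, and invoke the main theorem of \cite{foygel2014controlling}. The difficulty you flag in your second step --- that $|M(t)\cap X|$ counts a real variable even when its knockoff entered first, so the naive identification of the two FDP estimates can fail for pairs in which both columns are selected --- is genuine, but the paper's own verification silently makes the same identification without addressing it, so your treatment is if anything the more careful of the two.
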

\begin{theorem}
	We construct some W statistics. Define $t_j = \min \{t: x_j \in M(t) \}$ and $\tilde t_j = \min\{ t: \tilde x _j \in M(t)\}  $. Define 
	\begin{align*}
	W_j = \begin{cases}
	t_j& \text{ if }  t_j < \tilde t_j \\
	-\tilde t_j & \text{ if } \tilde t_{j} < t_j
	\end{cases}
	\end{align*}
	
	We now verify that the FDP estimate given in \cite{foygel2014controlling} using the W-statistics are the same FDP estimate as \eqref{eq:fdp-estimate}.
	\begin{align*}
	FDP_W (t) = \frac{|\{j: W_j \le -t\} |}{ | \{j: W_j > t \} \vee 1 }\\
	= \frac{\{ j: \tilde t_j > t, j \in \tilde X\}}{   |\{ j: t_j > t, j \in X \}| \vee 1}\\
	= \frac{\{ j: M(t) \cap \tilde X\}}{   |\{ j: M(t) \cap X \}| \vee 1}.
	\end{align*}
	By invoking the main theorem of \cite{foygel2014controlling}, we see that \eqref{eq:stopping-rule} controls the modified FDR.
	\end{theorem}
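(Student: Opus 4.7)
The plan is to reduce the claim to the main theorem of \cite{foygel2014controlling}, which guarantees modified FDR control for any knockoff procedure whose $W$-statistics satisfy the antisymmetry (sign-flip) and sufficiency properties. Concretely, I need to show (i) that the $W_j$ defined in the theorem are valid knockoff statistics, and (ii) that the FDP estimate in \eqref{eq:fdp-estimate} agrees pointwise with the knockoff FDP estimate $FDP_W(t)$, so that the stopping rule \eqref{eq:stopping-rule} coincides with the knockoff stopping rule already analyzed in \cite{foygel2014controlling}.

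For (i), I would first check antisymmetry. When the columns $x_j$ and $\tilde x_j$ of the augmented design $[X;\tilde X]$ are swapped, the roles of $t_j$ and $\tilde t_j$ are interchanged, and the case-split in the definition of $W_j$ immediately yields $W_j \mapsto -W_j$, while $W_i$ for $i\neq j$ is unchanged since the remaining columns enter the path at the same times as before. Sufficiency then follows from the knockoff construction: for each null index $j$, the exchangeability of the augmented Gram matrix implies that swapping $x_j$ with $\tilde x_j$ leaves the joint distribution of $([X;\tilde X], y)$ invariant, so conditional on $(|W_1|, \ldots, |W_p|)$ the signs of the null $W_j$ are i.i.d.\ symmetric Bernoulli, which is exactly the hypothesis required by \cite{foygel2014controlling}.

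For (ii), I would carry out the identities already sketched in the theorem statement. Unwinding definitions, $\{j : W_j > t\}$ consists of those $j$ for which the real variable $x_j$ entered the path at time $t_j \le t$ strictly before its knockoff, which is in bijection with $M(t) \cap X$; symmetrically, $\{j : W_j \le -t\}$ is in bijection with $M(t) \cap \tilde X$. Therefore $FDP_W(t) = FDP(M(t))$, the two stopping rules coincide, and the main theorem of \cite{foygel2014controlling} applies directly to yield the modified FDR bound.

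The main obstacle is step (i). The models $M(j)$ are defined as unions over possibly non-nested lasso active sets along the regularization path, and one must check that this union construction commutes with the knockoff swap. The cleanest argument is that the entire lasso path on $[X;\tilde X]$ is equivariant under any transposition of columns, since the lasso objective depends on $[X;\tilde X]$ only through the Gram matrix and the vector $[X;\tilde X]^T y$, both of which transform equivariantly under column permutations. A secondary subtlety is handling ties $t_j = \tilde t_j$, where $W_j$ is undefined by the case-split; the standard remedy is to set $W_j = 0$ on such $j$ and exclude them from both the numerator and denominator of $FDP_W$, which preserves both the antisymmetry property and the identity $FDP_W(t) = FDP(M(t))$.
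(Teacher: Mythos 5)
Your proposal is correct and follows essentially the same route as the paper: construct the entry-time $W$-statistics, identify $FDP_W(t)$ with the estimate \eqref{eq:fdp-estimate} so that the stopping rule \eqref{eq:stopping-rule} becomes the knockoff stopping rule, and then invoke the main theorem of \cite{foygel2014controlling}. The only difference is that you explicitly verify the antisymmetry and sufficiency hypotheses of that theorem (via column-permutation equivariance of the lasso path) and address the tie case $t_j=\tilde t_j$, whereas the paper takes these for granted and records only the chain of equalities identifying the two FDP estimates; your added checks are genuinely needed for the invocation to be legitimate, so this is a strictly more careful version of the same argument.
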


By using the $FDP^+$ estimate in place of equation \eqref{eq:stopping-rule},
\begin{align}
 FDP^+ ( M) =\frac{| M \cap \tilde X|}{ | M \cap X|\vee 1 +1}\\
T^+= \min \{ t: FDP^+ (M(t))\le \alpha \}.
\label{eq:stopping-rule+}
\end{align}
we can control FDR, instead of modified FDR.
\begin{theorem}
	The model selected by the stopping rule in Equation \eqref{eq:stopping-rule+} controls  FDR, that is
	\[
	\Ee \left[ \frac{ |M(T) \cap V| }{|M(T) \cap X| \vee 1}  \right] \le \alpha
	\]
	where $V=\{ 1\le j \le p : \beta_j =0\}$.
\end{theorem}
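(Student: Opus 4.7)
The plan is to imitate the strategy used in the preceding modified-FDR theorem: reduce the claim to the FDR control result of \cite{foygel2014controlling} applied to a carefully chosen vector of $W$-statistics. First, I would reuse the construction of the $W_j$'s introduced in the proof of the modified-FDR theorem, namely define $t_j = \min\{t : x_j \in M(t)\}$ and $\tilde{t}_j = \min\{t : \tilde{x}_j \in M(t)\}$, and set $W_j = t_j$ if $t_j < \tilde t_j$ and $W_j = -\tilde t_j$ otherwise. The key sign-flip property required by \cite{foygel2014controlling} (conditional independence of signs for null coordinates) is inherited from the analogous statement already invoked in the modified-FDR proof, since the $W_j$'s are unchanged.

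Next, I would verify algebraically that the threshold in \eqref{eq:stopping-rule+} equals the threshold that the knockoff $FDP^+$ procedure would choose using these $W_j$'s. Concretely, the analogous computation to the one already carried out shows
\begin{align*}
FDP^+_W(t) &= \frac{|\{j : W_j \le -t\}| + 1}{|\{j : W_j \ge t\}| \vee 1} \\
&= \frac{|M(t) \cap \tilde X| + 1}{|M(t) \cap X| \vee 1},
\end{align*}
which matches $FDP^+(M(t))$ in \eqref{eq:fdp-estimate}$^+$ up to the convention of where the ``$+1$'' is placed in the denominator versus numerator (I would double-check the normalization in \eqref{eq:stopping-rule+}; if it is written with the $+1$ in the denominator rather than the numerator, I would use the equivalent reformulation so the match is exact). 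Thus the stopping rule $T^+$ coincides with the knockoff$^+$ stopping rule applied to $W$.

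Finally, I would invoke the FDR control part of the main theorem of \cite{foygel2014controlling}, which guarantees that the knockoff$^+$ selection rule using any antisymmetric $W$ satisfying the sign-flip property controls FDR at level $\alpha$. Combining this with the identification of the selected set $\{j : W_j \ge T^+\} = M(T^+) \cap X$ (restricted to originals) gives
\[
\Ee\!\left[\frac{|M(T^+) \cap V|}{|M(T^+) \cap X| \vee 1}\right] \le \alpha,
\]
which is the desired conclusion. The main obstacle, as in the previous theorem, is the bookkeeping step: making sure that the ``$+1$'' in the $FDP^+$ estimator is placed so that the induced stopping rule on the $W$-statistics is literally the knockoff$^+$ rule, and checking that the nested-models convention $M(j) = \cup_{l\ge j} A_l$ does not disturb the antisymmetry property that lets the sign-flip argument go through.
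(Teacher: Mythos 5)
Your proposal is correct and follows the paper's approach exactly: the paper's entire proof is the single line ``Same as the previous theorem,'' i.e.\ reuse the $W_j$ construction, identify the stopping rule with the knockoff$^+$ rule, and invoke the FDR-control half of the main theorem of \cite{foygel2014controlling}. You are in fact more careful than the paper, since you flag that \eqref{eq:stopping-rule+} as written places the ``$+1$'' in the denominator rather than the numerator of the knockoff$^+$ estimator --- a discrepancy the paper's one-line proof silently glosses over and which does need the reconciliation you describe for the reduction to be exact.
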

\begin{proof}
Same as the previous theorem.
\end{proof}

Let $M^\star=KO(y)$ be the final model returned by the knockoff procedure $k$ applied to the regression pair $(y,X)$ using the lasso models at the sequence $\lambda_1, \ldots, \lambda_k$. Our goal is to do inference for $\beta^0_{j} = e_j X^\dagger \mu $ for some $j \in M^\star$. The selection event, the set of $y$'s that lead us to testing $\beta^0 _{j}$, is $S_{j} = \{ y: j \in KO(y) \}$. This precise set is difficult to analytically describe, so we resort to Algorithm \ref{alg:approx-pvalue}. 

We can analytically describe the finer event $$
S= \{ y: \left(L(y,\lambda_1), \ldots, L(y, \lambda_{T+1}) \right)  =( M(1), \ldots, M(T+1)) \} \subset S_j,
$$
 where $L(y, \lambda)$ is the active set of lasso at $\lambda$. For any $y \in S$, the knockoff procedure defined by the stopping rule \eqref{eq:stopping-rule} returns the same set of variables, so $S \subset S_j$. The set $S$ is described by the intersection of the  union of linear inequalities given in Section \ref{sec:minimal}. This allows us to do inference using the results of Theorem \ref{thm:unconditional-pivot}.

We next describe a method using the general method of Chapter \ref{sec:general-method}. Using the COS method, we first describe the knockoff selection event. The selection event for variable $j$ is $S_j = \{ y: j \in KO(y)\}$. The general method instead uses the one-dimensional finer selection event $U_j = \{ c: j \in  KO ( \Pind{y} +c \frac{ \Sigma \eta}{\eta^T \Sigma \eta} )\}$. This set is approximated using Algorithm \ref{alg:approx-pvalue} that computes an approximation to $U_j$ and an approximate p-value.

Since the knockoff method assumes a well-specified linear model, we can use the reference distribution $y \sim \cN( X \beta , \sigma^2 I)$ instead of $y \sim \cN( \mu, \sigma^2 I)$. This is the well-specified linear regression model of \cite{fithian2014optimal}. The selection event is now $U_j = \{ C: j \in KO ( P_{X_{-j}}^\perp y + C)  , C \in span( X_{-j})^\perp \}$. A multi-dimensional analog of Algorithm \ref{alg:approx-pvalue} can now be applied, but the search set $D$ is now over a $n-p+1$ dimensional subset. 

\part{Learning Mixed Graphical Models}
\chapter{Learning Mixed Graphical Models}
\section{Introduction}
Many authors have considered the problem of learning the edge
structure and parameters of sparse undirected graphical models. We
will focus on using the $l_1$ regularizer to promote sparsity.  This
line of work has taken two separate paths: one for learning continuous
valued data and one for learning discrete valued data.  However, typical
data sources contain both continuous and discrete
variables: population survey data, genomics data, url-click pairs etc. For genomics data, in addition to the gene expression values, we have attributes attached to each sample such as gender, age, ethniticy etc. In
this work, we consider learning mixed models with both continuous Gaussian
variables and discrete categorical variables. 

For only continuous variables, previous work assumes a
multivariate Gaussian (Gaussian graphical) model with mean $0$ and
inverse covariance $\Theta$. $\Theta$ is then estimated via the
graphical lasso by minimizing the regularized negative log-likelihood
$\ell(\Theta)+\lambda \norm{\Theta}_{1}$. Several efficient methods
for solving this can be found in \cite{friedman2008,
  banerjee2008}. Because the graphical lasso problem is
computationally challenging, several authors considered methods
related to the pseudolikelihood (PL) and nodewise regression
\citep{meinshausen06, friedman2010,peng2009}.  For discrete models,
previous work focuses on estimating a pairwise Markov random field of
the form $p(y) \propto \exp{\sum_{r\leq j} \phi_{rj}(y_r,y_j)}$, where $\phi_{rj}$ are pairwise potentials. The
maximum likelihood problem is intractable for models with a moderate
to large number of variables (high-dimensional) because it requires
evaluating the partition function and its derivatives. Again previous work has focused
on the pseudolikelihood approach 
\citep{guo2010joint,schmidt2010,schmidt2008,hoefling2009,jalali2011,lee2006,ravikumar2010}. 

Our main contribution here is to propose a model that connects the
discrete and continuous models previously discussed. The conditional
distributions of this model are two widely adopted and well understood
models: multiclass logistic regression and Gaussian linear
regression. In addition, in the case of only discrete variables, our
model is a pairwise Markov random field; in the case of only
continuous variables, it is a Gaussian graphical model. Our proposed
model leads to a natural scheme for structure learning that
generalizes the graphical Lasso. Here the parameters occur as
singletons, vectors or blocks, which we penalize using group-lasso
norms, in a way that respects the symmetry in the model. Since each
parameter block is of different size, we also derive a calibrated
weighting scheme to penalize each edge fairly. We also discuss a
conditional model (conditional random field) that allows the output
variables to be mixed, which can be viewed as a multivariate response
regression with mixed output variables. Similar ideas have been used
to learn the covariance structure in multivariate response regression
with continuous output variables
\cite{witten2009covariance,kim2009multivariate,rothman2010sparse}.

In Section \ref{sec:mgm}, we introduce our new mixed graphical model and discuss
previous approaches to modeling mixed data. Section \ref{sec:paramest} discusses the
pseudolikelihood approach to parameter estimation and connections to
generalized linear models. Section \ref{sec:penalty} discusses a natural method to
perform structure learning in the mixed model. Section \ref{sec:calibration} presents the
calibrated regularization scheme, Section \ref{sec:msc} discusses the consistency of the estimation procedures, and Section \ref{sec:optalg} discusses two methods
for solving the optimization problem. Finally, Section \ref{sec:condmodel} discusses a
conditional random field extension and Section \ref{sec:exp} presents empirical
results on a census population survey dataset and synthetic
experiments.

\section{Mixed Graphical Model}
\label{sec:mgm}
We propose a pairwise graphical model on continuous and discrete variables. The model is a pairwise Markov random field with density $p(x,y;\Theta)$ proportional to
\begin{align}
\exp{\left(\sum_{s=1}^p\sum_{t=1}^p-\frac{1}{2} \beta_{st} x_{s} x_{t}+\sum_{s=1}^{p}\alpha_{s} x_{s} +\sum_{s=1}^p \sum_{j=1}^{q} \rho_{sj}(y_{j})x_{s}+\sum_{j=1}^q \sum_{r=1}^q \phi_{rj}(y_r , y_j)\right)}.
\label{eq:jointdensity}
\end{align}
Here $x_s$ denotes the $s$th of $p$ continuous variables, and $y_j$ the $j$th of $q$ discrete variables. 
The joint model is parametrized by $\Theta= [\{\beta_{st}\}, \{\alpha_s \}, \{\rho_{sj}\},\{\phi_{rj}\}]$. The discrete $y_r$ takes on $L_r$ states. The model parameters are $\beta_{st}$ continuous-continuous edge potential, $\alpha_{s}$ continuous node potential, $\rho_{sj}(y_{j})$ continuous-discrete edge potential, and $\phi_{rj}(y_r,y_j)$ discrete-discrete edge potential. $\rho_{sj}(y_j )$ is a function taking $L_j$ values $\rho_{sj}(1),\ldots,\rho_{sj}(L_j)$. Similarly, $\phi_{rj}(y_r,y_j)$ is a bivariate function taking on $L_r \times L_j$ values. Later, we will think of $\rho_{sj}(y_j)$ as a vector of length $L_j$ and $\phi_{rj}(y_r,y_j)$ as a matrix of size $L_r \times L_j$.

The two most important features of this model are:
\begin{enumerate}
\item the conditional distributions are given by Gaussian linear regression and multiclass logistic regressions;
\item the model simplifies to a multivariate Gaussian in the case of only continuous variables and simplifies to the usual discrete pairwise Markov random field in the case of only discrete variables.
\end{enumerate}
The conditional distributions of a graphical model are of critical
importance. The absence of an edge corresponds to two variables being
conditionally independent. The conditional independence can be read
off from the conditional distribution of a variable on all others. For
example in the multivariate Gaussian model, $x_s$ is conditionally
independent of $x_t$ iff the partial correlation coefficient is
$0$. The partial correlation coefficient is also the regression
coefficient of $x_t$ in the linear regression of $x_s$ on all other
variables. Thus the conditional independence structure is captured by
the conditional distributions via the regression coefficient of a
variable on all others. Our mixed model has the desirable property
that the two type of conditional distributions are simple Gaussian
linear regressions and multiclass logistic regressions. This follows
from the pairwise property in the joint distribution. In more detail:
\begin{enumerate}
\item The conditional distribution of $y_r$ given the rest is multinomial, with probabilities defined  by a multiclass logistic regression where the covariates are the other variables $x_s$ and $y_{\bs r}$ (denoted collectively by $z$ in the right-hand side):
  \begin{equation}
    \label{eq:simple1}
p(y_r =k|y_{\bs r}, x; \Theta) =\frac{\exp{\left( \omega_{k}^{T} z \right)}}{\sum_{l=1}^{L_r} \exp{\left( \omega_{l}^{T} z\right) }} = \frac{\exp{\left( \omega_{0k} + \sum_{j} \omega_{kj} z_j \right)}} {\sum_{l=1}^{L_r}\exp{\left( \omega_{0l} + \sum_j \omega_{lj} z_j \right)}}
\end{equation}
Here we use a simplified notation, which we make explicit in Section~\ref{sec:pseudolikelihood}. The discrete variables are represented as dummy variables for each state, e.g.  $z_j = \indicator{y_u = k}$, and for continuous variables $z_s =x_s$.
\item The conditional distribution of $x_s$ given the rest is Gaussian, with a mean function defined by a linear regression with predictors $x_{\bs s}
$ and $y_r$.
\begin{align}
E(x_s | x_{\bs s}, y_r;\Theta ) &=\omega^{T} z= \omega_0 +\sum_j z_j \omega_j\label{eq:simple2}\\
p(x_s | x_{\bs s}, y_r;\Theta )&= \frac{1}{\sqrt{2\pi} \sigma_s} \exp{\left(-\frac{1}{2 \sigma_s^2} ( x_{s} -\omega^{T} z )^2 \right)}.\nonumber
\end{align}
As before, the discrete variables are represented as dummy variables for each state $z_j = \indicator{y_u = k}$ and for continuous variables $z_s =x_s$.
\end{enumerate}
The exact form of the conditional distributions (\ref{eq:simple1}) and (\ref{eq:simple2}) are given in (\ref{eq:discond}) and (\ref{eq:ctscond}) in Section~\ref{sec:pseudolikelihood}, where the  regression parameters $\omega_j$ are defined in terms of the parameters $\Theta$.

The second important aspect of the mixed model is the two special cases of only continuous and only discrete variables.
\begin{enumerate}
\item Continuous variables only.  The pairwise mixed model reduces to the familiar multivariate Gaussian parametrized by the symmetric positive-definite inverse covariance matrix $B=\{\beta_{st}\}$ and mean $\mu=B^{-1}\alpha$,
$$
p(x)\propto \exp\left( -\frac{1}{2}(x-B^{-1} \alpha)^{T} B (x-B^{-1}\alpha)\right).
$$
\item Discrete variables only. The pairwise mixed model reduces to a pairwise discrete (second-order interaction) Markov random field,
\begin{equation*}
p(y)\propto\exp{\left(\sum_{j=1}^q \sum_{r=1}^q \phi_{rj}(y_r , y_j)\right)}.
\label{eq:discrete}
\end{equation*}
\end{enumerate}

Although these are the most important aspects, we can characterize the joint distribution further.
The conditional distribution of the continuous variables given the discrete follow a multivariate Gaussian distribution, $p(x|y)= \No(\mu(y),B^{-1})$. Each of these Gaussian distributions share the same inverse covariance matrix $B$ but differ in the mean parameter, since all the parameters are pairwise. By standard multivariate Gaussian calculations,
\begin{align}
p(x|y)&=\No(B^{-1} \gamma(y),B^{-1})\\
\{\gamma(y)\}_s&= \alpha_s+\sum_{j} \rho_{sj}(y_j)\\
p(y)  &\propto \exp{\left(\sum_{j=1}^q \sum_{r=1}^j \phi_{rj} (y_r, y_j) +\frac{1}{2}  \gamma(y)^{T} B^{-1} \gamma(y)\right)}
\end{align}
Thus we see that the continuous variables conditioned on the discrete
are multivariate Gaussian with common covariance, but with means that
depend on the value of the discrete variables. The means depend
additively on the values of the discrete variables since
$\{\gamma(y)\}_s= \sum_{j=1}^r \rho_{sj}(y_j)$. The marginal $p(y)$
has a known form, so for models with few number of discrete variables
we can sample efficiently.

\subsection{Related work on mixed graphical models}
\citet{Lauritzen1996} proposed a type of mixed graphical model, with
the property that conditioned on discrete variables, $p(x|y) =
\No(\mu(y), \Sigma(y) )$. The homogeneous mixed graphical model
enforces common covariance, $\Sigma(y) \equiv \Sigma$. Thus our
proposed model is a special case of Lauritzen's mixed model with the
following assumptions: common covariance, additive mean assumptions
and the marginal $p(y)$ factorizes as a pairwise discrete Markov
random field. With these three assumptions, the full model simplifies
to the mixed pairwise model presented. Although the full model is more
general, the number of parameters scales exponentially with the
number of discrete variables, and the conditional distributions are
not as convenient.  For each state of the discrete variables there is
a mean and covariance. Consider an example with $q$ binary variables
and $p$ continuous variables; the full model requires estimates of
$2^q$ mean vectors and covariance matrices in $p$ dimensions.  Even if
the homogeneous constraint is imposed on Lauritzen's model, there are
still $2^q$ mean vectors for the case of binary discrete
variables. The full mixed model is very complex and cannot be easily
estimated from data without some additional assumptions. In
comparison, the mixed pairwise model has number of parameters
$O((p+q)^2)$ and allows for a natural regularization scheme which
makes it appropriate for high dimensional data. 

An alternative to the regularization approach that we take in this paper, is the limited-order correlation hypothesis testing method \cite{tur2012learning}. The authors develop a hypothesis test via likelihood ratios for conditional independence. However, they restrict to the case where the discrete variables are marginally independent so the maximum likelihood estimates are well-defined for $p>n$.

There is a line of work regarding parameter estimation in undirected
mixed models that are decomposable: any path between two discrete
variables cannot contain only continuous variables. These models allow
for fast exact maximum likelihood estimation through node-wise
regressions, but are only applicable when the structure is known and
$n>p$ \citep{edwards2000introduction}. There is also related work on
parameter learning in directed mixed graphical models. Since our
primary goal is to learn the graph structure, we forgo exact parameter
estimation and use the pseudolikelihood. Similar to the exact maximum
likelihood in decomposable models, the pseudolikelihood can be
interpreted as node-wise regressions that enforce symmetry.

To our knowledge, this work is the first to consider convex optimization procedures for learning the edge structure in mixed graphical models.
\section{Parameter Estimation: Maximum Likelihood and Pseudolikelihood}
\label{sec:paramest}
Given samples $(x_i,y_i)_{i=1}^n$, we want to find the maximum
likelihood estimate of $\Theta$. This can be done by minimizing the
negative log-likelihood of the samples:
\begin{align}
\ell(\Theta)&= -\sum_{i=1}^n \log{ p(x_i,y_i;\Theta)} \mbox{ where }\\
\log{p(x,y;\Theta)}&= \sum_{s=1}^{p}\sum_{t=1}^p -\frac{1}{2} \beta_{st} x_{s} x_{t}+\sum_{s=1}^{p}\alpha_{s} x_{s} +\sum_{s=1}^p \sum_{j=1}^q\rho_{sj}(y_{j})x_{s} \nonumber \\
& +\sum_{j=1}^q\sum_{r=1}^j \phi_{rj}(y_r , y_j)-\log{Z(\Theta)}
\end{align}
The negative log-likelihood is convex, so standard gradient-descent
algorithms can be used for computing the maximum likelihood
estimates. The major obstacle here is $Z(\Theta)$, which involves a high-dimensional integral. Since the pairwise mixed model includes both the discrete
and continuous models as special cases, maximum likelihood estimation
is at least as difficult as the two special cases, the first of  which is a
well-known computationally intractable problem. We defer the
discussion of maximum likelihood estimation to the supplementary material.
\subsection{Pseudolikelihood}
\label{sec:pseudolikelihood}
The pseudolikelihood method \cite{besag1975} is a computationally efficient and consistent estimator formed by products of all the conditional distributions:
\begin{align}
\tilde{\ell}(\Theta|x,y)=-\sum_{s=1}^{p} \log{p(x_s|x_{\bs s},y;\Theta)}-\sum_{r=1}^{q} \log{p(y_{r}|x,y_{\bs r};\Theta)} 
\label{eq:negpl}
\end{align}  
The conditional distributions $p(x_{s}| x_{\backslash s}, y; \theta)$ and $p(y_r = k | y_{\backslash r,}, x;\theta)$ take on the familiar form of linear Gaussian and (multiclass) logistic regression, as we pointed out in (\ref{eq:simple1}) and (\ref{eq:simple2}). Here are the details:
\begin{itemize}
\item The conditional distribution of a continuous variable $x_s$ is Gaussian with a linear regression model for the mean, and unknown variance.
\begin{equation}
p(x_s | x_{\backslash s}, y;\Theta)=\frac{\sqrt{\beta_{ss}}}{{\sqrt{2 \pi}}}\exp{ \left(\frac{-\beta_{ss}}{2} \left(\frac{\alpha_s + \sum_{j} \rho_{sj} (y_j) - \sum_{t \neq s} \beta_{st} x_{t} }{\beta_{ss}} -x_{s}\right)^2\right)}
\label{eq:ctscond}
\end{equation}
\item The conditional distribution of a discrete variable $y_r$ with $L_r$ states is a multinomial distribution, as used in (multiclass) logistic regression. Whenever a discrete variable is a predictor, each of its  levels contribute an additive effect; continuous variables contribute linear effects.
\begin{equation}
p(y_r| y_{\backslash r,}, x;\Theta) =\frac{\exp{\left(\sum_{s} \rho_{sr}(y_r) x_{s} +\phi_{rr} (y_r,y_r) +\sum_{j\neq r} \phi_{rj}(y_r,y_{j}) \right)}}{\sum_{l=1}^{L_r } \exp{\left(\sum_{s} \rho_{sr}(l) x_{s} +\phi_{rr} (l,l) +\sum_{j\neq r} \phi_{rj}(l,y_{j}) \right)} } \label{eq:discond}
\end{equation}
\end{itemize}
Taking the negative log of both gives us
\begin{align}
-\log{p(x_s | x_{\backslash s}, y;\Theta)}&= -\frac{1}{2} \log{\beta_{ss}} +\frac{\beta_{ss}}{2} \left(\frac{\alpha_s}{\beta_{ss}}+\sum_{j}\frac{\rho_{sj}(y_j)}{\beta_{ss}} - \sum_{t\neq s} \frac{\beta_{st}}{\beta_{ss}} x_{t} - x_{s}\right)^2\\
-\log{p(y_r| y_{\backslash r,}, x;\Theta)}&=-\log{\frac{\exp{\left(\sum_{s} \rho_{sr}(y_r) x_{s} +\phi_{rr} (y_r,y_r) +\sum_{j\neq r} \phi_{rj}(y_r,y_{j}) \right)}}{\sum_{l=1}^{L_r } \exp{\left(\sum_{s} \rho_{sr}(l) x_{s} +\phi_{rr} (l,l) +\sum_{j\neq r} \phi_{rj}(l,y_{j}) \right)} }}
\end{align}
A generic parameter block, $\theta_{uv}$, corresponding to an edge $(u,v)$ appears twice in the pseudolikelihood, once for each  of the conditional distributions $p(z_u|z_v)$ and $p(z_v|z_u)$. 
\begin{proposition}
The negative log pseudolikelihood in \eqref{eq:negpl} is jointly convex in all the parameters $\{\beta_{ss},\beta_{st}, \alpha_{s}, \phi_{rj}, \rho_{sj}\}$ over the region $\beta_{ss}>0$. 
\label{prop:cvx}
\end{proposition}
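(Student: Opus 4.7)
The plan is to decompose the pseudolikelihood into its $p+q$ summands (one Gaussian conditional for each continuous variable $x_s$, one multinomial logistic conditional for each discrete variable $y_r$) and verify that each summand is jointly convex in $\Theta$ on the region $\{\beta_{ss}>0\}$; since sums of convex functions are convex, this yields the result. The summands involve disjoint overlapping parameter blocks, so there is nothing extra to check once each piece is handled.

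The multinomial terms $-\log p(y_r|y_{\bs r},x;\Theta)$ are the routine part. Writing out the expression, the numerator contributes a term that is linear (hence affine) in the $\rho_{sr}(\cdot)$ and $\phi_{rj}(\cdot,\cdot)$ blocks, while the log-normalizer is $\log\sum_l \exp(g_l(\Theta;x,y))$ where each $g_l$ is affine in $\Theta$. Since the log-sum-exp function is convex and precomposition with an affine map preserves convexity, each discrete conditional is convex in $\Theta$, and this piece does not involve $\beta_{ss}$ at all so the constraint is irrelevant here.

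The Gaussian conditional terms are where the real content lies. From \eqref{eq:ctscond}, after clearing denominators inside the square,
\[
-\log p(x_s|x_{\bs s},y;\Theta) = -\tfrac{1}{2}\log\beta_{ss} + \frac{1}{2\beta_{ss}}\bigl(a_s(\Theta) - \beta_{ss}\,x_s\bigr)^2,
\]
where $a_s(\Theta) := \alpha_s + \sum_j \rho_{sj}(y_j) - \sum_{t\neq s}\beta_{st}x_t$ is affine in $\Theta$. The first summand $-\tfrac{1}{2}\log\beta_{ss}$ is convex on $\beta_{ss}>0$. For the second, the key fact is that the perspective function $f(u,v) = u^2/v$ is jointly convex on $\R\times\R_{>0}$; this is standard and follows, e.g., from checking that its Hessian $\tfrac{2}{v^3}\begin{pmatrix}v^2 & -uv \\ -uv & u^2\end{pmatrix}$ is positive semidefinite. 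Setting $u = a_s(\Theta) - \beta_{ss}\,x_s$ and $v = \beta_{ss}$, both of which are affine in $\Theta$, gives joint convexity by affine precomposition.

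The only subtlety — and the step I expect to raise eyebrows — is that $\beta_{ss}$ appears simultaneously in the numerator $u$ and denominator $v$, so one might worry about cancellation spoiling convexity. This worry is dispensed with by the observation above: joint convexity of $u^2/v$ means that \emph{any} affine map into its domain produces a convex composition, regardless of how the coordinates $u$ and $v$ share variables. With each of the $p+q$ terms convex on the stated domain and the domain $\{\beta_{ss}>0\text{ for all }s\}$ itself a convex set, $\tilde\ell(\Theta)$ is jointly convex, completing the argument.
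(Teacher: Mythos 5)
Your proof is correct and follows essentially the same route as the paper's: decompose the pseudolikelihood term by term, handle the logistic conditionals by log-sum-exp convexity, and reduce the Gaussian conditionals to the joint convexity of the quadratic-over-linear (perspective) function $u^2/v$ on $v>0$ composed with an affine map of $\Theta$ — the paper's $f(u,v)=\tfrac{v}{2}(\tfrac{u}{v}-c)^2=\tfrac{1}{2}(u-cv)^2/v$ is exactly your function after absorbing $-cv$ into the numerator. The only cosmetic difference is that you certify convexity of $u^2/v$ via its Hessian while the paper uses the Schur-complement characterization of its epigraph; both are standard and the argument is complete.
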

We prove Proposition~\ref{prop:cvx} in the Supplementary Materials.
\subsection{Separate node-wise regression}
A simple approach to parameter estimation is via separate node-wise
regressions; a generalized linear model is used to estimate
$p(z_s|z_{\bs s})$ for each $s$. Separate regressions were used in
\cite{meinshausen06} for the Gaussian graphical model and
\cite{ravikumar2010} for the Ising model.  The method can be thought of as
an asymmetric form of the pseudolikelihood since the pseudolikelihood
enforces that the parameters are shared across the conditionals. Thus
the number of parameters estimated in the separate regression is
approximately double that of the pseudolikelihood, so we expect that
the pseudolikelihood outperforms at low sample sizes and low
regularization regimes. The node-wise regression was used as our
baseline method since it is straightforward to extend it to the mixed
model. As we predicted, the pseudolikelihood or joint procedure
outperforms separate regressions; see top left box of Figures
\ref{fig:sepvspln100} and
\ref{fig:sepvspln10000}. \cite{liu2012distributed,liu2011learning}
confirm that the separate regressions are outperformed by
pseudolikelihood in numerous synthetic settings.

Concurrent work of
\cite{yang2012graphical,yang2013graphical} extend the separate
node-wise regression model from the special cases of Gaussian and
categorical regressions to generalized linear models, where the
univariate conditional distribution of each node $p(x_s|x_{\backslash
  s})$ is specified by a generalized linear model (e.g. Poisson,
categorical, Gaussian). By specifying the conditional distributions,
\cite{besag1974spatial} show that the joint distribution is also
specified. Thus another way to justify our mixed model is to define
the conditionals of a continuous variable as Gaussian linear
regression and the conditionals of a categorical variable as multiple
logistic regression and use the results in \cite{besag1974spatial} to
arrive at the joint distribution in \eqref{eq:jointdensity}.
However, the neighborhood selection algorithm in
\cite{yang2012graphical,yang2013graphical} is restricted to models of
the form $ p(x) \propto \exp\left( \sum_s \theta_s x_s +\sum_{s,t}
  \theta_{st} x_s x_t +\sum_s C(x_s) \right).
$
In particular, this procedure cannot be applied to edge selection in
our pairwise mixed model in \eqref{eq:jointdensity} or the
categorical model in  \eqref{eq:discrete} with greater than 2
states. Our baseline method of separate regressions is closely related
to the neighborhood selection algorithm they proposed; the baseline
can be considered as a generalization of
\cite{yang2012graphical,yang2013graphical} to allow for more general
pairwise interactions with the appropriate regularization to select
edges. Unfortunately, the theoretical results in
\cite{yang2012graphical, yang2013graphical} do not apply to the
baseline nodewise regression method, nor the joint pseudolikelihood.

\section{Conditional Independence and Penalty Terms}
\label{sec:penalty}
In this section, we show how to incorporate edge selection into the maximum likelihood or pseudolikelihood procedures. In the graphical representation of probability distributions, the absence of an edge $e=(u,v)$ corresponds to a conditional independency statement that variables $x_u$ and $x_v$ are conditionally independent given all other variables \citep{koller2009}. We would like to maximize the likelihood subject to a penalization on the number of edges since this results in a sparse graphical model. In the pairwise mixed model, there are 3 type of edges
\begin{enumerate}
\item $\beta_{st}$ is a scalar that corresponds to an edge from $x_s$
  to $x_t$. $\beta_{st}=0$ implies $x_s$ and $x_t$ are conditionally
  independent given all other variables. This parameter is in two
  conditional distributions, corresponding to either $x_s$ or $x_t$ is
  the response variable, $p(x_{s} |x_{\backslash s}, y;\Theta)$ and
  $p(x_{t} |x_{\backslash t}, y;\Theta)$.
\item $\rho_{sj}$ is a vector of length $L_j$. If $\rho_{sj}(y_j) =0 $
  for all values of $y_j$, then $y_j$ and $x_s$ are conditionally
  independent given all other variables. This parameter is in two
  conditional distributions, corresponding to either $x_s$ or $y_j$ being
  the response variable: $p(x_{s} |x_{\backslash s}, y;\Theta)$ and
  $p(y_{j} |x, y_{\backslash j};\Theta)$.
\item $\phi_{rj}$ is a matrix of size $L_r \times L_j$. If $\phi_{rj}
  (y_r, y_j) =0 $ for all values of $y_r$ and $y_j$, then $y_r$ and
  $y_j$ are conditionally independent given all other variables. This
  parameter is in two conditional distributions, corresponding to
  either $y_r$ or $y_j$ being the response variable, $p(y_{r} |x,
  y_{\backslash r};\Theta)$ and $p(y_{j} |x, y_{\backslash
    j};\Theta)$.
\end{enumerate}
For conditional independencies that involve discrete variables, the absence of that edge
requires that the entire matrix $\phi_{rj}$ or vector $\rho_{sj}$ is
$0$ \footnote{If $\rho_{sj} (y_j) =constant$, then $x_s$ and $y_j$ are also conditionally independent. However, the unpenalized term $\alpha$ will absorb the constant, so the estimated $\rho_{sj} (y_j)$ will never be constant for $\lambda>0$.}. The form of the pairwise mixed model motivates the following
regularized optimization problem
\begin{align}
\minimize_{\Theta}~ \ell_{\lambda}(\Theta)=\ell(\Theta)+\lambda\left(\sum_{s<t} \indicator{\beta_{st}\not = 0}  +\sum_{sj} \indicator{\rho_{sj} \not \equiv 0} + \sum_{r<j} \indicator{\phi_{rj} \not \equiv 0} \right).
\end{align}
All parameters that correspond to the same edge are grouped in the same indicator function. This problem is non-convex, so we replace the $l_{0}$ sparsity and group sparsity penalties with the appropriate convex relaxations. For scalars, we use the absolute value ($l_1$ norm), for vectors we use the $l_2$ norm, and for matrices we use the Frobenius norm. This choice corresponds to the standard relaxation from group $l_0$ to group $l_1/l_2$ (group lasso) norm \citep{bach2011optimization,yuan2006model},
\begin{align}
\minimize_{\Theta}\ \ell_{\lambda}(\Theta)=\ell(\Theta)+ \lambda \left(\sum_{s=1}^p \sum_{t=1}^{s-1} |\beta_{st}| +\sum_{s=1}^p \sum_{j=1}^q \norm{\rho_{sj}}_2 +\sum_{j=1}^q \sum_{r=1}^{j-1} \norm{\phi_{rj}}_F \right).
\label{eq:penpl}
\end{align}

\begin{figure}
\centering
\includegraphics[width=.4\textwidth]{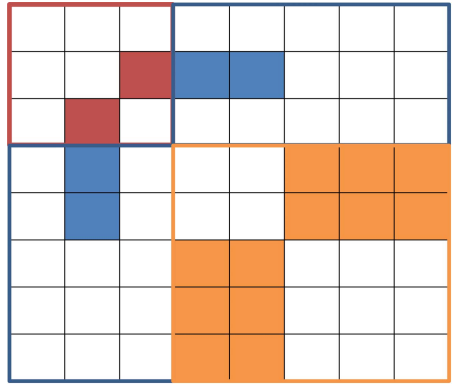}
\caption{\small\em Symmetric matrix represents the parameters $\Theta$ of the model. This example has $p=3$, $q=2$, $L_1 =2$ and $L_2 =3$. The red square corresponds to the continuous graphical model coefficients $B$ and the solid red square is the scalar $\beta_{st}$. The blue square corresponds to the coefficients $\rho_{sj}$ and the solid blue square is a vector of parameters $\rho_{sj} (\cdot)$. The orange square corresponds to the coefficients $\phi_{rj}$ and the solid orange square is a matrix of parameters $\phi_{rj} (\cdot, \cdot)$. The matrix is symmetric, so each parameter block appears in two of the conditional probability regressions.}
\end{figure}

\section{Calibrated regularizers}
\label{sec:calibration}
In  \eqref{eq:penpl} each of the group penalties are treated as equals, irrespective of the size of the group. We suggest a calibration or weighting scheme to balance the load in a more equitable way. We introduce weights for each group of parameters and show how to choose the weights such that each parameter set is treated equally under $p_F$, the fully-factorized independence model \footnote{Under the independence model $p_F$ is fully-factorized $ p(x,y) = \prod_{s=1}^{p} p(x_{s}) \prod_{r=1} ^{q} p(y_{r})$}
\begin{align}
\minimize_{\Theta}\ \ell(\Theta)+ \lambda \left(\sum_{t=1}^p \sum_{t=1}^{s-1} w_{st}|\beta_{st}| +\sum_{s=1}^p \sum_{j=1}^q w_{sj} \norm{\rho_{sj}}_2 +\sum_{j=1}^q \sum_{r=1}^{j-1} w_{rj}\norm{\phi_{rj}}_F \right)
\label{eq:weightpenpl}
\end{align}
Based on the KKT conditions \citep{friedman2007pathwise}, the parameter group $\theta_g$ is non-zero if 
\begin{gather*}
\norm{\frac{\partial \ell}{\partial \theta_{g}}} > \lambda w_{g}
\end{gather*}
where $\theta_{g}$ and $w_{g}$ represents one of the parameter groups and its corresponding weight.
Now $\frac{\partial \ell}{\partial \theta_{g}}$ can be viewed as a generalized residual, and for different groups these are different dimensions---e.g. scalar/vector/matrix. So even under the independence model (when all terms should be zero), one might expect some terms $\norm{\frac{\partial \ell}{\partial \theta_{g}}}$ to have a better than random chance of being non-zero  (for example, those of bigger dimensions). 
Thus for all parameters to be on equal footing, we would like to choose the weights $w$ such that
\begin{equation} 
 E_{p_F}\norm{\frac{\partial \ell}{\partial \theta_{g}}}=\text{constant}\times w_{g},
\label{eq:ideal-weights} 
 \end{equation}
where $p_F$ is the fully factorized (independence) model. We will refer to these as the exact weights. These weights do not have a closed form expression, so we propose an approximation to these.
It is simpler to compute in closed form $E_{p_F}\norm{\frac{\partial \ell}{\partial \theta_{g}}}^2$, so we may use approximate weights
\begin{equation}
w_{g} \propto \sqrt{E_{p_F}\norm{\frac{\partial \ell}{\partial \theta_{g}}}^2}
\label{eq:approx-weights}
\end{equation}

In the supplementary material, we show that the approximate weights \eqref{eq:approx-weights} are
\begin{equation}
\begin{aligned}
w_{st}&=\sigma_{s} \sigma_{t}\\
w_{sj}&=\sigma_{s} \sqrt{ \sum_{a} p_{a} (1-p_{a})}\\
w_{rj}&=\sqrt{ \sum_{a} p_{a} (1-p_{a}) \sum_{b} q_{b} (1-q_{b})}
\label{eq:approx-weights}
\end{aligned}
\end{equation}
$\sigma_{s}$ is the standard deviation of the continuous variable $x_{s}$. $p_{a} = Pr(y_{r}=a)$ and $q_b= Pr(y_j =b)$ . For all $3$ types of parameters, the weight has the form of $w_{uv} = \mathbf{tr}(\mathbf{cov}(z_{u})) \mathbf{tr} (\mathbf{cov}(z_{v}))$, where $z$ represents a generic variable and $\mathbf{cov}(z)$ is the variance-covariance matrix of $z$.

We conducted a simulation study to show that calibration is needed. Consider a model with $4$ independent variables: 2 continuous with variance $10$ and $1$, and 2 discrete variables with $10$ and $2$ levels.

\begin{figure}
\begin{tabular}{|l|c|c|c|c|c|c|}
\hline
&$\norm{\frac{\partial \ell}{\partial \phi_{12}}}_F$ &
$\norm{\frac{\partial \ell}{\partial \rho_{11}}}_2$&
$\norm{\frac{\partial \ell}{\partial \rho_{21}}}_2$&
$\norm{\frac{\partial \ell}{\partial \rho_{12}}}_2$&
$\norm{\frac{\partial \ell}{\partial \rho_{22}}}_2$&
$\left|\frac{\partial \ell}{\partial \beta_{12}}\right|$\\
\hline
 Exact weights $w_g$ \eqref{eq:ideal-weights}& 0.18 & 0.63 &  0.19 &  0.47 & 0.15 & 0.53\\
\hline
Approximate weights $w_g$ \eqref{eq:approx-weights} & 0.13 & 0.59 & 0.18 & 0.44 & 0.13 & 0.62\\
\hline
\end{tabular}
\caption{Row 1 shows the exact weights $w_g$ computed via Equation \eqref{eq:ideal-weights} using Monte Carlo simulation. These are the ideal weights, but they are not available in closed-form. Row 2 shows the approximate weights computed using Equation \eqref{eq:approx-weights}. As we can see, the weights are far from uniform, and the approximate weights are close to the exact weights. }
\label{tab:calibration}
\end{figure}

There are $6$ candidate edges in this model and from row 1 of Table \ref{tab:calibration} we can see the sizes of the gradients are different. In fact, the ratio of the largest gradient to the smallest gradient is greater than $4$. The edges $\rho_{11}$ and $\rho_{12}$ involving the first continuous variable with variance $10$ have large edge weights, than the corresponding edges, $\rho_{21}$ and $\rho_{22}$ involving the second continuous variable with variance $1$. Similarly, the edges involving the first discrete variable with $10$ levels are larger than the edges involving the second discrete variable with $2$ levels. This reflects our intuition that larger variance and longer vectors will have larger norm.

Had the calibration weights been chosen via Equation \ref{eq:ideal-weights}, $w=\{w_g\}_g$ and the vector of gradients $\nabla \ell=\{\norm{\frac{\partial \ell}{\partial \theta_g}}\}_g$ would have cosine similarity, $sim(u,v) = \frac{u^T v}{\norm{u}\norm{v}}=1$. The approximate weights we used are from Equation \eqref{eq:approx-weights} and have cosine similarity $$sim(w,\nabla \ell)=.993,$$ which is extremely close to $1$. Thus the calibration weights are effective in accounting for the size and variances of each edge group.

In the second simulation study, we used a model with $3$ independent variables: one continuous, and 2 discrete variables with $2$ and $4$ levels. There are $3$ candidate edges, and we computed the probability that a given edge would be the first allowed to enter the model using $3$ different calibration schemes. From Table \ref{tab:calibration-2}, we see that the uncalibrated regularizer would select the edge between the continuous variable and the $4$ level discrete variable about $73\%$ of the time. A perfect calibration scheme would select each edge $33\%$ of the time. We see that the two proposed calibration schemes are an improvement over the uncalibrated regularizer.

The exact weights do not have a simple closed form expression, but they can be easily computed via Monte Carlo. This can be done by simulating independent Gaussians and multinomials with the appropriate marginal variance $\sigma_s$ and marginal probabilities $p_a$, then approximating the expectation in \eqref{eq:ideal-weights} by an average. The computational cost of this procedure is negligible compared to fitting the mixed model, so the exact weights can also be used.
\begin{table}
\centering
\begin{tabular}{|l|c|c|c|}
\hline
&$\rho_{11}$ &
$\rho_{12}$&
$\phi_{12}$\\
\hline
No Calibration $w_g =1$ & 0.1350  &  0.7280   & 0.1370\\
\hline
Exact $w_g$  \eqref{eq:ideal-weights} & 0.3180 &   0.3310 &   0.3510\\
\hline
Approximate $w_g$  \eqref{eq:approx-weights} &    0.2650 &  0.2650 &0.4700\\
\hline
\end{tabular}
\caption{Frequency an edge is the first selected by the group lasso regularizer. The group lasso with equal weights is highly unbalanced, as seen in row 1. The weighing scheme with the weights from \eqref{eq:ideal-weights} is very good, and selects the edges with probability close to the ideal $\frac{1}{3}$. The approximate weighing scheme of \eqref{eq:approx-weights} is an improvement over not calibrating; however, not as good as the weights from \eqref{eq:ideal-weights}.  }
\label{tab:calibration-2}
\end{table}

\section{Model Selection Consistency}
\label{sec:msc}
In this section, we study the model selection consistency, whether the correct edge set is selected and the parameter estimates are close to the truth, of the pseudolikelihood and maximum likelihood estimators. Consistency can be established using the framework first developed in \cite{ravikumar2010} and later extended to general M-estimators by \cite{lee2013model}. Instead of stating the full results and proofs, we will illustrate the type of theorems that can be shown and defer the rigorous statements to the Supplementary Material. 

First, we define some notation. Recall that $\Theta$ is the vector of parameters being estimated $\{\beta_{ss},\beta_{st}, \alpha_{s}, \phi_{rj}, \rho_{sj}\}$, $\Theta^\star$ be the true parameters that estimated the model, and $Q= \nabla^2 \ell (\Theta^\star)$. Both maximum likelihood and pseudolikelihood estimation procedures can be written as a convex optimization problem of the form 
\begin{align}
\minimize\ \ell(\Theta) + \lambda \sum_{g \in G} \norm{\Theta_g }_2
\label{eq:generic_estimator}
\end{align}
where $\ell(\theta) = \{ \ell_{ML}, \ell_{PL}\}$ is one of the two log-likelihoods. The regularizer $$
\sum_{g \in G} \norm{\Theta_g} = \lambda \left(\sum_{s=1}^p \sum_{t=1}^{s-1} |\beta_{st}| +\sum_{s=1}^p \sum_{j=1}^q \norm{\rho_{sj}}_2 +\sum_{j=1}^q \sum_{r=1}^{j-1} \norm{\phi_{rj}}_F \right).
$$ The set $G$ indexes the edges $\beta_{st}$, $\rho_{sj}$, and $\phi_{rj}$, and $\Theta_g$ is one of the three types of edges. Let $A$ and $I$ represent the active and inactive groups in $\Theta$, so $\Theta^\star _g \neq 0 $ for any $g \in A$ and $\Theta_g ^\star =0$ for any $g \in I$.

Let $\hat{\Theta}$ be the minimizer to Equation \eqref{eq:generic_estimator}. Then $\hat{\Theta}$ satisfies,
\begin{enumerate}
\item $\norm{\hat{\Theta} - \Theta^\star}_2 \le C \sqrt{\frac{|A| \log |G|}{n}}$
\item $\hat{\Theta}_g = 0$ for $g \in I$.
\end{enumerate}
The exact statement of the theorem is given in the Supplementary Material.

\section{Optimization Algorithms}
\label{sec:optalg}
In this section, we discuss two algorithms for solving \eqref{eq:penpl}: the proximal gradient and the proximal newton methods. This is a convex optimization problem that decomposes into the form $f(x)+g(x)$, where $f$ is smooth and convex and $g$ is convex but possibly non-smooth. In our case $f$ is the negative log-likelihood or negative log-pseudolikelihood and $g$ are the group sparsity penalties.

Block coordinate descent is a frequently used method when the non-smooth function $g$ is the $l_1$ or group $l_1$. It is especially easy to apply when the function $f$ is quadratic, since each block coordinate update can be solved in closed form for many different non-smooth $g$ \citep{friedman2007pathwise}. The smooth $f$ in our particular case is not quadratic, so each block update cannot be solved  in closed form. However in certain problems (sparse inverse covariance), the update can be approximately solved by using an appropriate inner optimization routine \citep{glasso}.
\subsection{Proximal Gradient}
Problems of this form are well-suited for the proximal gradient and accelerated proximal gradient algorithms as long as the proximal operator of $g$ can be computed \citep{combettes2011proximal,beck2010gradient}
\begin{align}
prox_{t}(x)=\argmin_{u} \frac{1}{2t}\norm{x-u}^2+g(u)
\end{align}
For the sum of $l_2$ group sparsity penalties considered, the proximal operator takes the familiar form of soft-thresholding and group soft-thresholding \citep{bach2011optimization}. Since the groups are non-overlapping, the proximal operator simplifies to scalar soft-thresholding for $\beta_{st}$ and group soft-thresholding for $\rho_{sj}$ and $\phi_{rj}$.

The class of proximal gradient and accelerated proximal gradient algorithms is directly applicable to our problem. These algorithms work by solving a first-order model at the current iterate $x_{k}$
\begin{align}
\argmin_{u}~ &f(x_{k})+\nabla f(x_{k}) ^{T}(u-x_{k}) +\frac{1}{2t}\norm{u-x_{k}}^2 +g(u)\\
&=\argmin_{u}~ \frac{1}{2t}\norm{u-\left(x_{k}-t\nabla f(x_{k})\right)}^2+g(u)\\
&=prox_{t} (x_{k} -t\nabla f(x_{k}))
\end{align}
The proximal gradient iteration is given by $x_{k+1} = prox_{t} \left( x_{k} - t \nabla f(x_{k}) \right)$ where $t$ is determined by line search. The theoretical convergence rates and properties of the proximal gradient algorithm and its accelerated variants are well-established \citep{beck2010gradient}. The accelerated proximal gradient method achieves linear convergence rate of $O(c^k)$ when the objective is strongly convex and the sublinear rate $O(1/k^2)$ for non-strongly convex problems.

The TFOCS framework \citep{becker2011} is a package that allows us to experiment with $6$ different variants of the accelerated proximal gradient algorithm. The TFOCS authors found that the Auslender-Teboulle algorithm exhibited less oscillatory behavior, and proximal gradient experiments in the next section were done using the Auslender-Teboulle implementation in TFOCS. 
\subsection{ Proximal Newton Algorithms}
\label{sec:proxnewton}
The class of proximal Newton algorithms is a 2nd order analog of the proximal gradient algorithms with a quadratic convergence rate \citep{lee2012proximal,schmidt2010,schmidt2011}. It attempts to incorporate 2nd order information about the smooth function $f$ into the model function. At each iteration, it minimizes a quadratic model centered at $x_{k}$
\begin{align}
&\argmin_{u}~ f(x_{k})+\nabla f(x_{k})^{T} (u-x_{k})+\frac{1}{2t}(u-x_{k})^{T}H(u-x_{k}) +g(u)\\
&=\argmin_{u}~ \frac{1}{2t}\left(u-x_{k}+tH^{-1} \nabla f(x_{k}) \right)^{T} H \left(u-x_{k}+tH^{-1} \nabla f(x_{k}) \right)+g(u)\\
&=\argmin_{u}~ \frac{1}{2t} \norm{u-\left( x_{k}-tH^{-1}\nabla f(x_{k}) \right)}^{2}_{H}+g(u)\\
&:=Hprox_{t}\left(x_{k}-tH^{-1}\nabla f(x_{k}) \right)
\mbox{ where } H= \nabla^{2} f(x_{k})
\end{align}
\begin{algorithm}
\caption{Proximal Newton}
\begin{algorithmic}
\Repeat
\State Solve subproblem $p_{k} = Hprox_{t}\left(x_{k}-tH_{k}^{-1}\nabla f(x_{k}) \right) - x_{k}$ using TFOCS.
\State Find $t$ to satisfy Armijo line search condition with parameter $\alpha$
\begin{displaymath}
f(x_{k} + tp_{k}) +g(x_{k} +tp_{k}) \leq f(x_{k})+g(x_{k}) - \frac{t\alpha}{2} \norm{p_{k}}^2
\end{displaymath}
\State Set $x_{k+1}=x_{k}+tp_{k}$
\State $k=k+1$
\Until{$\frac{\norm{x_{k} - x_{k+1}}}{\norm{x_{k} }} < tol$} 
\end{algorithmic}
\end{algorithm}
The $Hprox$ operator is analogous to the proximal operator, but in the $\norm{\cdot}_{H}$-norm. It simplifies to the proximal operator if $H=I$, but in the general case of positive definite $H$ there is no closed-form solution for many common non-smooth $g(x)$ (including $l_{1}$ and group $l_{1}$). However if the proximal operator of $g$ is available, each of these sub-problems can be solved efficiently with proximal gradient. In the case of separable $g$, coordinate descent is also applicable. Fast methods for solving the subproblem $Hprox_{t} (x_k - t H^{-1} \nabla f(x_{k}))$ include coordinate descent methods, proximal gradient methods, or Barzilai-Borwein \citep{friedman2007pathwise,combettes2011proximal,beck2010gradient, wright2009sparse}. The proximal Newton framework allows us to bootstrap many previously developed solvers to the case of arbitrary loss function $f$.

Theoretical analysis in \cite{lee2012proximal} suggests that proximal Newton methods generally require fewer outer iterations (evaluations of $Hprox$) than first-order methods while providing higher accuracy because they incorporate 2nd order information. We have confirmed empirically that the proximal Newton methods are faster when $n$ is very large or the gradient is expensive to compute (e.g. maximum likelihood estimation). Since the objective is quadratic, coordinate descent is also applicable to the subproblems. The hessian matrix $H$ can be replaced by a quasi-newton approximation such as BFGS/L-BFGS/SR1. In our implementation, we use the \texttt{PNOPT} implementation \citep{lee2012proximal}. 
\subsection{Path Algorithm}
Frequently in machine learning and statistics, the regularization parameter $\lambda$ is heavily dependent on the dataset. $\lambda$ is generally chosen via cross-validation or holdout set performance, so it is convenient to provide solutions over an interval of $[\lambda_{min} , \lambda_{max}]$. We start the algorithm at $\lambda_1 = \lambda_{max}$ and solve, using the previous solution as warm start, for $\lambda_2 > \ldots> \lambda_{min}$. We find that this reduces the cost of fitting an entire path of solutions (See Figure \ref{fig:modelselect}). $\lambda_{max}$ can be chosen as the smallest value such that all parameters are $0$ by using the KKT equations \citep{friedman2007pathwise}. 

\section{Conditional Model}
\label{sec:condmodel}
In addition to the variables we would like to model, there are often additional features or covariates that affect the dependence structure of the variables. For example in genomic data, in addition to expression values, we have attributes associated to each subject such as gender, age and ethnicity. These additional attributes affect the dependence of the expression values, so we can build a conditional model that uses the additional attributes as features. In this section, we show how to augment the pairwise mixed model with features.

Conditional models only model the conditional distribution
$p(z|f)$, as opposed to the joint distribution $p(z,f)$, where $z$ are
the variables of interest to the prediction task and $f$ are
features. These models are frequently used in practice \cite{lafferty2001conditional}. 

In addition to observing $x$ and $y$, we observe features $f$ and we build a graphical model for the conditional distribution $p(x,y|f)$. Consider a full pairwise model $p(x,y,f)$ of the form \eqref{eq:jointdensity}. We then choose to only model the joint distribution over only the variables $x$ and $y$ to give us $p(x,y|f)$ which is of the form
\begin{align}
p(x,y|f;\Theta)=&\frac{1}{Z(\Theta |f)} \exp  \left( \sum_{s=1}^p \sum_{t=1}^{p}-\frac{1}{2} \beta_{st} x_{s} x_{t}+\sum_{s=1}^{p}\alpha_{s} x_{s} +\sum_{s=1}^p \sum_{j=1}^{q} \rho_{sj}(y_{j})x_{s} \right. \nonumber \\ 
& \left.+\sum_{j=1}^q \sum_{r=1}^j \phi_{rj}(y_r , y_j) +\sum_{l=1}^{F} \sum_{s=1}^p \gamma_{ls} x_s f_l + \sum_{l=1}^F \sum_{r=1}^{q} \eta_{lr}(y_r) f_l \right)
\label{eq:jointdensityfeat}
\end{align}
We can also consider a more general model where each pairwise edge potential depends on the features
\begin{align}
p(x,y|f;\Theta)=\frac{1}{Z(\Theta |f)} &\exp\left(\sum_{s=1}^p\sum_{t=1}^{p}-\frac{1}{2} \beta_{st}(f) x_{s} x_{t}+\sum_{s=1}^{p}\alpha_{s}(f) x_{s} \right. \nonumber \\ & \left. +\sum_{s=1}^p\sum_{j=1}^{q}\rho_{sj}(y_{j},f)x_{s}+\sum_{j=1}^{q}\sum_{r=1}^j \phi_{rj}(y_r , y_j,f) \vphantom{\sum_{s=1}^p\sum_{t=1}^{p}} \right)
\end{align}
\eqref{eq:jointdensityfeat} is a special case of this where only the node potentials depend on features and the pairwise potentials are independent of feature values.  The specific parametrized form we consider is $\phi_{rj}(y_r, y_j,f)\equiv \phi_{rj}(y_r,y_j)$ for $r\neq j$, $\rho_{sj}(y_j,f)\equiv \rho_{sj}(y_j)$, and $\beta_{st}(f)=\beta_{st}$. The node potentials depend linearly on the feature values, $\alpha_{s}(f)=\alpha_{s} + \sum_{l=1}^{F} \gamma_{ls} x_s f_l$, and $\phi_{rr}(y_r,y_r,f) = \phi_{rr}(y_r,y_r) + \sum_{l} \eta_{lr}(y_r)$.
\section{Experimental Results}
\label{sec:exp}
We present experimental results on synthetic data, survey data and on a conditional model.
\subsection{Synthetic Experiments}
In the synthetic experiment, the training points are sampled from a true model with $10$ continuous variables and $10$ binary variables. The edge structure is shown in Figure \ref{fig:syntheticgraph}. $\lambda$ is chosen proportional to $\sqrt{\frac{\log{(p+q)}}{n}}$ as suggested by the theoretical results in Section \ref{sec:msc}. We experimented with $3$ values $\lambda = \{1,5,10\} \sqrt{\frac{\log{(p+q)}}{n}}$ and chose $\lambda =5\sqrt{\frac{\log{(p+q)}}{n}}$ so that the true edge set was recovered by the algorithm for the sample size $n=2000$.  We see from the experimental results that recovery of the correct edge set undergoes a sharp phase transition, as expected. With $n=1000$ samples, the pseudolikelihood is recovering the correct edge set with probability nearly $1$. The maximum likelihood was performed using an exact evaluation of the gradient and log-partition. The poor performance of the maximum likelihood estimator is explained by the maximum likelihood objective violating the irrepresentable condition; a similar example is discussed in \cite[Section 3.1.1]{ravikumar2010}, where the maximum likelihood is not irrepresentable, yet the neighborhood selection procedure is. The phase transition experiments were done using the proximal Newton algorithm discussed in Section \ref{sec:proxnewton}.
\begin{figure}
\centering

\begin{subfigure}{.49\textwidth}
	 
\includegraphics[width=\textwidth]{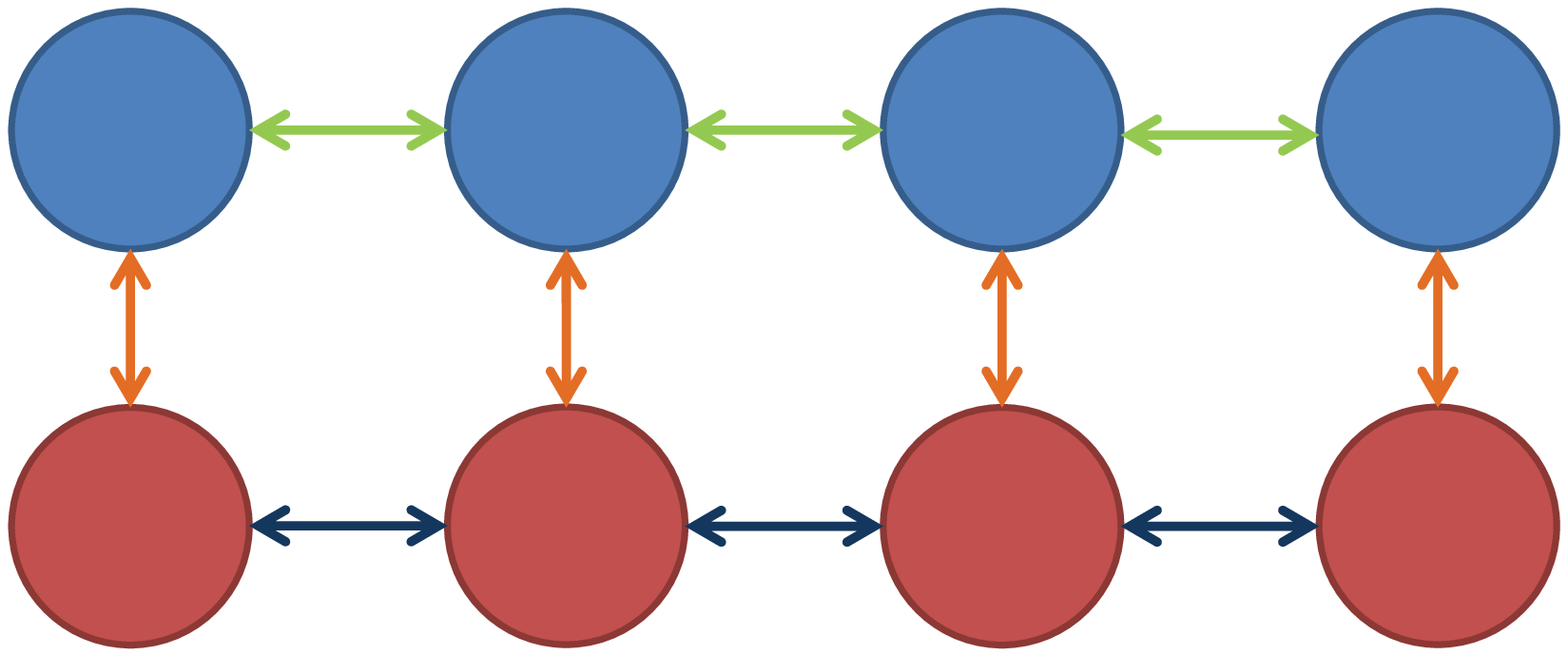}
\caption{\label{fig:syntheticgraph}}
\end{subfigure}

\begin{subfigure}{.49\textwidth}

\includegraphics[width=\textwidth]{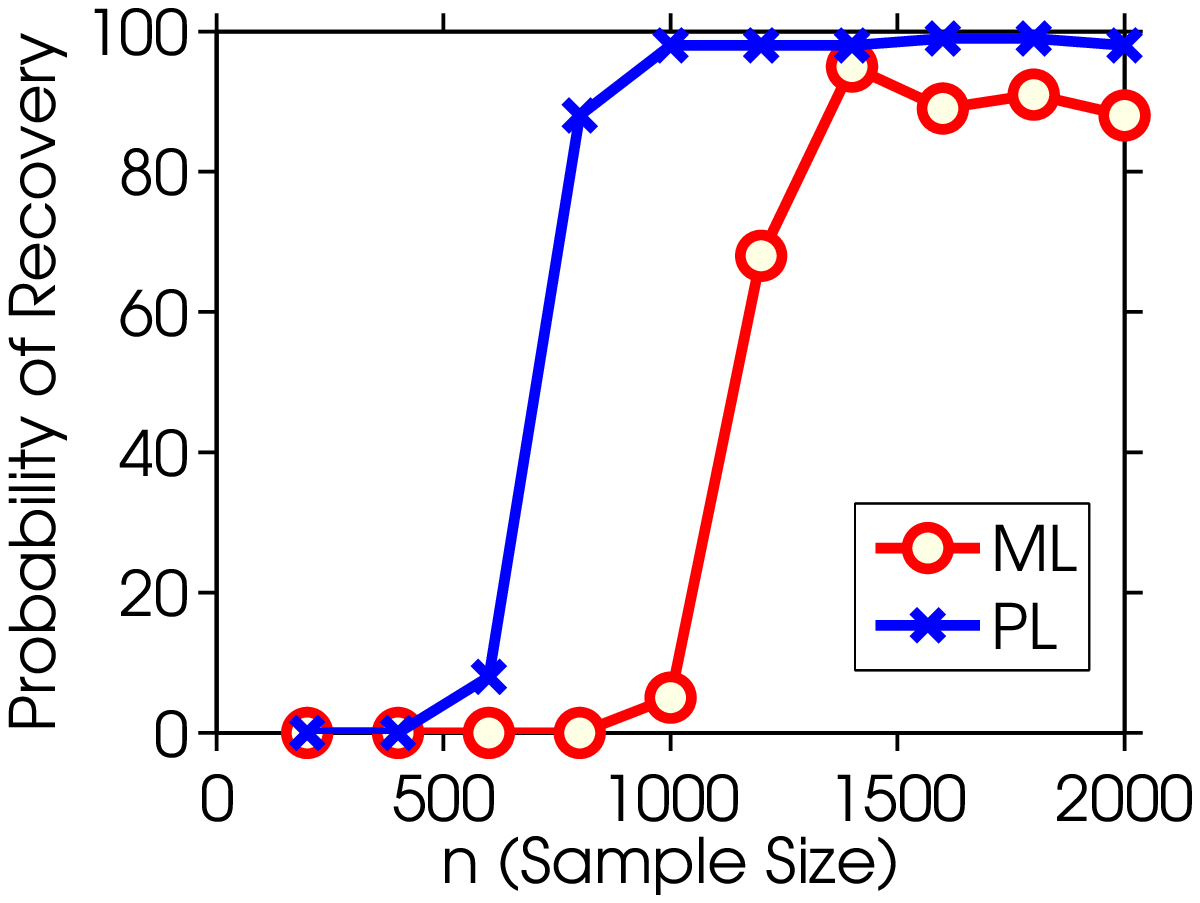}
\caption{	   \label{fig:syntheticplot}}
\end{subfigure}

 \caption{\small\em Figure \ref{fig:syntheticgraph} shows the graph used in the synthetic experiments for $p=q=4$; the experiment actually used $p$=10 and $q$=10. Blue nodes are continuous variables, red nodes are binary variables and the orange, green and dark blue lines represent the $3$ types of edges. Figure \ref{fig:syntheticplot} is a plot of the probability of correct edge recovery, meaning every true edge is selected and no non-edge is selected, at a given sample size using Maximum Likelihood and Pseudolikelihood. Results are averaged over $100$ trials.}
\end{figure}

We also run the proximal Newton algorithm for a sequence of instances with $p=q=10,50,100,500,1000$ and $n=500$. The largest instance has $2000$ variables and takes $12.5$ hours to complete. The timing results are summarized in Figure \ref{tab:timing}.
\begin{figure}
\begin{tabular}{|l|c|c|r|}
\hline
$p+q$ & Time per Iteration (sec) & Total Time (min) & Number of Iterations \\
\hline
20 &  .13 & .003 & 13\\
100 & 4.39 & 1.32 & 18\\
200 & 18.44 & 6.45 & 21\\
1000 & 245.34 & 139 & 34\\
2000 & 1025.6 & 752 & 44\\
\hline
\end{tabular}
\caption{Timing experiments for various instances of the graph in Figure \ref{fig:syntheticgraph}. The number of variables range from $20$ to $2000$ with $n=500$. }
\label{tab:timing}
\end{figure}
\subsection{Survey Experiments}
The census survey dataset we consider consists of $11$ variables, of
which $2$ are continuous and $9$ are discrete: age (continuous),
log-wage (continuous), year($7$ states), sex($2$ states),marital
status ($5$ states), race($4$ states), education level ($5$ states),
geographic region($9$ states), job class ($2$ states), health ($2$
states), and health insurance ($2$ states). The dataset was assembled
by Steve Miller of OpenBI.com from the March 2011 Supplement to
Current Population Survey data. All the evaluations are done using a
holdout test set of size $100,000$ for the survey experiments. The
regularization parameter $\lambda$ is varied over the interval
$[5\times 10^{-5}, 0.7]$ at $50$ points equispaced on log-scale for all
experiments. In practice, $\lambda$ can be chosen to minimize the holdout log pseudolikelihood.
\subsubsection{Model Selection}
\begin{figure}
\centering
\includegraphics[width=3in]{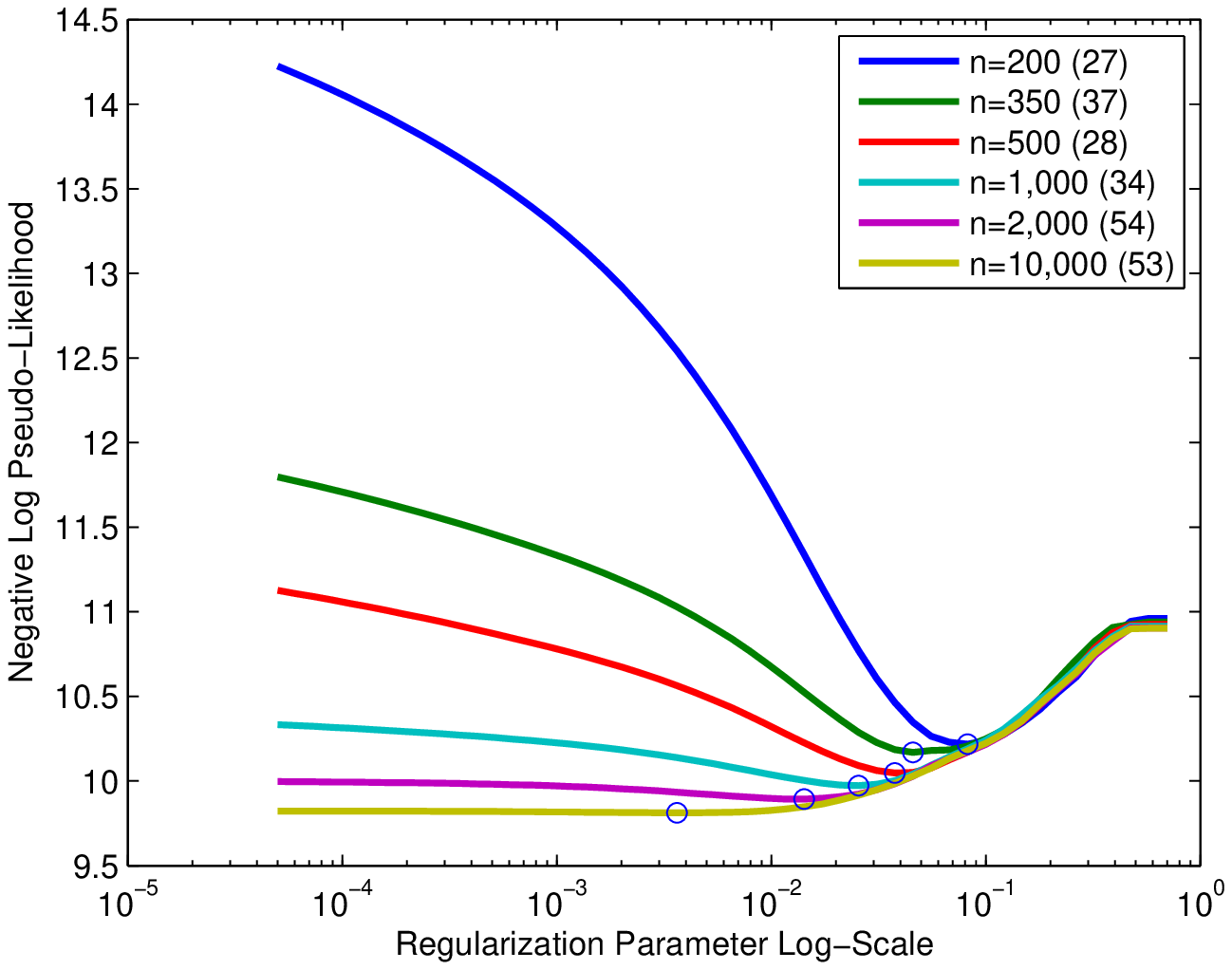}
\caption{\small\em Model selection under different training set sizes. Circle denotes the lowest test set negative log pseudolikelihood and the number in parentheses is the number of edges in that model at the lowest test negative log pseudolikelihood. The saturated model has $55$ edges.}
\label{fig:modelselect}
\end{figure}
In Figure \ref{fig:modelselect}, we study the model selection performance of learning a graphical model over the $11$ variables under different training samples sizes. We see that as the sample size increases, the optimal model is increasingly dense, and less regularization is needed. 

\subsubsection{Comparing against Separate Regressions}
A sensible baseline method to compare against is a separate regression algorithm. This algorithm fits a linear Gaussian or (multiclass) logistic regression of each variable conditioned on the rest. We can evaluate the performance of the pseudolikelihood by evaluating $-\log{p(x_{s}|x_{\backslash s},y)}$ for linear regression and $-\log{p(y_{r}|y_{\backslash r},x)}$ for (multiclass) logistic regression. Since regression is directly optimizing this loss function, it is expected to do better. The pseudolikelihood objective is similar, but has half the number of parameters as the separate regressions since the coefficients are shared between two of the conditional likelihoods. From Figures \ref{fig:sepvspln100} and \ref{fig:sepvspln10000}, we can see that the pseudolikelihood performs very similarly to the separate regressions and sometimes even outperforms regression. The benefit of the pseudolikelihood is that we have learned parameters of the joint distribution $p(x,y)$ and not just of the conditionals $p(x_{s}|y,x_{\backslash s})$. On the test dataset, we can compute quantities such as conditionals over arbitrary sets of variables $p(y_{A}, x_{B}|y_{A^{C}},x_{B^C})$ and marginals $p(x_{A},y_{B})$ \citep{koller2009}. This would not be possible using the separate regressions. 
\begin{figure}
\centering
\includegraphics[width=.9\textwidth]{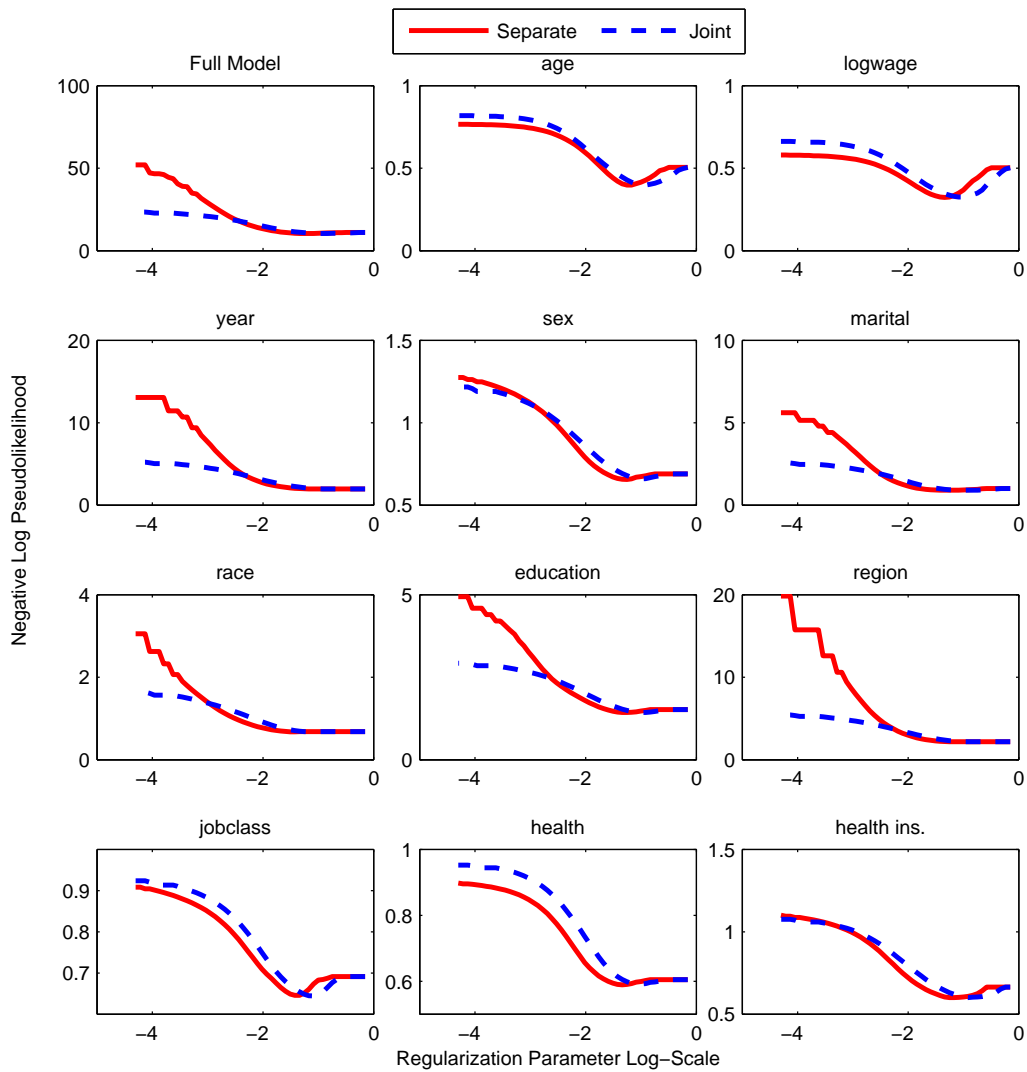}
\caption{\small\em Separate Regression vs Pseudolikelihood $n=100$. $y$-axis is the appropriate regression loss for the response variable. For low levels of regularization and at small training sizes, the pseudolikelihood seems to overfit less; this may be due to a global regularization effect from fitting the joint distribution as opposed to separate regressions. }
\label{fig:sepvspln100}
\end{figure}

\begin{figure}
\centering
\includegraphics[width=.9\textwidth]{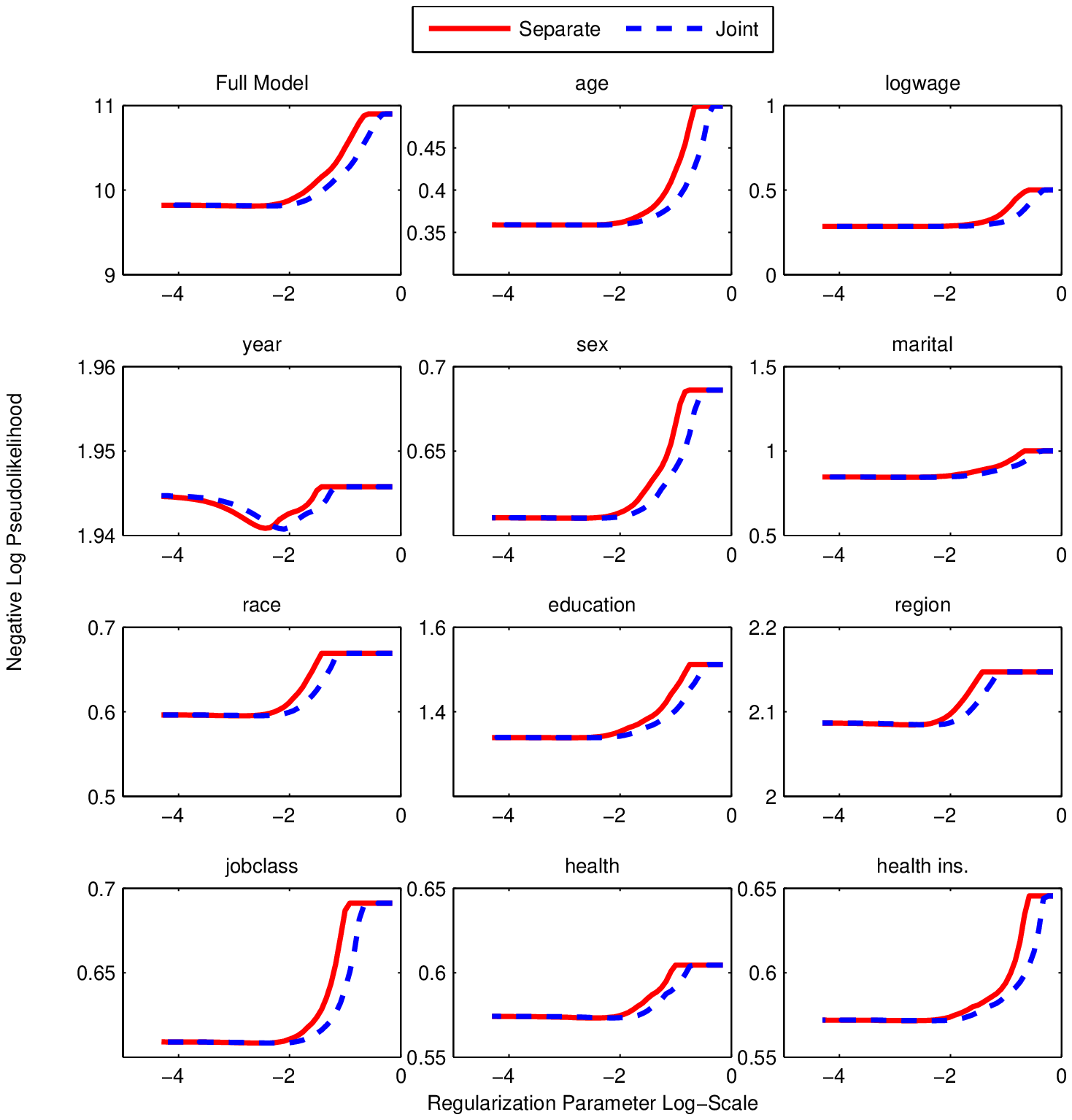}
\caption{\small\em Separate Regression vs Pseudolikelihood $n=10,000$. $y$-axis is the appropriate regression loss for the response variable. At large sample sizes, separate regressions and pseudolikelihood perform very similarly. This is expected since this is nearing the asymptotic regime.}
\label{fig:sepvspln10000}
\end{figure}

\subsubsection{Conditional Model}
Using the conditional model \eqref{eq:jointdensityfeat}, we model only the $3$ variables logwage, education($5$) and jobclass($2$). The other $8$ variables are only used as features. The conditional model is then trained using the pseudolikelihood. We compare against the generative model that learns a joint distribution on all $11$ variables. From Figure \ref{fig:condvsgen}, we see that the conditional model outperforms the generative model, except at small sample sizes. This is expected since the conditional distribution models less variables. At very small sample sizes and small $\lambda$, the generative model outperforms the conditional model. This is likely because generative models converge faster (with less samples) than discriminative models to its optimum. 
\begin{figure}

\centering
\includegraphics[width=.9\textwidth]{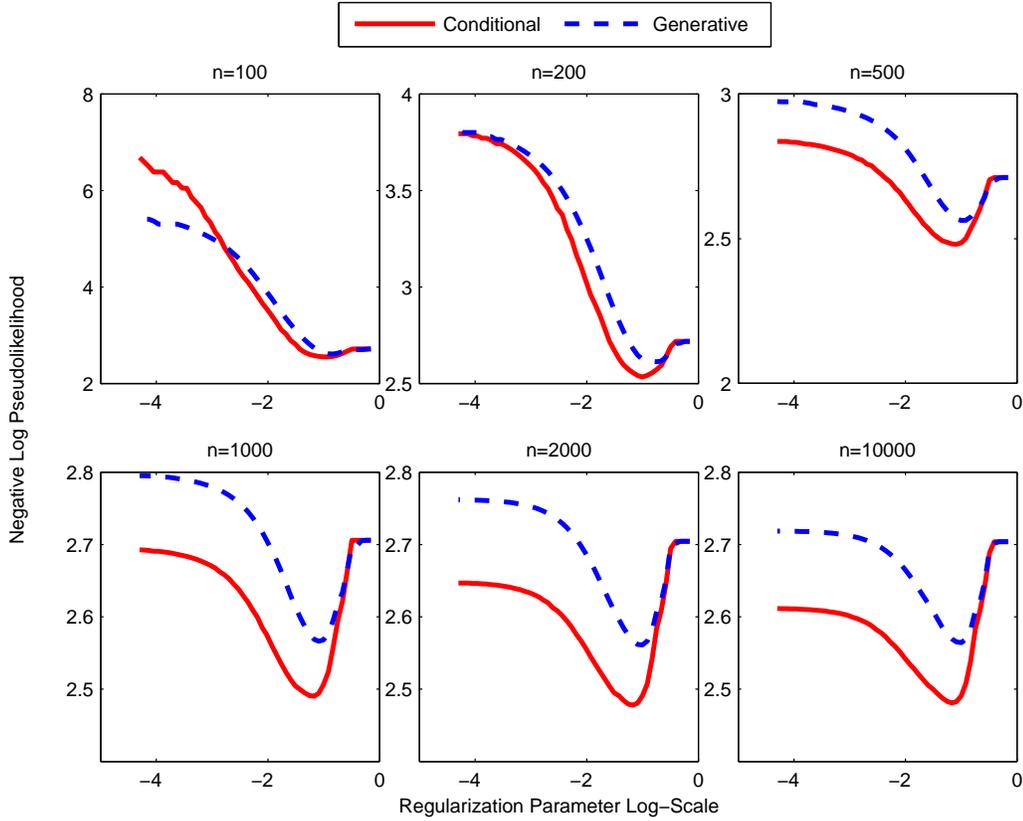}
\caption{\small\em Conditional Model vs Generative Model at various sample sizes. $y$-axis is test set performance is evaluated on negative log pseudolikelihood of the conditional model. The conditional model outperforms the full generative model at except the smallest sample size $n=100$.}
\label{fig:condvsgen}
\end{figure}
\subsubsection{Maximum Likelihood vs Pseudolikelihood}
The maximum likelihood estimates are computable for very small models such as the conditional model previously studied. The pseudolikelihood was originally motivated as an approximation to the likelihood that is computationally tractable. We compare the maximum likelihood and maximum pseudolikelihood on two different evaluation criteria: the negative log likelihood and negative log pseudolikelihood. In Figure \ref{fig:likvspl}, we find that the pseudolikelihood outperforms maximum likelihood under both the negative log likelihood and negative log pseudolikelihood. We would expect that the pseudolikelihood trained model does better on the pseudolikelihood evaluation and maximum likelihood trained model does better on the likelihood evaluation. However, we found that the pseudolikelihood trained model outperformed the maximum likelihood trained model on both evaluation criteria. Although asymptotic theory suggests that maximum likelihood is more efficient than the pseudolikelihood, this analysis is inapplicable because of the finite sample regime and misspecified model. See \citet{liang2008asymptotic} for asymptotic analysis of pseudolikelihood and maximum likelihood under a well-specified model. We also observed the pseudolikelihood slightly outperforming the maximum likelihood in the synthetic experiment of Figure \ref{fig:syntheticplot}.
\begin{figure}
\centering
\includegraphics[width=.9\textwidth]{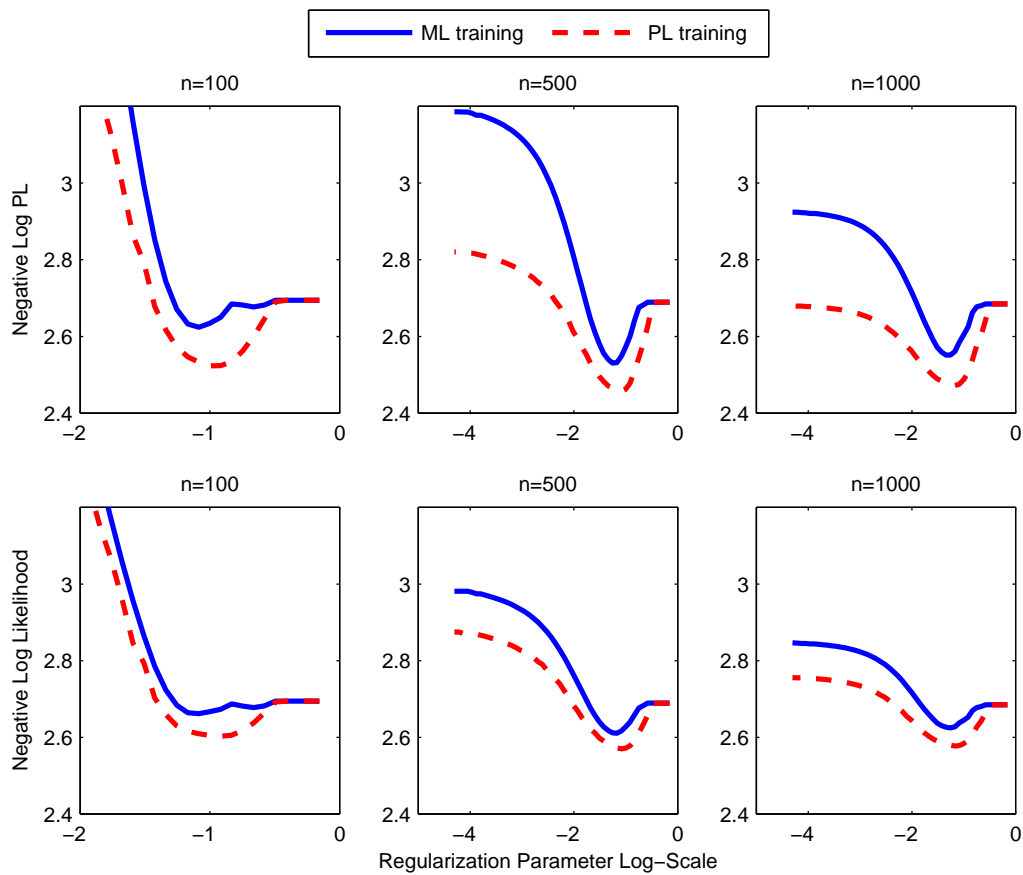}
\caption{\small\em Maximum Likelihood vs Pseudolikelihood. $y$-axis for top row is the negative log pseudolikelihood. $y$-axis for bottom row is the negative log likelihood. Pseudolikelihood outperforms maximum likelihood across all the experiments.}
\label{fig:likvspl}
\end{figure}
\section{Conclusion}
This work proposes a new pairwise mixed graphical model, which combines the Gaussian graphical model and discrete graphical model. Due to the introduction of discrete variables, the maximum likelihood estimator is computationally intractable, so we investigated the pseudolikelihood estimator. To learn the structure of this model, we use the appropriate group sparsity penalties with a calibrated weighing scheme. Model selection consistency results are shown for the mixed model using the maximum likelihood and pseudolikelihood estimators. The extension to a conditional model is discussed, since these are frequently used in practice.

We proposed two efficient algorithms for the purpose of estimating the parameters of this model, the proximal Newton and the proximal gradient algorithms. The proximal Newton algorithm is shown to scale to graphical models with $2000$ variables on a standard desktop. The model is evaluated on synthetic and the current population survey data, which demonstrates the pseudolikelihood performs well compared to maximum likelihood and nodewise regression.

For future work, it would be interesting to incorporate other discrete variables such as poisson or binomial variables and non-Gaussian continuous variables. This would broaden the scope of applications that mixed models could be used for. Our work is a first step in that direction.


\newpage

\section*{Supplementary Materials}
\subsection{Proof of Convexity}
\label{app:Proofs}
\noindent {\bf Proposition~\ref{prop:cvx}.}{\em
The negative log pseudolikelihood in  \eqref{eq:negpl} is jointly convex in all the parameters $\{\beta_{ss},\beta_{st}, \alpha_{s}, \phi_{rj}, \rho_{sj}\}$ over the region $\beta_{ss}>0$. 
}

\medskip
\begin{proof}
To verify the convexity of $\tilde{\ell}(\Theta|x,y)$, it suffices to check that each term is convex. 

\noindent$-\log{p(y_r| y_{\backslash r,}, x;\Theta)}$ is jointly convex in $\rho$ and $\phi$ since it is a multiclass logistic regression.
We now check that $-\log{p(x_s | x_{\backslash s}, y;\Theta)}$ is convex. $-\frac{1}{2} \log{\beta_{ss}} $ is a convex function. To establish that $$\frac{\beta_{ss}}{2} \left(\frac{\alpha_s}{\beta_{ss}}+\sum_{j}\frac{\rho_{sj}(y_j)}{\beta_{ss}} - \sum_{t\neq s} \frac{\beta_{st}}{\beta_{ss}} x_{t} - x_{s}\right)^2$$ is convex, we use the fact that $f(u,v)= \frac{v}{2} (\frac{u}{v} -c)^2$ is convex. Let $v=\beta_{ss}$,  $u= \alpha_s + \sum_{j} \rho_{sj} (y_j) - \sum_{t\neq s} \beta_{st} x_{t}$, and $c= x_s$. Notice that $x_s$, $\alpha_s$, $y_j$, and $x_t$ are fixed quantities and $u$ is affinely related to $\beta_{st}$ and $\rho_{sj}$. A convex function composed with an affine map is still convex, thus $\frac{\beta_{ss}}{2} \left(\frac{\alpha_s}{\beta_{ss}}+\sum_{j}\frac{\rho_{sj}(y_j)}{\beta_{ss}} - \sum_{t\neq s} \frac{\beta_{st}}{\beta_{ss}} x_{t} - x_{s}\right)^2$ is convex.

To finish the proof, we verify that $f(u,v)= \frac{v}{2} (\frac{u}{v} -c)^2 = \frac{1}{2} \frac{(u-cv)^2}{v}$ is convex over $v>0$. The epigraph of a convex function is a convex set iff the function is convex. Thus we establish that the set $C= \{ (u,v,t) |  \frac{1}{2} \frac{(u-cv)^2}{v}\le t, v>0\}$ is convex. Let $
A = \begin{bmatrix}
v&u-cv\\
u-cv&t
\end{bmatrix}.
$
The Schur complement criterion of positive definiteness says $A \succ 0$ iff $v>0$ and $t>\frac{(u-cv)^2}{v}$. The condition $A \succ 0$ is a linear matrix inequality and thus convex in the entries of $A$. The entries of $A$ are linearly related to $u$ and $v$, so $A\succ 0$ is also convex in $u$ and $v$. Therefore $v>0$ and $t>\frac{(u-cv)^2}{v}$ is a convex set.
\end{proof}
\subsection{Sampling From The Joint Distribution}
\label{app:sampling}
In this section we discuss how to draw samples $(x,y) \thicksim p(x,y)$. Using the property that $p(x,y)=p(y) p(x|y)$, we see that if $y \thicksim p(y) $ and $ x \thicksim p(x|y)$ then $(x,y) \thicksim p(x,y)$. We have that 
\begin{align}
p(y) & \propto \exp{(\sum_{r,j} \phi_{rj} (y_r, y_j) +\frac{1}{2}  \rho(y)^{T} B^{-1} \rho(y))}\\
(\rho(y))_s&= \sum_{j} \rho_{sj}(y_j)\\
p(x|y)&= No( B^{-1} (\alpha+\rho(y) ), B^{-1})
\end{align}
The difficult part is to sample $y\thicksim p(y)$ since this involves the partition function of the discrete MRF. This can be done with MCMC for larger models and junction tree algorithm or exact sampling for small models.

\subsection{Maximum Likelihood}
\label{app:mle}
The difficulty in MLE is that in each gradient step we have to compute $\hat{T}(x,y) -E_{p(\Theta)}\left[T(x,y)\right]$, the difference between the empirical sufficient statistic $\hat{T}(x,y)$  and the expected sufficient statistic. In both continuous and discrete graphical models the computationally expensive step is evaluating $E_{p(\Theta)}\left[T(x,y)\right]$. In discrete problems, this involves a sum over the discrete state space and in continuous problem, this requires matrix inversion. For both discrete and continuous models, there has been much work on addressing these difficulties. For discrete models, the junction tree algorithm is an exact method for evaluating marginals and is suitable for models with low tree width. Variational methods such as belief propagation and tree reweighted belief propagation work by optimizing a surrogate likelihood function by approximating the partition function $Z(\Theta)$ by a tractable surrogate $\widetilde{Z}(\Theta)$ \cite{wainwright2008}. In the case of a large discrete state space, these methods can be used to approximate $p(y)$ and do approximate maximum likelihood estimation for the discrete model. Approximate maximum likelihood estimation can also be done via Monte Carlo estimates of the gradients $\hat{T}(x,y) -E_{p(\Theta)}(T(x,y))$.  For continuous Gaussian graphical models, efficient algorithms based on block coordinate descent \cite{glasso,banerjee2008}  have been developed, that do not require matrix inversion.

The joint distribution and loglikelihood are:
\begin{align*}
p(x,y;\Theta)&= \exp{(-\frac{1}{2} x^{T} B x +(\alpha+\rho(y))^{T} x+\sum_{(r,j)}\phi_{rj}(y_r,y_j))}/Z(\Theta)\\
\ell(\Theta)&=\left(\frac{1}{2} x^{T} B x -(\alpha+\rho(y))^{T} x-\sum_{(r,j)}\phi_{rj}(y_r,y_j)\right)\\
&+\log( \sum_{y'} \int{dx \exp{(-\frac{1}{2}x^{T} B x +(\alpha+\rho(y'))^{T} x )}} \exp(\sum_{(r,j)} \phi_{rj}(y'_r,y'_j)) )\\
\end{align*}
The derivative is
\begin{align*}
\frac{\partial \ell}{\partial B} &= \frac{1}{2} x x^{T}+ \frac{\int dx( \sum_{y'} -\frac{1}{2} xx^{T}  \exp(-\frac{1}{2} x^{T} B x +(\alpha+\rho(y))^{T} x +\sum_{(r,j)} \phi_{rj}(y'_r,y'_j)))}{Z(\Theta)}\\
&=\frac{1}{2}xx^{T} + \int \sum_{y'} (-\frac{1}{2} xx^{T} p(x,y';\Theta))\\
&=\frac{1}{2}xx^{T} + \sum_{y'}\int -\frac{1}{2} xx^{T} p(x|y';\Theta) p(y') \\
&=\frac{1}{2}xx^{T} + \sum_{y'}\int -\frac{1}{2} \left(B^{-1} + B^{-1} (\alpha+\rho(y') )(\alpha+ \rho(y')^{T}) B^{-1}\right) p(y') 
\end{align*}
The primary cost is to compute $B^{-1}$ and the sum over the discrete states $y$.
\newline
The computation for the derivatives of $\ell(\Theta)$ with respect to $\rho_{sj}$ and $\phi_{rj}$ are similar.
\begin{align*}
\frac{\partial \ell}{\phi_{rj}(a,b) }&= -1(y_r =a,y_j =b)+\sum_{y'}\int dx 1(y'_r=a,y'_j=b) p(x,y';\Theta)\\
&= -1(y_r =a,y_j =b)+\sum_{y'} 1(y'_r=a,y'_j=b) p(y')
\end{align*}
The gradient requires summing over all discrete states.
\newline
Similarly for $\rho_{sj}(a)$:
\begin{align*}
\frac{\partial \ell}{\rho_{sj}(a) }= -1(y_j = a)x_{s}+\sum_{y'}\int dx (1(y'_j=a) x_s ) p(x',y';\Theta)\\
=-1(y_j = a)x_{s}+\int dx \sum_{y_{\backslash j}'} x_s p(x|y'_{\backslash j},y'_j=a)p(y'_{\backslash j},y'_j = a)
\end{align*}
MLE estimation requires summing over the discrete states to compute the expected sufficient statistics. This may be approximated using using samples $(x,y) \thicksim p(x,y;\Theta)$. The method in the previous section shows that sampling is efficient if $y \thicksim p(y)$ is efficient. This allows us to use  MCMC methods developed for discrete MRF's such as Gibbs sampling.
\subsection{Choosing the Weights}
\label{app:weights}
We first show how to compute $w_{sj}$. 
The gradient of the pseudo-likelihood with respect to a parameter $\rho_{sj} (a) $ is given below
\begin{align}
\frac{\partial \tilde{\ell}}{\partial \rho_{sj}(a)}&= \sum_{i=1}^n  -2\times \indicator{y_{j}^{i}=a} x_{s}^{i}+E_{p_{F}} ( \indicator{y_{j}=a } x_{s}|y_{\backslash j}^{i}, x^{i})  + E_{p_{F}} ( \indicator{y_{j}=a } x_{s}|x^{i}_{\backslash s}, y^{i}) \nonumber \\
&=\sum_{i=1}^n  -2\times \indicator{y_{j}^{i}=a} x_{s}^{i}+x^{i}_{s} p(y_{j}=a)   + \indicator{y_{j}^{i} =a} \mu_{s} \nonumber \\
&= \sum_{i=1}^n \indicator{y_{j}^{i} =a} \left(\hat{\mu}_{s} - x^{i}_{s}\right) +x_{s}^{i} \left( \hat{p}(y_{j} =a) - \indicator {y_{j}^{i}=a} \right) \nonumber\\
&=\sum_{i=1}^n \left(\indicator{y_{j}^{i} =a}-\hat{p}(y_{j}=a) \right) \left(\hat{\mu}_{s} - x^{i}_{s}\right) +\left(x_{s}^{i}-\hat{\mu}_{s}\right) \left( \hat{p}(y_{j} =a) - \indicator {y_{j}^{i}=a} \right)\\
&= \sum_{i=1}^n 2\left(\indicator{y_{j}^{i} =a}-\hat{p}(y_{j}=a) \right) \left(\hat{\mu}_{s} - x^{i}_{s}\right)
\end{align}
Since the subgradient condition includes a variable if $\norm{\frac{\partial \tilde{\ell}}{\partial \rho_{sj}}} > \lambda $, we compute $E\norm{\frac{\partial \tilde{\ell}}{\partial \rho_{sj}}}^{2}$. By independence,
\begin{align}
&E_{p_{F}}\left(\norm{\sum_{i=1}^n 2\left(\indicator{y_{j}^{i} =a}-\hat{p}(y_{j}=a) \right) \left(\hat{\mu}_{s} - x^{i}_{s}\right)}^2 \right )\\
&= 4n E_{p_{F}}\left(\norm{\indicator{y_{j}^{i} =a}-\hat{p}(y_{j}=a)}^2\right) E_{p_{F}}\left( \norm{\hat{\mu}_{s} -x^{i}_{s}}^2\right)\\
&= 4(n-1) p(y_{j}=a) (1-p(y_{j}=a)) \sigma_{s}^{2}
\end{align}
The last line is an equality if we replace the sample means $\hat{p}$ and $\hat{\mu}$ with the true values $p$ and $\mu$. Thus for the entire vector $\rho_{sj}$ we have $E_{p_{F}}\norm{\frac{\partial \tilde{\ell}}{\partial \rho_{sj}}}^{2} =4(n-1) \left(\sum_{a} p(y_{j}=a) (1-p(y_{j}=a)\right) \sigma_{s}^2$. If we let the vector $z$ be the indicator vector of the categorical variable $y_{j}$, and let the vector $p=p(y_{j}=a)$, then $E_{p_{F}}\norm{\frac{\partial \tilde{\ell}}{\partial \rho_{sj}}}^{2} =4(n-1) \sum_{a} p_{a} (1-p_{a}) \sigma^2 = 4(n-1) \mathbf{tr}(\mathbf{cov}(z)) \mathbf{var}(x)$ and $w_{sj} = \sqrt{\sum_{a} p_a (1-p_a ) \sigma_{s}^2}$.

We repeat the computation for $\beta_{st}$. 
\begin{align*}
\frac{\partial \ell}{\partial \beta_{st}} &= \sum_{i=1}^{n}-2 x^{i}_{s} x_{t} +E_{p_F} (x^{i}_{s} x^{i}_{t}|x_{\backslash s},y) +E_{p_F} (x^{i}_{s} x^{i}_{t} |x_{\backslash t},y)\\
&=\sum_{i=1}^{n}-2x^{i}_{s}x^{i}_{t} +\hat{\mu}_{s} x^{i}_{t} +\hat{\mu}_{t} x^{i}_{s}\\
&=\sum_{i=1}^{n} x^{i}_{t}( \hat{\mu_{s}} -x^{i}_{s}) +x^{i}_{s} (\hat{\mu}_{t} -x^{i}_{t})\\
&= \sum_{i=1}^{n}(x^{i}_{t}-\hat{\mu}_{t})(\hat{\mu_{s}} -x^{i}_{s}) +(x^{i}_{s} - \hat{\mu_{s}})(\hat{\mu_{t}} -x^{i}_{t} )\\
&=\sum_{i=1}^{n}2 (x^{i}_{t}-\hat{\mu}_{t})(\hat{\mu_{s}} -x^{i}_{s})
\end{align*}
Thus
\begin{align*}
E&\left( \norm{\sum_{i=1}^{n}2 (x^{i}_{t}-\hat{\mu}_{t})(\hat{\mu_{s}} -x^{i}_{s})}^{2} \right)\\
&=4n E_{p_F} \norm{x_{t} -\hat{\mu_{t}}}^2 E_{p_F} \norm{x_{s} - \hat{\mu}_{s}}^2\\
&= 4(n-1) \sigma_{s}^2 \sigma_{t}^2
\end{align*}
Thus $E_{p_F} \norm{\frac{\partial \ell}{\partial \beta_{st}}}^2 =4(n-1) \sigma_{s}^2 \sigma_{t}^2$ and taking square-roots gives us $w_{st}=\sigma_{s} \sigma_{t}$.\\
We repeat the same computation for $\phi_{rj}$. Let $p_{a} = Pr (y_r =a)$ and $q_{b} =Pr( y_j =b)$. 
\begin{align*}
\frac{\partial \tilde{\ell}}{\partial \phi_{rj}(a,b)} &=
\sum_{i=1}^n -\indicator{y^{i}_{r}=a}\indicator{y^{i}_{j}=b} +E\left(\indicator{y_{r}=a}\indicator{y_{j}=b}|y_{\backslash r},x\right)\\&+E\left(\indicator{y_{r}=a}\indicator{y_{j}=b}|y_{\backslash j}, x\right)\\
&=\sum_{i=1}^n -\indicator{y^{i}_{r}=a}\indicator{y^{i}_{j}=b} +\hat{p}_{a}\indicator{y^{i}_{j}=b}+\hat{q}_{b} \indicator{y^{i}_{r}=a}\\
&=\sum_{i=1}^n\indicator{y^{i}_{j}=b}(\hat{p}_a - \indicator{y^{i}_r =a}) +\indicator{y^{i}_r =a }( \hat{q}_b - \indicator{y^{i}_j =b})\\
&=\sum_{i=1}^n(\indicator{y^{i}_{j}=b}-\hat{q}_b )(\hat{p}_a - \indicator{y^{i}_r =a}) +(\indicator{y^{i}_r =a }-\hat{p}_a )( \hat{q}_b - \indicator{y^{i}_j =b})\\
&= \sum_{i=1}^n 2(\indicator{y^{i}_{j}=b}-\hat{q}_b )(\hat{p}_a - \indicator{y^{i}_r =a})
\end{align*}
Thus we compute 
\begin{align*}
E_{p_F } \norm{\frac{\partial \tilde{\ell}}{\partial \phi_{rj}(a,b)}}^2&=E\left( \norm{\sum_{i=1}^{n} 2(\indicator{y^{i}_{j}=b}-\hat{q}_b )(\hat{p}_a - \indicator{y^{i}_r =a})}^{2} \right)\\
&=4n E_{p_F} \norm{\hat{q}_b - \indicator{y_{j}=b}}^2 E_{p_F} \norm{\hat{p}_a - \indicator{y_{r}=a}}^2\\
&= 4(n-1) q_b (1-q_b ) p_a (1-p_a )
\end{align*}
From this, we see that $E_{p_F } \norm{\frac{\partial \tilde{\ell}}{\partial \phi_{rj}}}^2 = \sum_{a=1}^{L_r} \sum_{b=1}^{L_j} 4(n-1) q_b (1- q_b ) p_{a} (1- p_a )$ and

\noindent$w_{rj}=\sqrt{\sum_{a=1}^{L_r} \sum_{b=1}^{L_j}  q_b (1- q_b ) p_{a} (1- p_a )}$.

\subsection{Model Selection Consistency}
\label{sec:MSC}
One of the difficulties in establishing consistency results for the problem in Equation \eqref{eq:generic_estimator} is due to the non-identifiability of the parameters. $\ell(\Theta)$ is constant with respect to the change of variables $\rho'_{sj} (y_j )=\rho_{sj} (y_j) +c$ and similarly for $\phi$, so we cannot hope to recover $\Theta^\star$. A popular fix for this issue is to drop the last level of $\rho$ and $\phi$, so they are only indicators over $L-1$ levels instead of $L$ levels. This allows for the model to be identifiable, but it results in an asymmetric formulation that treats the last level differently from other levels. Instead, we will maintain the symmetric formulation by introducing constraints. Consider the problem
\begin{equation}
\begin{aligned}
\minimize_{\Theta}\,\ \ell(\Theta) + \lambda \sum_{g \in G} \norm{\Theta_g }_2\\
\text{subject to } C\Theta=0. 
\label{eq:generic_estimator_constrained}
\end{aligned}
\end{equation}
The matrix $C$ constrains the optimization variables such that
\begin{align*}
\sum_{y_j} \rho_{sj}(y_j) = 0 \\
\sum_{y_j } \phi_{rj} ( y_r,y_j ) =0.
\end{align*}
The group regularizer implicitly enforces the same set of constraints, so the optimization problems of Equation \eqref{eq:generic_estimator_constrained} and Equation \eqref{eq:generic_estimator} have the same solutions. For our theoretical results, we will use the constrained formulation of Equation \eqref{eq:generic_estimator_constrained}, since it is identifiable.

We first state some definitions and two assumptions from \cite{lee2013model} that are necessary to present the model selection consistency results. Let $A$ and $I$ represent the active and inactive groups in $\Theta$, so $\Theta^\star _g \neq 0 $ for any $g \in A$ and $\Theta_g ^\star =0$ for any $g \in I$. The sets associated with the active and inactive groups are defined as
\begin{align*}
\mathcal{A} &= \{\Theta\in\mathbb{R}^d : \max_{g\in G}\norm{\Theta_g}_2 \le 1\;\textnormal{and }\norm{\Theta_g}_2 = 0,\,g\in I\} \\
\mathcal{I} &= \{\Theta\in\mathbb{R}^d : \max_{g\in G}\norm{\Theta_g}_2 \le 1\;\textnormal{and }\norm{\Theta_g}_2 = 0,\,g\in A\}.
\end{align*}
Let $M= span(\mathcal{I})^\perp \cap Null(C)$ and $P_M$ be the orthogonal projector onto the subspace $M$. The two assumptions are
\begin{enumerate}
\item Restricted Strong Convexity. We assume that 
\begin{align}
\sup_{v\in M }\frac{v^T \nabla^2 \ell(\Theta) v }{v^T v} \ge m
\label{eq:rsc}
\end{align}
for all $\norm{\Theta-\Theta^\star}_2 \le r$. Since $\nabla^2 \ell(\Theta)$ is lipschitz continuous, the existence of a constant $m$ that satisfies \eqref{eq:rsc} is implied by the pointwise restricted convexity  
$$
\sup_{v\in M }\frac{v^T \nabla^2 \ell(\Theta^\star) v }{v^T v} \ge \tilde{m}.
$$
For convenience, we will use the former.
\item Irrepresentable condition. There exist $\tau\in (0,1)$ such that 
\begin{align}
\sup_{z \in \mathcal{A}}\,V(P_{M^\perp}(\nabla^2 \ell (\Theta^\star)P_M(P_M \nabla^2 \ell (\Theta^\star)P_M)^\dagger P_M z - z)) < 1-\tau,
\label{eq:irrepresentable-condition}
\end{align}
where $V$ is the \emph{infimal convolution} of the gauge $\rho_I$, $\mathcal{I}$ $\rho_I (x) =\inf \{t: t>0, tx \in \mathcal{I}\}$, and $\indicator{Null(C)^\perp}$:
$$
V(z) = \inf_{z=u_1+u_2}\,\{\rho_I(u_1) + \indicator{Null(C)^\perp}(u_2)\}.
$$
\end{enumerate}
Restricted strong convexity  is a standard assumption that ensures the parameter $\Theta$ is uniquely determined by the value of the likelihood function. Without this, there is no hope of accurately estimating $\Theta^\star$. It is only stated over a subspace $M$ which can be much smaller than $\mathbb{R}^d$. The Irrepresentable condition is a more stringent condition. Intuitively, it requires that the active parameter groups not be overly dependent on the inactive parameter groups. Although the exact form of the condition is not enlightening, it is known to be  necessary for model selection consistency in lasso-type problems \citep{zhao2006model,lee2013model} and a common assumption in other works that establish model selection consistency \citep{ravikumar2010,jalali2011,peng2009}.
We also define the constants that appear in the theorem:
\begin{enumerate}
\item Lipschitz constants $L_1$ and $L_2$. Let $\Lambda(\Theta)$ be the log-partition function. $\Lambda(\Theta)$ and $\ell(\Theta)$ are twice continuously differentiable functions, so their gradient and hessian are locally Lipschitz continuous in a ball of radius $r$ around $\Theta^\star$:
\begin{align*}
\norm{\nabla \Lambda(\Theta_1) -\nabla\Lambda(\Theta_2) }_2 \le L_1 \norm{\Theta_1 - \Theta_2}_2,\ \Theta_1, \Theta_2 \in B_r (\Theta^\star)\\
\norm{\nabla^2 \ell(\Theta_1) -\nabla^2\ell(\Theta_2) }_2 \le L_2 \norm{\Theta_1 - \Theta_2}_2,\ \Theta_1, \Theta_2 \in B_r (\Theta^\star)
\end{align*}
\item Let $\bar{\tau}$ satisfy
$$
\sup_{z \in \mathcal{A}\cup \mathcal{I}}\,V(P_{M^\perp}(\nabla^2 \ell (\Theta^\star)P_M(P_M \nabla^2 \ell (\Theta^\star)P_M)^\dagger P_M z - z)) < \bar{\tau}.
$$
$V$ is a continuous function of $z$, so a finite $\bar{\tau}$ exists.
\end{enumerate}

\begin{theorem}
\label{cor:penalized-mle-consistent}
Suppose we are given samples $x^{(1)},\dots,x^{(n)}$ from the mixed model with unknown parameters $\Theta^\star$. If we select 
$$
\lambda = \frac{2\sqrt{256L_1}\bar{\tau}}{\tau}\sqrt{\frac{(\max_{g\in G}|g|)\log|G|}{n}}
$$
and the sample size $n$ is larger than
$$
\max\,\begin{cases} \frac{4096L_1L_2^2\bar{\tau}^2}{m^4\tau^4}\left(2 + \frac{\tau}{\bar{\tau}}\right)^4(\max_{g\in G}|g|)|A|^2\log|G| \\
\frac{2048L_1}{m^2r^2}(2 + \frac{\tau}{\bar{\tau}})^2(\max_{g\in G}|g|)|A|\log|G|,
\end{cases}
$$
then, with probability at least $1-2\big(\max_{g\in G}|g|\big)\exp(-c\lambda^2n)$, the optimal solution to \eqref{eq:generic_estimator} is unique and model selection consistent, 
\begin{enumerate}
\item $\|\hat{\Theta} - \Theta^\star\|_2 \le \frac{4}{m}\left(\frac{\bar{\tau}+1}{2\tau}\right)\sqrt{\frac{256L_1 |A|(\max_{g\in G}|g|)\log|G|}{n}},$
\item $\hat{\Theta}_g = 0,\,g\in I$ and $\hat{\Theta}_g \ne 0\;\text{if }\norm{\Theta^\star_g}_2 > \frac{1}{m}\left(1 + \frac{\tau}{2\bar{\tau}}\right)\sqrt{|A|}\lambda$.
\end{enumerate}

\end{theorem}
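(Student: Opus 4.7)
The plan is to invoke the general M-estimator framework of \cite{lee2013model} for decomposable regularizers subject to linear equality constraints. The optimization problem \eqref{eq:generic_estimator_constrained} fits into this framework: the loss $\ell$ is twice continuously differentiable with locally Lipschitz Hessian (constants $L_1, L_2$), the regularizer $\sum_g \|\Theta_g\|_2$ is a sum of group norms that decomposes across the edge partition $G$, and $C\Theta = 0$ restores identifiability by fixing the sum-to-zero gauge of the $\rho_{sj}$ and $\phi_{rj}$ blocks. The two structural hypotheses required by that framework, restricted strong convexity on $M = \linspan(\mathcal{I})^{\perp}\cap\nullspace(C)$ with constant $m$, and the irrepresentable condition with slack $\tau$, are exactly the standing assumptions. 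Hence the conclusions of the theorem will follow once three ingredients are assembled: a primal-dual witness construction, a restricted-strong-convexity control of the active-set error, and a concentration bound on $\nabla\ell(\Theta^\star)$.

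First I would carry out the primal-dual witness (PDW). Solve a restricted problem in which $\Theta_g$ is forced to zero for $g\in I$ (and $C\Theta=0$ is enforced), obtaining $\tilde\Theta$ supported on $A$. Pick the corresponding dual variable on the active groups as $\tilde z_g = \Theta_g/\|\Theta_g\|_2$, extend $\tilde z$ to the inactive groups using the stationarity equation $\nabla\ell(\tilde\Theta) + \lambda \tilde z + C^T\nu = 0$, and verify strict dual feasibility $\max_{g\in I}\|\tilde z_g\|_2 < 1$. Standard PDW arguments then identify $\tilde\Theta$ with the unique optimum $\hat\Theta$ and yield conclusion~(2). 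The error bound in conclusion~(1) follows by Taylor expanding $\nabla\ell$ around $\Theta^\star$ on $M$, using RSC to invert the restricted Hessian, and bounding the residual by $\lambda\sqrt{|A|}$:
\begin{equation*}
\|\hat\Theta-\Theta^\star\|_2 \;\le\; \frac{2}{m}\bigl(\|P_M\nabla\ell(\Theta^\star)\|_2 + \lambda\sqrt{|A|}\bigr),
\end{equation*}
which, combined with the gradient tail bound below, produces the stated rate. The signal-strength part of~(2) is then an immediate reverse-triangle inequality: if $\|\Theta^\star_g\|_2 > (1+\tau/(2\bar\tau))\sqrt{|A|}\lambda/m$, the estimation error cannot flip $\Theta^\star_g$ to zero.

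The key technical step, and the main obstacle, is controlling the probability that $\max_{g\in G}\|\nabla_g\ell(\Theta^\star)\|_2 \le \lambda\tau/(2\bar\tau)$, because the pseudolikelihood gradient blocks are sums of products of Gaussian with categorical-indicator variables and are therefore sub-exponential rather than sub-Gaussian. I would handle each block via a Bernstein-type inequality, controlling the $\ell_2$ norm of the $|g|$-dimensional gradient block by a chi-square style deviation bound, then take a union bound over the $|G|$ groups; this produces a tail of the form $2(\max_g|g|)\exp(-c\lambda^2 n)$, which is why the sample-size requirement scales as $(\max_g|g|)|A|^2\log|G|$. Two technical subtleties deserve care: (i) the higher-order remainder from the Taylor expansion is bounded by $L_2\|\hat\Theta-\Theta^\star\|_2^2$, and the sample-size condition with $L_2^2/m^4\tau^4$ is precisely what ensures this remainder consumes less than $\tau/(2\bar\tau)$ of the irrepresentable-condition slack when checking strict dual feasibility; (ii) the contraction radius $r$ for RSC is enforced by the second branch of the sample-size condition, guaranteeing the PDW iterate stays inside $B_r(\Theta^\star)$. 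Once these are in place, the three bullet constants in the theorem statement drop out cleanly from the framework of \cite{lee2013model}.
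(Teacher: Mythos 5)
Your proposal takes essentially the same route as the paper: the paper states this theorem as a direct instantiation of the general M-estimator framework of \cite{lee2013model} applied to the constrained problem \eqref{eq:generic_estimator_constrained}, with the restricted strong convexity, irrepresentable, and Lipschitz assumptions playing exactly the roles you assign them, and it offers no further proof beyond that citation. Your sketch of the primal-dual witness, the RSC-based error bound, and the Bernstein/union-bound control of $\max_{g}\|\nabla_g\ell(\Theta^\star)\|_2$ is a faithful (indeed more detailed) account of what that framework does, so the proposal is correct and consistent with the paper.
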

\begin{remark}
The same theorem applies to both the maximum likelihood and pseudolikelihood estimators. For the maximum likelihood, the constants can be tightened; everywhere $L_1$ appears can be replaced by $L_1 /128$ and the theorem remains true. However, the values of $\tau, \bar{\tau}, m, L_1,L_2$ are different for the two methods. For the maximum likelihood, the gradient of the log-partition $\nabla\Lambda(\Theta)$ and hessian of the log-likelihood $\nabla^2 \ell(\Theta)$ do not depend on the samples. Thus the constants $\tau, \bar{\tau}, m, L_1,L_2$ are completely determined by $\Theta^\star$ and the likelihood. For the pseudolikelihood, the values of $\tau, \bar{\tau}, m,L_2$ depend on the samples, and the theorem only applies if the assumptions are made on sample quantities; thus, the theorem is less useful in practice when applied to the pseudolikelihood.  This is similar to the situation in \cite{yang2013graphical}, where assumptions are made on sample quantities. 
\end{remark}

    \appendix
\bibliographystyle{apalike}
\bibliography{HowLongLasso,struct_graphmodel}


\end{document}